\newcolumntype{x}[1]{>{\centering\let\newline\\\arraybackslash\hspace{0pt}}m{#1}}
\definecolor{DarkBlue}{rgb}{0.1,0.1,0.5}
\definecolor{DarkGreen}{rgb}{0.1,0.5,0.1}
\newtheoremstyle{thmstyle}
{0.5em} 
{0.15em} 
{} 
{} 
{\bfseries} 
{.} 
{.5em} 
{} 
\theoremstyle{thmstyle} 
\newtheorem{thm}{Theorem}
\newtheorem{propn}{Proposition}[section]
\newtheorem*{claim*}{Claim}
\newtheorem{appendixpropn}{Proposition}[subsection]
\theoremstyle{definition}
\theoremstyle{remark}
\newtheorem{rem}{Remark}
\newcommand{\comment}[1]{}
\newcommand{\E}{\mathbb{E}}
\newcommand{\F}{\mathcal{F}}
\newcommand{\h}{\mathring{h}}
\renewcommand{\H}{\mathcal{H}}
\newcommand{\M}{\mathcal{M}}
\newcommand{\Mdot}{\mathring{\mathcal{M}}}
\renewcommand{\P}{\mathcal{P}}
\newcommand{\prob}{\mathbb{P}}
\newcommand{\R}{\mathbb{R}}
\renewcommand{\S}{\mathcal{S}}
\newcommand{\T}{\tau}
\newcommand{\U}{\mathcal{U}}
\newcommand{\vepsilondot}{\mathring{\varepsilon}}
\newcommand{\SIM}{\texttt{SIM-Algorithm}}
\newcommand{\0}{\underline{\mathbf{0}}}
\newcommand{\sltoinf}{\sum\limits_{i=0}^\infty}
\newcommand{\slktoinf}{\sum\limits_{k=0}^\infty}
\newcommand{\slsins}{\sum\limits_{s\in\S}}
\newcommand{\sltotau}{\sum\limits_{i=1}^\tau}
\DeclareMathOperator*{\argmin}{arg\,min}
\DeclareMathOperator*{\expectation}{\,\mathbb{E}\,}
\newcommand{\tr}{\top}
\title{Scale Invariant Monte Carlo under Linear Function Approximation with Curvature based Step-size}
\author{%
	Rahul Madhavan,\\
	Department of Computer Science and Automation\\
	Indian Institute of Science\\
	Bangalore, India\\
	\texttt{mrahul@iisc.ac.in} \\
	\And
	Hemanta Makwana,\\
	Department of Computer Science and Automation\\
	Indian Institute of Science\\
	Bangalore, India\\
	\texttt{hemantam@alum.iisc.ac.in} \\
}
\begin{document}

	\maketitle
	
	\begin{abstract}
		We study the feature-scaled version of the Monte Carlo algorithm with linear function approximation. This algorithm converges to a scale-invariant solution, which is not unduly affected by states having feature vectors with large norms. 
		The usual versions of the MCMC algorithm, obtained by minimizing the least-squares criterion, do not produce solutions that give equal importance to all states irrespective of feature-vector norm -- a requirement that may be critical in many reinforcement learning contexts.
		To speed up convergence in our algorithm, we introduce an adaptive step-size based on the curvature of the iterate convergence path -- a novelty that may be useful in more general optimization contexts as well. A key contribution of this paper is to prove convergence, in the presence of adaptive curvature based step-size and heavy-ball momentum. We provide rigorous theoretical guarantees and use simulations to demonstrate the efficacy of our ideas.
	\end{abstract}
	
	\vspace{-0.05in}	
	\section{INTRODUCTION}
	\label{section:intro to paper}
	
	\vspace{-0.05in}
	Feature scaling and data normalization is a common practice in machine learning and has been shown to be effective in a variety of areas such as deep learning \citep{BishopPage298,SolaInputNormalization}, nearest neighbour classifiers \citep{LiZhangLiFeatureScalinginNearestNeighbour,SinghNormalizationClassifiers}, SVMs \citep{FeatureScalingSVMs}, PCA \citep{ISLCasellaBerger} and data mining \citep{DataMiningHan}. Their  main utility is when the norm of the input vector is not a true reflection of its importance  \cite{BishopPage298}. Normalization is also known to often help increase the speed of learning  \cite{BaHinton2016} as well as reduce the dependence on outliers \citep{BenGal2005Outlier,Botchkarev_2019}.
	
	\vspace{-0.05in}
	Finding the optimal policy in Markov Decision Processes (MDPs) remains the central goal of reinforcement learning. 
	In the context of optimal control, the value-iteration \citep{Bellman1957} and policy iteration algorithms \citep{howard1960dynamic} have remained the cornerstones of dynamic programming (DP) methods to solve this problem. When one doesn't know the model (transition probabilities) in the MDP explicitly, algorithms like Monte Carlo, TD(0) Learning, TD($\lambda$) Learning and Q-Learning, or their variations are often used \citep{suttonBarto}. 
	
	For a large class of problems, the state space becomes large enough that explicitly maintaining the values associated with each state becomes infeasible \cite{csaba}. In such cases, one uses approximation techniques to model the values associated with states. Two such approximation techniques that are often used are linear function approximation and neural network approximation.
	
	To rigorously prove that the above methods work as expected, one needs to provide theoretical guarantees of their convergence. In the tabular setting (with no function approximation), several theoretical results provide such guarantees (for example, see \citet{dayan1992convergence,DayanTDlambdaConvergence,tsitsiklis2002convergence}). In the context of linear function approximation, stochastic approximation techniques and ODE methods such as those listed in \citet{convergenceStochasticApproxLennart,borkar2000ode,Borkar,Kushner1997} are often used to provide such guarantees. Convergence guarantees under the linear function approximation regime have been explored in works by \citet{tsitsiklis1996analysis,korda2015td,bertsekas2004improved, konda1999actor,perkins2002convergent,bertsekas2011approximate}.

	In the linear function approximation setup, the value assigned to any state is approximated by a linear function of the feature vector associated with the state. For instance, in an $m$-state MDP, if the feature vector associated with state $i,\enspace i\in [m]$\footnote{We use $[m]$ to indicate $\{1,\dots,m\}$} is $\phi_i$, then for some weight vector $w$, the value $V_i$ associated with the state $i$ is approximated by $\phi_i^\tr w$. 
	If $\Phi$ is the matrix with rows as the feature vectors, and V is the vector of values associated with the states, i.e $\Phi = \left\{\phi_i\right\}_{i\in[m]}$, and $V = \left\{V_i\right\}_{i\in[m]}$, then we are approximating $V$ by $\Phi^\tr w$.
	Using the least-squares criterion to find $w$ leads us to $w = \argmin_{w'} ||\Phi^\tr w' - V ||_2^2$. More generally, if we assign weights $d_i$ to each state $i$ such that $\sum_{i\in[m]} d_i=1$, then, the least squares (LS) criterion gives a weight $w = \argmin_{w'} \sum_{i\in[m]} d_i (\phi_i^\tr w' - V_i)^2$. The major reinforcement learning algorithms using linear function approximation (listed previously) obtain weight vectors that conform to this criterion.
	
	\textbf{Illustrative example for issues with LS:} The least squares method provides solutions which are more skewed towards feature vectors with larger norm. \label{example: toy example} We illustrate this with a toy-example as follows. Consider a two-state system with features $\phi_1,\phi_2\in\R$. Say the values associated with these two states are $V_1,V_2$. Let $\phi_1=1,\phi_2=2$ and $V_1=2,V_2=1$. 
	Then we want some $w\in\R$ such that $w\simeq2$ and $2w\simeq1$. One may expect the answer to be the mean of $2$ and $\frac{1}{2}$, i.e.  $w=\frac{5}{4}$, but the least squares solution for this system is $w=\frac{4}{5}$. The least squares solution is dominated by the second feature vector, viz $\phi_2 = 2$, thus gives a solution that approximates the second linear equation better. The issue highlighted by this example is exacerbated when states that have features that are outliers.
	
	
	To address this issue, in this paper, we propose a solution calculated as per the alternative criterion: 
	$w = \argmin_{w'}  \sum_{i = 1}^m d_i (\phi_i^\tr w' - V_i)^2/\|\phi_i\|_2^2$
	which is the minimizer of the weighted sum of squares of distances from $w'$ to the hyperplanes $\phi_i^\tr w = V_i,$ $i \in[m]$. This criterion has the following two advantages over least squares. Firstly, the solution under our criterion is scale invariant, i.e., irrespective of the norm (scale) of the feature vectors, the solution gives importance to states proportional to the chosen $d_i$ values.
	Secondly, the solution is more robust to outlier rows. Unlike in the least-squares solution, large $\phi_i$ values which may be outliers will not unduly affect the solution.
	Further, the solution remains unchanged even if the individual equations are re-scaled.
	
	While scaling of feature vectors is often used in practice in a variety of machine learning as well as reinforcement learning contexts \citep{ioffe2015batch,santurkar2018does,huang2020normalization,bhatt2019crossnorm}, the current work contributes to the theory relating to  feature-scaling in the context of RL algorithms. We provide convergence guarantees in the presence of momentum as well as an adaptive step size method.
	
	We now present related work that the current paper builds upon. These broadly touch upon three aspects -- adaptive step size, convergence under momentum and feature-normalization and scaling.
	
	\vspace{-0.05in}
	\subsection{Additional Related Work}
	
	\vspace{-0.05in}	
	Adaptive step sizes have been explored classically by  \citet{adaptiveStepSizeRandomSearch1968,ang2001new,kushner1994analysis} amongst others. In the context of reinforcement learning, adaptive step sizes have been explored in the context of policy gradient \citep{pirotta2013adaptive}, and temporal difference learning \citep{dabney2012adaptive}. 
	
	In optimization literature, several stochastic gradient descent (sgd) based algorithms use some form of adaptive step size \cite{ruder2016overview}. Many like Adagrad \citep{duchi2011adaptive} and Adadelta \citep{zeiler2012adadelta} modify the step size. Others like Adam \citep{AdamKingma} also add additional momentum terms to speed up convergence. A recent work also adapts the Polyak step-sizes to be stochastically updated \citep{loizou2021stochastic}.
	Convergence of some of these methods in the presence of momentum have been studied recently in works by \citet{reddiAdamConvergence,defossez2020convergence,mai2020convergence,chen2018on,yang2016unified}
	
	In other threads of work, normalization and feature scaling have been studied to good effect in the non-convex landscape of neural networks. For instance, layer normalization \citep{ba2016layer} and batch normalization \citep{ioffe2015batch} have been used to ``normalize'' activations in intermediate layers of neural networks. Group normalization \citep{wu2018group}, self-normalization \citep{klambauer2017self}, weight normalization \citep{salimans2016weight} and other variants have also been considered. Some works propose that these techniques make the optimization landscape smoother \citep{santurkar2019howbatchnormhelps}, and other works propose that they help reduce covariate shift \citep{ioffe2015batch}. We note that these normalization techniques rescale the inputs based on statistics per set of inputs, rather than a re-scaling of each input to have norm 1. 
	
	Normalization and feature scaling are less often used in linear settings
	-- possibly because the drawbacks of using inputs that are not feature-scaled are not apparent. As highlighted in our \hyperref[example: toy example]{toy example}, using features without scaling in methods like minimization of least squares, can lead to solutions that are more skewed towards data where the feature-norms are higher. 
	
	Our work focuses on this problem of feature scaling in linear settings where we provide convergence guarantees in the presence of adaptive step size and momentum.
	
	\vspace{-0.05in}
	\subsection{Our Contributions}
	\label{Section: Our Contributions}
	
	\vspace{-0.05in}
	We formulate and study the convergence of scale-invariant RL algorithms with linear function approximation in the presence of momentum and adaptive step size. Our algorithm uses a variant of the stochastic Kaczmarz method \citep{strohmerVershynin} to seek a scale-invariant solution. Note that the original method solves overdetermined $\Phi w=V$ systems that are consistent, and requires access to exact value ($V$) estimates. We provide a convergence guarantee even with only noisy samples of the value function. This is crucial in RL applications, where we get access to some noisy estimate of the value either by a one step temporal-difference (TD) or by summing rewards (Monte Carlo).
	
	In RL systems, every state might be equally important irrespective of the feature vector norms. Our algorithms converge to a solution that satisfies this property of not being unduly influenced by outliers, or states with high feature-vector norms. Hence, we call our algorithm scale-invariant --- as the scale of the input features does not matter to the output solution.
	
	We now outline the basic (linear) framework under which our Algorithms operate.
	
	\vspace{-0.05in}
	\subsubsection{The Update Rule}
	\label{section: The Update Rule}
	
	\vspace{-0.05in}	
	Consider any overdetermined linear system  $\Phi w = V,$ consisting of m rows of the form $\phi_i^\tr w = V_i,\enspace \phi_i,w\in\R^n$.
	Let $D$ be a diagonal weight matrix with entries $d_1, \ldots, d_m.$ 
	If we wish to solve 
	\begin{equation}
		\label{eqn: wstar in TP}
		w^* = \min_w \sum_{i = 1}^m d_i (\phi_i w - V_i)^2/\|\phi_i\|^2
	\end{equation} Then the stochastic update (with say $\T$ samples) takes the form
	\begin{equation}
		w_{k + 1} = w_k - \alpha_k \frac{1}{\T}\sum\limits_{i=1}^\T \frac{\phi_i^\tr w_k - V_i}{\|\phi_i\|^2} \phi_i
	\end{equation}
	where the rows are sampled with probability $d_i$ and $\alpha_k$ is some step-size sequence. 
	Since each expression of the form $(\phi_i^\tr w_k - V_i)\phi_i/(\|\phi_i\|^2)$ is a projection from $w_k$ onto the hyperplane $\phi_i^\tr w = V_i$, we call the update map from $w_k$ to $w_{k+1}$ for all iterations $k$ as Total Projections (TP) map. In general, such a map changes per iteration as a different rows of the form $\phi_i^\tr w = V_i$ are chosen. Depending on such a choice, the map at step $k$ may be called $TP_k:\R^n \to \R^n$. 
	In other words, $w_{k+1}=TP_k(w_k)$. This is a Kaczmarz based algorithm \citep{kaczmarz_1937} which converges to a consistent solution in the presence of no noise. 
	
	
	For our full update rule, we need to add a momentum and our choice of step size. For the momentum part we use heavyball momentum with constant $\beta$ (reasons in \hyperref[section: Choice of Momentum in Total Projections]{Section \ref{section: Choice of Momentum in Total Projections}}). For our step size, we use an osculating circle based step choice, which we call the curvature step (details in \hyperref[section:StepSizeSeq]{Section \ref{section:StepSizeSeq}}). We provide evidence that the step size works in \hyperref[section: adaptive step size]{the section \ref{section: adaptive step size}}. With these in place, we now describe our full update rule. 
	
	Let $TP_k(w)=\frac{1}{\T}\sum_{i = 1}^\T (\phi_i w - V_i)\phi_i/\|\phi_i\|^2$,
	and $\Delta TP_k(w_k)$ = $TP_k([w_k - TP_k(w_k)]) - TP_k(w_k)$. Here the stochastic gradient update on $w_k$ with respect to our error term is given by $TP_k(w_k)$ and $\Delta TP_k(w_k)$ indicates the change in gradient. Our update rule is then given by:
	
	\vspace{-0.05in}
	\begin{equation}\label{eqn:full TP update equation}
		w_{k+1} = w_k - \eta_k\dfrac{||TP_k(w_k)||}{||\Delta TP_k(w_k)||} TP_k(w_k) + \beta(w_{k}- w_{k-1})
	\end{equation}
	where $\beta \in (0,1)$ and $\eta_k = 1/k^p;\quad p\in(0.5,1]$. We now provide an intuition for each of the terms.
	
	The second term in \hyperref[eqn:full TP update equation]{Equation \ref{eqn:full TP update equation}} indicates the gradient update. Note that $\eta_k$ is a decreasing step-size sequence. Typically, one might use some sequence $\{\eta_k\}$ such that $\sum_{k=1}^\infty \eta_k = \infty$ and $\sum_{k=1}^\infty \eta_k^2 < \infty$. Such a requirement is satisfied by $\eta_k = 1/k^p$ where $p\in(0.5,1]$ \citep{robbins1951stochastic,blum54Convergence}. Recall that $TP_k$ is a map that gives the projections over the sampled hyperplanes $\phi_i w = V_i$. Therefore $TP_k(w_k)$ is an update in the direction of the required (total) projection from $w_k$ onto the hyperplanes chosen, i.e. a gradient descent update from $w_k$ towards $w^*$ in our chosen error metric.
	
	The updates to $w_k$ at discrete time steps may be assumed to be a noisy discretization to a continuous curve $\omega(t)$ at some time $t$ such that $\omega(t) = w_k$. Then $\omega'(t)$ -- the tangent to the curve $\omega(t)$ -- may be approximated by the update $TP_k(w_k)$. 
	The unit tangent to the curve $\omega(t)$ is given by $TP_k(w_k)/||TP_k(w_k)||$. Further, $\omega''(t)$ is approximated by the update $\Delta TP_k(w_k)$.
	But the radius of curvature $\kappa = ||dT/d \omega||$ where $T(t)$ is the unit tangent at time $t$, and $\omega(t)$ is the parameterized curve \citep{kuhnel2015differential}. 
	
	Then by the Chain Rule, $\kappa = ||dT/dt||/||d\omega /dt|| = ||dT/dt||/||\omega'(t)|| = ||\omega''(t)||/||\omega'(t)||^2$ \citep{Tapp2016}.
	Then we find the approximation $\kappa \simeq ||\Delta TP_k(w_k)||/|| TP_k(w_k)||^2$ for the discrete setting and the radius of curvature $R = 1/\kappa = || TP_k(w_k)||^2/||\Delta TP_k(w_k)||$. Then the update rule becomes $R\cdot$ \texttt{Unit gradient vector} = $R\cdot TP_k(w_k)/|| TP_k(w_k)|| = || TP_k(w_k)||/||\Delta TP_k(w_k)||\cdot  TP_k(w_k)$.
	
	
	The third term in \hyperref[eqn:full TP update equation]{Equation \ref{eqn:full TP update equation}} is a heavy-ball momentum term, where we add some constant ($\beta$) times the previous updates. This momentum term, is less useful in the context where we have no noise, but can be useful in the case of noisy updates \citep{gitman2019understanding,sutskever2013importance,polyak1964some}.
	
	In light of the multiple expressions in the update rule given by Equation \ref{eqn:full TP update equation}, 
	showing convergence is not straightforward. We use the theory of stochastic approximation to establish almost sure (a.s.) convergence for the algorithms we propose. This is a key technical contribution of this work.
	
	\vspace{-0.15in}
	\subsubsection{Evidence for Adaptive Step Size}
	\label{section: adaptive step size}
	
	\vspace{-0.05in}
	As outlined in the previous section, the adaptive step size that we choose is derived from the radius of curvature of the continuous curve that approximates our discrete updates in $w_k$. Such a step size sequence, performs quite well in simulations as outlined below. Note that in this simulation, the updates are not noisy. Even allowing for this, the exponential convergence was surprising.
	
	\begin{figure}
		\centering
		\begin{subfigure}[t]{0.49\textwidth}
			\centerline{\includegraphics[scale=1]{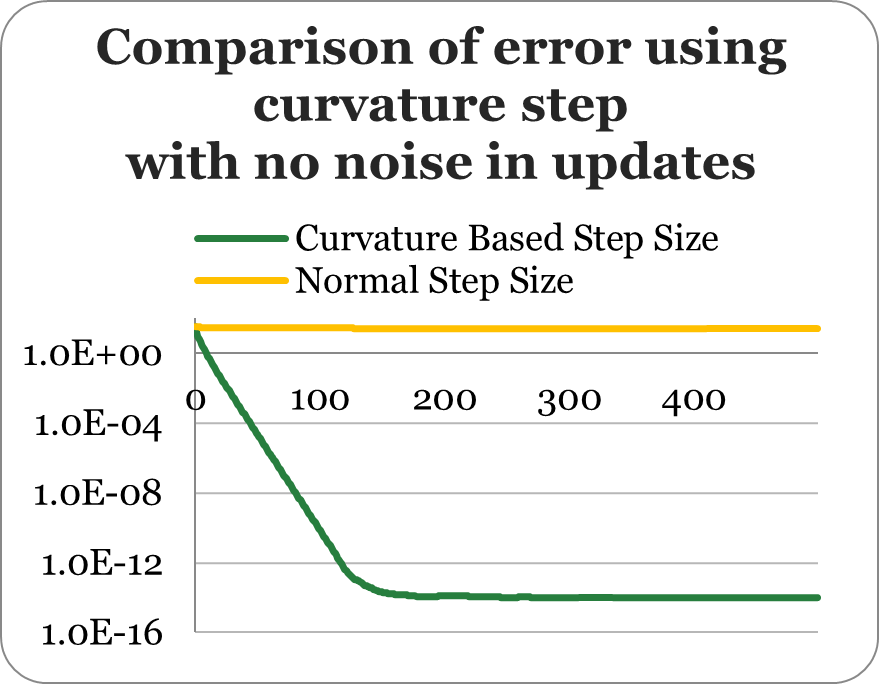}}
			\caption{Error with number of iterations, with curvature step size and normal step size for $m=100$, $n=30$.
			}
			\label{fig:Evidence for Curvature based Step size}
		\end{subfigure}
		\hfill
		\begin{subfigure}[t]{0.49\textwidth}
			\centerline{\includegraphics[scale=1]{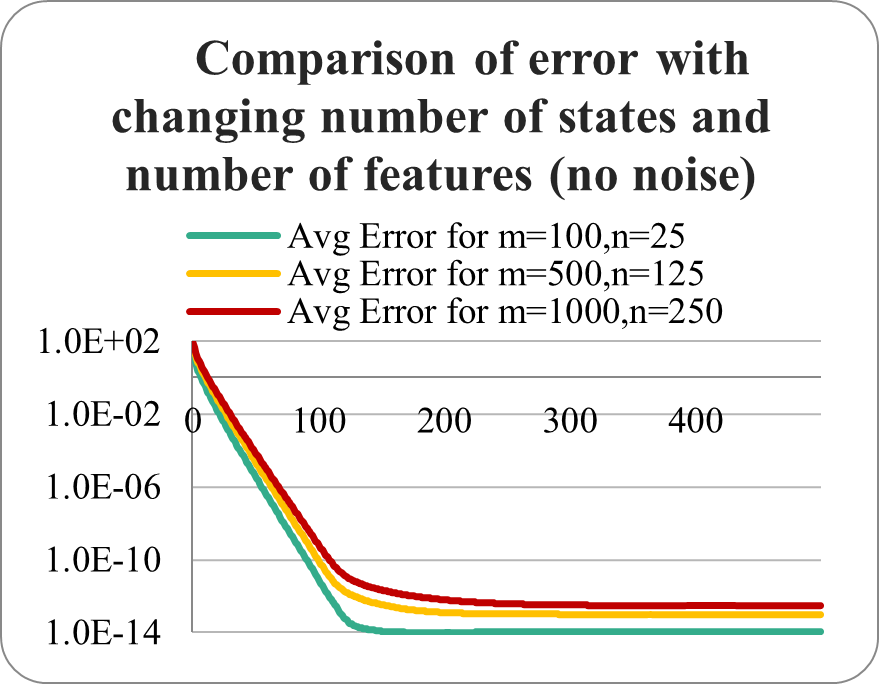}}
			\caption{Error with iterations for (i) $m=100$, $n=30$, (ii) $m=500$, $n=125$, (iii) $m=1000$, $n=250$ 
			}
			\label{fig:Error with changing size of linear system}
		\end{subfigure}
		\caption{Plot for error -- measured as a distance to $w^*$ -- with number of iterations. Note that the convergence rates are much faster using a curvature based step size (figure \ref{fig:Evidence for Curvature based Step size}). The rates of convergence using curvature-step don't change much with changing size of linear system (figure \ref{fig:Error with changing size of linear system}).}
	\end{figure}
	
	\comment{
		\begin{figure}[!htb]
			\vspace{-0.1in}
			\begin{center}
				\centerline{\includegraphics[scale=0.64]{images/NoNoise2.png}}
				\caption{Plot for error -- measured as a distance to $w^*$ -- with number of iterations, with curvature step size and normal step size. The updates are not noisy. We use $m=100$ and $n=30$. Note that the convergence rates are much faster using a curvature based step size.
				}
				\label{fig:Evidence for Curvature based Step size}
			\end{center}
			\vspace{-0.15in}
		\end{figure}
	}
	

	In \hyperref[fig:Evidence for Curvature based Step size]{figure \ref{fig:Evidence for Curvature based Step size}}, we plot the errors (as measured by distance from the error minimizer $w^*$ for the modified error function as given in equation \ref{eqn: wstar in TP}) with number of iterations for total projections with curvature step algorithm. The number of states $m=100$ and number of features $n=30$. We note the exponential convergence and that the error decreases monotonically on a log-scale. This shows that with the increased curvature-step size, we still have a contraction on the error function.

	\subsubsection{The RL Context}
	Using \hyperref[eqn:full TP update equation]{Equation \ref{eqn:full TP update equation}}, we propose an algorithm Scale Invariant Monte-carlo (\SIM) with curvature step. In the \SIM, the role of $\Phi$ is played by the feature vectors for the states. The value vector for the states $V_i$ is estimated by the First-visit monte carlo where we sum the rewards from state $i$ until termination. Thus, $V_i = \sum_{t=1}^\tau \gamma^{t-1} R_t$ where the state of the Markov Chain at time $t=0$, $s(0) = i$.
	We note that the sampling of states in the Markov Chain happens as per the stationary distribution of the transition matrix (asymptotically). Thus $\prob\{s(t) = i\enspace \forall t > T_0\} = d_i$ for some large $T_0$.

	\vspace{-0.1in}
	\section{NOTATION AND PRELIMINARIES}
	\label{section: notation and preliminaries}
	
	\vspace{-0.05in}	
	Let us consider an RL setting with state space $\S$, where $|\S| = m$. Let the states be labeled $\{1 \dots m\}$. Consider an Markov Decision Process (MDP) given by $\mathbb{M} = (\S,\mathbb{A},\mathbb{P},R)$ \cite{csaba} and a discount factor $\gamma$. 
	Consider a deterministic stationary policy  $\mu: \S \to \mathbb{A}$. This induces a transition matrix $\P\in  \R^{m\times m}$. $\P$ gives a probability distribution over next states for each given state. The probability of transition from states $s$ to $s'$ ($s,s'\in\S$) is given by $\P_{ss'}$. Given s, the vector of transition probabilities over all $s'\in\S$ is given by $\P_s$. 
	We will assume full mixing and ergodicity. Then let $\pi \in \R^m$ be the stationary distribution associated with $\P$, and $D \in \R^{m\times m}$ be the diagonal matrix associated with vector $\pi$. 
	
	Let $R_{ss'}$ indicates the reward on transition between state $s$ and $s' \enspace (s,s')\in\S$.
	Let $\phi_s \in \R^n$ be the set of features associated with each state and $\Phi$ be the corresponding matrix of all features.
	In the value estimation problem, we want to find the value $V \in \R^m$, under a policy $\mu$, for each state. Then, for each state $s$ we have \cite{csaba} that $V_s = \E [\sum_{t=0}^\T \gamma^t R_{t+1} | S_0 = s ]\label{eqn: value as sum of rewards}$. Under the linear function approximation, we estimate $V$ as $\Phi w$, where $w\in\R^n$ denotes the feature weights. 
	We denote the error function for the iterate in the \texttt{SIM} Algorithm as $G(\cdot)$.
	
	Let the weight to which the regular Monte Carlo algorithm converges be called $\widetilde w$ and the best approximation to the value vector $V$ be $\Phi \widetilde w = \widetilde V$. Note that $\widetilde w = \min_{w} \sum_{i=1}^m d_i || \phi_i w - V_i||^2$. Similarly, let the weight vector to which we want \texttt{SIM} Algorithm to converge be $w^*$. Then $w^* = \min_{w} \sum_{i=1}^m d_i || \phi_i w - V_i||^2/||\phi_i||^2$. Let $V^* = \Phi w^*$ be our approximation of the value vector $V$.
	
	We denote the length of episode in Monte Carlo as $T$ with number of unique states seen as $\tau$
	
	
	\section{MAIN ALGORITHM AND ITS ANALYSIS}
	We outline our Total Projections (TP) method as a general method to find the scale invariant solution to an overdetermined system, through repeated projections. 
	Our main method is given in \hyperref[algo:TPAlgoforMonteCarloMain]{algorithm \ref{algo:TPAlgoforMonteCarloMain}}, where we run through a trajectory sampled from the stationary distribution. This method calls as a subroutine \hyperref[algo:TPAlgoforMonteCarloSubroutine]{algorithm \ref{algo:TPAlgoforMonteCarloSubroutine}}, for a one step stochastic weight update.
	This method is inspired by Randomized Kaczmarz (our main modifications are highlighted in \hyperref[appendix:Differences to traditional Kaczmarz Algorithm]{appendix \ref{appendix:Differences to traditional Kaczmarz Algorithm}}).
	We speed up the algorithm through a novel step size method \hyperref[section:StepSizeSeq]{(section \ref{section:StepSizeSeq})} and momentum \hyperref[section: Choice of Momentum in Total Projections]{(section \ref{section: Choice of Momentum in Total Projections})}.
	
	\begin{figure}[!t]
		\begin{minipage}[t]{1\textwidth}
			\begin{minipage}[t]{0.48\textwidth}
				\begin{algorithm}[H]
					\small
					\caption{\texttt{SIM} Algorithm for First-visit MC with curvature-step}
					\label{algo:TPAlgoforMonteCarloMain}	
					\begin{algorithmic}[1] 
						\State {\bfseries Input:} $\Phi$, max Iterations
						\State {\bfseries Output:} $w^*$ - estimated ideal output weights
						\State Initialize weight vector $w_0$.
						\While{$||w_{k}-w_{k-1}|| > \varepsilon$} 
						\Statex \qquad $\triangleright$ where $\varepsilon$ is some small constant
						\State Value Function Estimate $\widetilde V_k = \Phi w_k$
						\State Let policy $\mu_k$ be $\epsilon$-greedy with respect to $\widetilde V_k$ \Statex \quad\quad $\triangleright$ here $\epsilon$ decays to 0.
						\State Get Trajectory as per policy $\mu_k$:
						\Statex \quad\quad $\triangleright$ Trajectory: $S_0,R_1,S_1,\dots,S_{T-1},R_T$
						\State $w_{k+1}\gets$  
						\Statex \qquad\texttt{TP}{($w_k,w_{k-1}$, Trajectory, $\Phi$)}
						\State $k\gets k+1$
						\EndWhile
						\vspace{0.05in}
						\State $w^* \gets w_k$, $V^*\gets \Phi w^*$
						\State $\mu \gets$ greedy policy with respect to $V^*$
						\vspace{0.05in}
						\State \textbf{return} $w^*,V^*,\mu^*$
						
					\end{algorithmic}
				\end{algorithm}
			\end{minipage}
			\hfill
			\begin{minipage}[t]{0.49\textwidth}
				\begin{algorithm}[H]
					\small
					\caption{\texttt{TP:} Total Projection Subroutine}
					\label{algo:TPAlgoforMonteCarloSubroutine}
					\begin{algorithmic}[1] 
						\State {\bfseries Input:} $w_k,w_{k-1}$, Trajectory, $\Phi$
						\State {\bfseries Output:} $w_{k+1}$
						\State Initialize $\widetilde{V}$, A, unique states counter $\tau$ to 0 
						\Statex  $\triangleright$ A indicates the discounted sum of rewards array
						\For {$t=$ last step of trajectory to first}
						\State $A(t) \gets R_{t} + \gamma \cdot A(t+1)$
						\EndFor
						\For {$t=$ first step of trajectory to last}
						\State \textbf{if} state $s_t$ seen for first time \textbf{then}:
						\State \quad $\widetilde{V}_\tau \gets A(t)$;\enspace $\phi_\tau \gets \Phi(s_t)$;\enspace$\T \gets \T +1$
						\EndFor
						
						\State $\eta\gets 1/k$ \qquad $\alpha \gets $ curvature step size
						\State $\beta \gets $ momentum multiplier \vspace{0.1cm}
						\State $\U_1 \gets \frac{1}{\T} \sum\limits_{i=0}^{\T-1} \dfrac{\phi_i^\tr w_k - \widetilde{V}_i}{||\phi_i||_2^2} \phi_i$ \vspace{0.1cm}
						\State $\U_2 \gets w_k - w_{k-1}$  \vspace{0.12cm}
						\State $w_{k+1} \gets w_k - \eta \alpha\U_1 + \beta \U_2$ 
						\State \textbf{return $w_{k+1}$}
						
					\end{algorithmic}
				\end{algorithm}

			\end{minipage}
		\end{minipage}
	\end{figure}
	
	The algorithm follows the same design of the regular Monte Carlo Algorithm for reinforcement learning in the outer loop \citep{suttonBarto}. This is indicated in Algorithm \ref{algo:TPAlgoforMonteCarloMain}. Here we run a trajectory as per an $\epsilon$-greedy policy with respect to the calculated weight vector $w_k$. We set the $\epsilon$ to be some sequence that decays to $0$. Asymptotically, this algorithm is greedy with respect to the approximated Value vectors $V_k = \Phi w_k$. In other words, at every state, it chooses the action that maximizes the one step reward plus the value at the next state.
	
	The above Algorithm runs the improved Algorithm \ref{algo:TPAlgoforMonteCarloSubroutine}, \texttt{TP} subroutine, which incorporate our major ideas. As noted in the discussion in Section \ref{Section: Our Contributions}, we use heavy-ball momentum and also use curvature-step with a decreasing multiplier $\eta = 1/k$.
	
	\begin{rem}
		Our main improvements are in the inner subroutine, Algorithm \ref{algo:TPAlgoforMonteCarloSubroutine}, of the $\SIM$. We envisage that this sub-routine can be utilized in other reinforcement learning algorithms under linear function approximation. The requirement is an ability to approximate the value function at each state, which in the case of Monte Carlo is the discounted sum of rewards from any state to the terminal state in the trajectory.
		
	\end{rem}
	
	\subsection{Analysis of Convergence}
	
	\begin{thm}
		\label{thm: monte carlo convergence}
		The stochastic approximation algorithm 
		\begin{equation}			
			w_{k+1} = w_k - \eta_k\dfrac{||TP_k(w_k)||}{||\Delta TP_k(w_k)||} TP_k(w_k) + \beta(w_{k}- w_{k-1})
		\end{equation}
		converges a.s. to 
		\begin{equation}
			w^*	:= \left[(\Phi^\tr N D N \Phi)^{-1}\Phi^\tr N D N \right] V
		\end{equation}
		where 
		N is diagonal with $N_{(i,i)}$=$ \frac{1}{||\phi(i)||_2}$, $\beta \in (0,1)$, $\eta_k = 1/ k^p;\quad p\in(0.5,1]$
	\end{thm}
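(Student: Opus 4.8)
The plan is to cast the recursion as a stochastic approximation scheme driven, in conditional mean, by the gradient of the scale-invariant criterion $G(w)=\sum_s \pi_s(\phi_s^\tr w - V_s)^2/\norm{\phi_s}^2$, and then to invoke a Lyapunov / ODE argument. Writing $A:=\Phi^\tr N D N \Phi$ and $b:=\Phi^\tr N D N V$, I would first note that $A=\sum_s \pi_s\,\phi_s\phi_s^\tr/\norm{\phi_s}^2$ is symmetric and positive definite when $\Phi$ has full column rank, and that $A\,w^M=b$, so $w^M=A^{-1}b$ is precisely the unique minimizer of $G$. The key distributional fact is that a trajectory generated under $\mu$ visits states with frequencies governed by the stationary distribution $\pi$, and that the first-visit return satisfies $\E[\widetilde V_i\mid S_0=s_i]=V_{s_i}$; hence, conditioned on the past $\F_k$, $\E[TP_k(w_k)\mid\F_k]=A w_k-b=A(w_k-w^M)$. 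This isolates $-A(w-w^M)$ as the underlying mean field, a globally exponentially stable linear system with equilibrium $w^M$ and Lyapunov function $\norm{w-w^M}^2$.

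Next I would dispatch the adaptive curvature step. Because each sampled map is affine, $TP_k(w)=A_k w-b_k$ with empirical $A_k=\sum_i\phi_i\phi_i^\tr/\norm{\phi_i}^2$, substitution yields $\Delta TP_k(w_k)=TP_k(w_k-TP_k(w_k))-TP_k(w_k)=-A_k TP_k(w_k)$, so the curvature scalar is exactly $\norm{TP_k(w_k)}/\norm{A_k TP_k(w_k)}$. A Rayleigh-quotient bound then sandwiches this scalar between $1/\lambda_{\max}(A_k)$ and $1/\lambda_{\min}(A_k)$, hence between fixed positive constants once $\lambda_{\min}(A_k)$ is kept away from $0$ (e.g.\ by restricting to batches with $A_k$ nonsingular). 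The effective step $\alpha_k:=\eta_k\,\norm{TP_k(w_k)}/\norm{\Delta TP_k(w_k)}$ therefore inherits $\sum_k\alpha_k=\infty$ and $\sum_k\alpha_k^2<\infty$ from $\eta_k=k^{-p}$, $p\in(0.5,1]$.

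The genuinely delicate point is that $\alpha_k$ is random and correlated with $TP_k(w_k)$, so the conditional mean of the scaled increment is not simply a constant times $A(w_k-w^M)$. I would handle this with a Robbins–Siegmund supermartingale argument on $\norm{w_k-w^M}^2$: using the two-sided sandwich on the scaling factor and the decomposition $TP_k(w_k)=A(w_k-w^M)+(\text{mean-zero fluctuation})$, one shows the scaled direction is, in conditional expectation, positively aligned with $A(w_k-w^M)$, giving $\E[\norm{w_{k+1}-w^M}^2\mid\F_k]\le(1+c\alpha_k^2)\norm{w_k-w^M}^2-c'\alpha_k\norm{w_k-w^M}^2+\text{summable}$, where the martingale-difference noise (sampling fluctuation of $A_k,b_k$ plus the zero-mean Monte-Carlo error $\widetilde V_i-V_i$) satisfies $\E[\norm{M_{k+1}}^2\mid\F_k]\le C(1+\norm{w_k}^2)$ under a bounded-return assumption. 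The constant-$\b$ heavy-ball term is absorbed by passing to the augmented variable $(w_k,\,w_k-w_{k-1})$; with a vanishing step it merely rescales time in the limiting ODE to $\dot w=-\tfrac{1}{1-\b}A(w-w^M)$ and leaves the equilibrium unchanged, which is the content I would defer to the momentum appendix. The supermartingale convergence theorem then forces $\alpha_k\norm{w_k-w^M}^2$ to be summable and $\norm{w_k-w^M}^2$ to converge, and combined with $\sum_k\alpha_k=\infty$ pins the limit at $0$, i.e.\ $w_k\to w^M$ a.s.

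I expect the main obstacle to be exactly this conditional-mean analysis of the nonlinearly scaled, random-step direction in the presence of momentum: standard stochastic-approximation and Kaczmarz convergence theorems assume deterministic steps and no momentum, so the crux is verifying that the curvature scaling stays uniformly sandwiched between positive constants and that the scaled direction remains a descent direction for $\norm{\cdot-w^M}^2$ in conditional expectation, while simultaneously certifying a.s.\ boundedness of the iterates under the augmented momentum dynamics. The mean-field identification and the noise bookkeeping are routine by comparison.
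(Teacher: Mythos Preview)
Your proposal is essentially correct but proceeds along a genuinely different route from the paper. The paper puts the update in the standard form $w_{k+1}=w_k-\alpha_k(h(w_k)+M_{k+1})$ and then verifies Borkar's ODE-method assumptions (A1)--(A4): Lipschitz mean field, Robbins--Monro step sizes, martingale-difference noise with quadratic growth, and a.s.\ boundedness of iterates (the last via the Lakshminarayanan--Bhatnagar stability criterion). Convergence is then read off from the limiting ODE $\dot w = -(Aw-b)$. For momentum, the paper \emph{unrolls} the heavy-ball term as a geometrically-weighted sum of past $TP_{k-i}(w_{k-i})$'s, recentres each around $w_k$, and shows that the residual $TP_{k-i}(w_{k-i})-TP_{k-i}(w_k)$ collapses into a vanishing perturbation $\vepsilondot_k$; the mean field is merely rescaled by the bounded constant $\mathring\zeta=\sum_i\zeta_i$, so the equilibrium is unchanged.

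By contrast, you go through a direct Robbins--Siegmund supermartingale inequality on $\norm{w_k-w^M}^2$, and for momentum you propose the state-augmentation $(w_k,\,w_k-w_{k-1})$ with the limiting ODE $\dot w=-\tfrac{1}{1-\b}A(w-w^M)$. Your observation that $\Delta TP_k(w_k)=-A_k\,TP_k(w_k)$ and hence that the curvature scalar is exactly a Rayleigh-quotient object is sharper than the paper's ad~hoc boundedness argument, and it makes the sandwich on $\alpha_k$ transparent. The ODE method buys the paper a cleaner modular structure (check four hypotheses, cite Borkar); your Robbins--Siegmund route is more self-contained and is more candid about the one point both approaches actually finesse, namely that $\alpha_k$ is random and correlated with $TP_k(w_k)$, so the ``standard form'' is not literally Borkar's setting. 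Neither approach resolves that correlation rigorously; both rely on the two-sided bound $c\le\alpha_k/\eta_k\le C$ to reduce to a deterministic-step problem. One caveat on your side: with heavy-ball momentum the one-step conditional inequality on $\norm{w_k-w^M}^2$ alone does not close, so you will need a composite Lyapunov function such as $\norm{w_k-w^M}^2+\kappa\norm{w_k-w_{k-1}}^2$ for the augmented state, not just the augmentation itself.
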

	
	To prove the above theorem, we first propose a simpler Theorem \ref{thm:convergence of monte carlo without momentum}, which does not involve the momentum term. We state and prove this below.
	
	
	\begin{thm}
		\label{thm:convergence of monte carlo without momentum}
		$w_{k+1} = w_k - \alpha_k \cdot TP_k(w_k)$ 
		without momentum converges to $w^*$ (a.s)
	\end{thm}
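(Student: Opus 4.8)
The plan is to cast the iteration as a Robbins--Monro stochastic approximation scheme and invoke the ODE method (see \cite{Borkar}). First I would rewrite the update as $w_{k+1} = w_k - \alpha_k[h(w_k) + M_{k+1}]$, where $h(w) := \E[TP_k(w)\mid\F_k]$ is the mean field with respect to the natural filtration $\F_k = \sigma(w_0,\dots,w_k)$, and $M_{k+1} := TP_k(w_k) - h(w_k)$ is the associated martingale-difference noise. The key preliminary computation is the conditional expectation of a sampled term $\tfrac{(\phi_i^\tr w - V_i)\phi_i}{\|\phi_i\|^2}$: using ergodicity to identify the state-visitation frequencies with the stationary distribution $\pi$ (whose diagonal matrix is $D$), and the fact (footnote in the algorithm) that $\E[\widetilde V_i\mid S_0] = V_i$, this averages, up to a positive scalar, to
\[
h(w) = \Phi^\tr N D N(\Phi w - V) = Aw - b,\quad A := \Phi^\tr N D N \Phi,\quad b := \Phi^\tr N D N V,
\]
where I use $N_{ii} = 1/\|\phi_i\|$ so that $\phi_i/\|\phi_i\|$ is the $i$-th column of $\Phi^\tr N$ and $(\phi_i^\tr w - V_i)/\|\phi_i\| = (N(\Phi w - V))_i$.

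Setting $h(w) = 0$ gives the unique root $w^* = A^{-1}b = (\Phi^\tr N D N \Phi)^{-1}\Phi^\tr N D N V = w^M$, which exists once $A$ is invertible. I would establish that $A$ is in fact positive definite: writing $B := N\Phi$ we have $A = B^\tr D B$, so $x^\tr A x = \sum_i \pi_i (Bx)_i^2 > 0$ for every $x \neq 0$, since $\pi_i > 0$ by ergodicity and $B$ has full column rank whenever $\Phi$ does (an assumption I would state explicitly). Consequently the limiting ODE $\dot w = -h(w) = -(Aw - b)$ is linear with Hurwitz drift $-A$, and the quadratic Lyapunov function $\mathcal{L}(w) = \tfrac12\|w - w^*\|^2$ yields $\dot{\mathcal{L}} = -(w-w^*)^\tr A(w-w^*) < 0$, so $w^*$ is globally asymptotically stable.

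It then remains to verify the standard hypotheses of the ODE method. Lipschitz continuity of $h$ is immediate since $h$ is affine. The step sizes $\alpha_k = 1/k^p$ with $p\in(0.5,1]$ satisfy $\sum_k \alpha_k = \infty$ and $\sum_k \alpha_k^2 < \infty$, which is precisely why this range of $p$ is chosen. For the noise, $\E[M_{k+1}\mid\F_k] = 0$ by construction, and since the MDP is finite with bounded rewards and discounting, the estimates $\widetilde V_i$ are bounded and the number of sampled rows is at most $m$; hence $\E[\|M_{k+1}\|^2\mid\F_k] \le K(1 + \|w_k\|^2)$ for some constant $K$. Finally, stability $\sup_k\|w_k\| < \infty$ a.s.\ follows from the Borkar--Meyn theorem applied to the scaled drift $h_\infty(w) = \lim_{c\to\infty} h(cw)/c = Aw$, whose ODE $\dot w = -Aw$ has a globally asymptotically stable origin (again by positive definiteness of $A$). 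With all hypotheses in place, the ODE method gives $w_k \to w^* = w^M$ almost surely.

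The hard part will be the mean-field computation: justifying that the states visited along a single first-visit Monte Carlo trajectory can be replaced, in conditional expectation, by the stationary weighting $D$, and handling the random count $\tau$ and the $1/\tau$ averaging cleanly so that the drift direction is exactly a positive multiple of $\Phi^\tr N D N(\Phi w - V)$. Coupled with this is the need to confirm that the Monte Carlo estimates $\widetilde V$ are genuinely conditionally unbiased for $V$ with bounded conditional variance, so that they feed only into the martingale noise $M_{k+1}$ and do not perturb the equilibrium; once these two points are settled, the remaining stochastic-approximation machinery is routine.
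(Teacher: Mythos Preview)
Your proposal is correct and follows essentially the same route as the paper: both cast the iteration in the standard Robbins--Monro form $w_{k+1}=w_k-\alpha_k[h(w_k)+M_{k+1}]$, compute the mean field as $h(w)=\Phi^\tr NDN(\Phi w-V)$ by averaging over the stationary distribution (the paper does this in Appendix~E.2 by conditioning on $\tau$ first, under the assumption that $\tau$ is an independent stopping time), and then verify Borkar's assumptions A1--A4 to conclude via the ODE method.

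Two small deviations are worth flagging. First, you verify the step-size condition for $\alpha_k=1/k^p$, but in this paper $\alpha_k$ is the \emph{curvature step} $\eta_k\|TP_k(w_k)\|/\|\Delta TP_k(w_k)\|$; the paper's Proposition~\ref{propn:MonteCarlo A2 step size sequence satisfies properties} has to argue separately that the ratio $\|TP_k\|/\|\Delta TP_k\|$ is a.s.\ bounded above and below so that the Robbins--Monro conditions still hold, which your write-up omits. Second, for boundedness of the iterates the paper's main text gives a short geometric argument (each update is a convex combination of projections onto hyperplanes, so the iterate moves closer to some intersection point), whereas you invoke Borkar--Meyn directly; the paper in fact also supplies the Borkar--Meyn verification in its appendix, so your choice is fine and arguably cleaner.
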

	To prove convergence, we need to show that four conditions are satisfied. 
	
	The major claims that we use in this proof are the following:
	
	\textbf{Fact:} $V_i$'s are bounded. In other words, if $R_{max} = \max_{s,s'\in\S}[R_{ss'}]$, then $V_i \leq R_{max}/(1-\gamma)\enspace \forall i$ 
	
	\vspace{-0.06in}
	\textbf{Fact:} $\tau$ is bounded as it is the number of unique states

	Now let the filtration be $\F_k$=$\{w_0,\dots,w_k\}$.
	For the stochastic update equation in \hyperref[thm:convergence of monte carlo without momentum]{theorem \ref{thm:convergence of monte carlo without momentum}}, 
	let the expected update be $h_{k+1}(w_k) = \E  \left[TP_k(w_k) | \F_k\right]$. 
	Then, the update rule in standard form is $w_{k+1} = w_k -\alpha(h_{k+1}(w_k) +\M_{k+1})$
	
	\vspace{-0.05in}
	\begin{propn}
		\label{propn:MonteCarloA1}
		$h_{k+1}(w_k)$ is Lipschitz
	\end{propn}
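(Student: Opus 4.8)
The plan is to exploit the fact that $TP_k(\cdot)$ is \emph{affine} in its argument, so that its conditional expectation $h_{k+1}$ is affine as well, and an affine map with a bounded linear part is automatically Lipschitz. First I would rewrite the random update as $TP_k(w) = A_k\,w - b_k$, where
\[
A_k = \sum_{i=1}^{\T}\frac{\phi_i\phi_i^\tr}{\norm{\phi_i}_2^2},
\qquad
b_k = \sum_{i=1}^{\T}\frac{\widetilde{V}_i\,\phi_i}{\norm{\phi_i}_2^2}.
\]
Since $w_k$ is $\F_k$-measurable while the freshly drawn trajectory (hence $\T$, the $\phi_i$, and the returns $\widetilde{V}_i$) is independent of $\F_k$, taking the conditional expectation yields $h_{k+1}(w) = \bar{A}\,w - \bar{b}$ with $\bar{A} = \E[A_k]$ and $\bar{b} = \E[b_k]$. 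In particular $h_{k+1}$ is affine in $w$.

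Next I would reduce the claim to a single bound on the operator norm of $\bar{A}$. For any $w,w'$ we have $h_{k+1}(w) - h_{k+1}(w') = \bar{A}(w-w')$, so $\norm{h_{k+1}(w) - h_{k+1}(w')} \le \norm{\bar{A}}\,\norm{w-w'}$, and it therefore suffices to show $\norm{\bar{A}} < \infty$ and identify it as the Lipschitz constant.

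The core estimate is that each summand of $A_k$ is the orthogonal projector onto the line spanned by $\phi_i$, namely $\hat{\phi}_i\hat{\phi}_i^\tr$ with $\hat{\phi}_i = \phi_i/\norm{\phi_i}_2$, whose spectral norm is exactly $1$. By the triangle inequality, $\norm{A_k} \le \sum_{i=1}^{\T} 1 = \T$, and since $\tau$ counts the distinct states in one episode we have the deterministic bound $\T \le m$. Passing through the expectation with Jensen's inequality (equivalently, the triangle inequality for expectations), $\norm{\bar{A}} = \norm{\E[A_k]} \le \E[\norm{A_k}] \le \E[\T] \le m$. Hence $h_{k+1}$ is Lipschitz with constant $L = m$.

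The only delicate points, rather than genuine obstacles, are bookkeeping ones: I must ensure every $\phi_i$ that appears is nonzero, so that the normalizer $N$ and each projector $\hat{\phi}_i\hat{\phi}_i^\tr$ are well defined, and I must justify interchanging the sum with the expectation. The boundedness of $\tau$ recorded in the Facts above is precisely what makes both the finite-sum interchange and the uniform norm bound legitimate; notably, no property of the value estimates $\widetilde{V}_i$ is needed, since they enter only the constant term $\bar{b}$, which cancels in the Lipschitz difference.
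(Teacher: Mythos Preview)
Your argument is correct and rests on the same core observation as the paper: each $\phi_i\phi_i^\tr/\norm{\phi_i}_2^2$ is a rank-one orthogonal projector with spectral norm $1$, so the linear part of the update has bounded operator norm, whence the affine map $h_{k+1}$ is Lipschitz.

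The route differs slightly from the paper's. The paper first \emph{computes} the conditional expectation explicitly, obtaining the deterministic function
\[
h(w)=\sum_{s\in\S}\pi_s\,\frac{(\phi_s^\tr w - V_s)\phi_s}{\norm{\phi_s}_2^2},
\]
and then bounds its Lipschitz constant by $1$ using $\sum_s\pi_s=1$ together with the projector bound. You instead bound $\norm{\bar A}$ through Jensen, $\norm{\E[A_k]}\le\E[\norm{A_k}]\le\E[\tau]\le m$, without identifying $\bar A$. Your approach is shorter and avoids the intermediate computation, but it yields the looser constant $m$ (or $1$ if one keeps the $1/\tau$ averaging present in the algorithm and the appendix); the paper's explicit calculation gives constant $1$ regardless. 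For the purpose of verifying assumption A1 in the stochastic-approximation framework, either constant suffices.
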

	
	\vspace{-0.15in}
	\begin{proof}
		Proof in \hyperref[appendixSection:gradient of the error function is Lipschitz]{Appendix \ref{appendixSection:gradient of the error function is Lipschitz}} and \hyperref[appendixSection: h is Lipschitz]{appendix \ref{appendixSection: h is Lipschitz}}
	\end{proof}
	
	\begin{propn}
		\label{propn:MonteCarlo A2 step size sequence satisfies properties}
		The step size sequence $\{\alpha_i\}_{i=1}^\infty$ satisfy $\sltoinf \alpha_i = \infty$ and $\sltoinf \alpha_i^2 < \infty$
	\end{propn}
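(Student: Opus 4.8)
The plan is to show that the effective step size appearing in Theorem~\ref{thm:convergence of monte carlo without momentum}, namely $\alpha_k = \eta_k\,\norm{TP_k(w_k)}/\norm{\Delta TP_k(w_k)}$ with $\eta_k = 1/k^p$ and $p \in (0.5,1]$, is sandwiched between two constant multiples of the base sequence $\eta_k$. Once I establish $c\,\eta_k \le \alpha_k \le C\,\eta_k$ for fixed constants $0 < c \le C < \infty$, both required properties follow at once from the $p$-series: $\sum_k 1/k^p = \infty$ since $p \le 1$, while $\sum_k 1/k^{2p} < \infty$ since $2p > 1$. So the whole argument reduces to bounding the curvature ratio $r_k := \norm{TP_k(w_k)}/\norm{\Delta TP_k(w_k)}$ uniformly above and below by strictly positive constants.

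First I would exploit that $TP_k$ is an affine map. Writing $A_k = \sum_{i=1}^\T \phi_i\phi_i^\tr/\norm{\phi_i}^2$ and $b_k = \sum_{i=1}^\T V_i\phi_i/\norm{\phi_i}^2$, we have $TP_k(w) = A_k w - b_k$, and a direct substitution gives $TP_k(w - TP_k(w)) = TP_k(w) - A_k\,TP_k(w)$. Hence $\Delta TP_k(w_k) = -A_k\,TP_k(w_k)$, so the ratio simplifies to $r_k = \norm{TP_k(w_k)}/\norm{A_k\,TP_k(w_k)}$. This reduces everything to controlling how $A_k$ stretches the specific vector $TP_k(w_k)$.

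For the lower bound on $r_k$ (which yields divergence of $\sum_k\alpha_k$), I would note that each summand $\phi_i\phi_i^\tr/\norm{\phi_i}^2$ is an orthogonal projection of operator norm one, so $\norm{A_k} \le \T \le m$; thus $\norm{A_k\,TP_k(w_k)} \le m\,\norm{TP_k(w_k)}$ and $r_k \ge 1/m$. The more delicate upper bound on $r_k$ (which yields summability of $\alpha_k^2$) needs a positive lower bound on the relevant eigenvalue of $A_k$. The key observation is that both $A_k w$ and $b_k$ lie in $\mathrm{span}\{\phi_1,\dots,\phi_\T\} = \mathrm{range}(A_k)$, so $TP_k(w_k)$ always belongs to $\mathrm{range}(A_k)$; restricted to that subspace $A_k$ is positive definite, whence $\norm{A_k\,TP_k(w_k)} \ge \lambda_{\min}^{+}(A_k)\,\norm{TP_k(w_k)}$, where $\lambda_{\min}^{+}(A_k)$ is the smallest strictly positive eigenvalue. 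Because the state space is finite, $A_k$ takes only finitely many values (one per subset of visited states), so $\lambda^{*} := \min \lambda_{\min}^{+}(A_k) > 0$ is a genuine positive constant, giving $r_k \le 1/\lambda^{*}$ uniformly in $k$ and along every sample path.

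Combining these estimates yields $\tfrac{1}{m}\eta_k \le \alpha_k \le \tfrac{1}{\lambda^{*}}\eta_k$, and the two $p$-series bounds then close the argument. I expect the real obstacle to be the uniform positivity of the smallest positive eigenvalue: one must argue it holds simultaneously for every realizable $A_k$ and cannot degenerate as the trajectory evolves. Finiteness of $\S$ is precisely what rescues this, converting a potentially path-dependent eigenvalue estimate into a minimum over a finite collection of positive numbers. I would also flag the degenerate event $TP_k(w_k) = 0$, where the ratio is $0/0$; there the iterate already sits at a fixed point, so $\alpha_k$ may be assigned by convention without affecting either step-size condition.
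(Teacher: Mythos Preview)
Your proposal is correct and follows the same high-level strategy as the paper---both bound the curvature ratio $\vartheta_k := \norm{TP_k(w_k)}/\norm{\Delta TP_k(w_k)}$ above and below by positive constants and then invoke the $p$-series for $\eta_k = 1/k^p$---but your execution is sharper. The paper (Appendix~\ref{appendixSection: alpha without momentum is square summable not summable}) argues only that $TP_k(w_k)\neq 0$ and $\Delta TP_k(w_k)\neq 0$ almost surely, and that both are bounded above via the a.s.\ boundedness of the iterates; it then asserts that $\sup_k\vartheta_k$ and $\inf_k\vartheta_k$ are finite and positive, with a footnote suggesting a hard threshold on $\norm{\Delta TP_k}$ to prevent blow-up in practice. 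Your eigenvalue argument closes exactly that gap: by observing that $TP_k(w_k)$ always lies in $\mathrm{range}(A_k)$ and that $A_k$ ranges over a finite set (one matrix per visited subset of $\S$), you obtain genuinely uniform, deterministic bounds $1/m \le \vartheta_k \le 1/\lambda^*$ without invoking iterate boundedness or any truncation. One small point to tidy: the paper's $TP_k$ carries a $1/\tau$ factor (so $A_k = \tfrac{1}{\tau}\sum_i \phi_i\phi_i^\tr/\norm{\phi_i}^2$ and $\norm{A_k}\le 1$ rather than $\le m$), which only shifts your constants. Your handling of the degenerate $TP_k(w_k)=0$ case by convention is also more explicit than the paper's.
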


	\begin{proof}[Proof Sketch]
		This proceeds from our construction of the step size sequence in \hyperref[section:StepSizeSeq]{section \ref{section:StepSizeSeq}}. See \hyperref[appendixSection: alpha without momentum is square summable not summable]{appendix \ref{appendixSection: alpha without momentum is square summable not summable}} for full proof.
	\end{proof}

	\begin{propn}
		\label{propn:MonteCarloA3}
		$\{\M_{k}\}$ is a zero-mean martingale difference noise sequence
		
	\end{propn}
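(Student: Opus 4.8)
The plan is to verify the three conditions that qualify $\{\M_k\}$ as a genuine martingale difference noise sequence in the sense demanded by Borkar's stochastic approximation framework, so that it complements the Lipschitz property (Proposition~\ref{propn:MonteCarloA1}) and the step-size conditions already established. Recalling that the update was rewritten in the standard form $w_{k+1} = w_k - \alpha_k(h_{k+1}(w_k) + \M_{k+1})$ with $h_{k+1}(w_k) = \E[TP_k(w_k)\mid\F_k]$, I would begin by reading off the definition $\M_{k+1} = TP_k(w_k) - \E[TP_k(w_k)\mid\F_k]$, where the realized $TP_k(w_k)$ is built from the noisy returns $\widetilde{V}_i$ whose conditional mean is $V_i$. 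Everything then reduces to checking (i) adaptedness, (ii) the zero conditional-mean (martingale difference) property, and (iii) a conditional second-moment growth bound.

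For (ii), the zero-mean property is immediate by construction: since $w_k$ is $\F_k$-measurable, so is $h_{k+1}(w_k) = \E[TP_k(w_k)\mid\F_k]$, whence $\E[\M_{k+1}\mid\F_k] = \E[TP_k(w_k)\mid\F_k] - h_{k+1}(w_k) = 0$. For (i), adaptedness follows because $w_{k+1}$ is $\F_{k+1}$-measurable by definition of the filtration while $w_k$, $\alpha_k$, and $h_{k+1}(w_k)$ are all $\F_k\subseteq\F_{k+1}$-measurable; solving $w_{k+1} = w_k - \alpha_k(h_{k+1}(w_k)+\M_{k+1})$ for $\M_{k+1}$ then exhibits it as $\F_{k+1}$-measurable.

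The substantive step is (iii): establishing $\E[\|\M_{k+1}\|^2\mid\F_k]\le K(1+\|w_k\|^2)$ for some constant $K$. Since subtracting the conditional mean cannot increase the conditional second moment, I would pass to $\E[\|\M_{k+1}\|^2\mid\F_k]\le\E[\|TP_k(w_k)\|^2\mid\F_k]$ and control $\|TP_k(w_k)\|$ termwise. Each summand $(\phi_i^\tr w_k - \widetilde{V}_i)\phi_i/\|\phi_i\|^2$ has norm at most $|\phi_i^\tr w_k - \widetilde{V}_i|/\|\phi_i\| \le \|w_k\| + |\widetilde{V}_i|/\|\phi_i\|$ by Cauchy--Schwarz. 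Here I would invoke the two stated Facts — that the returns are bounded, $\widetilde{V}_i \le R_{\max}/(1-\gamma)$, and that $\T$, being the number of distinct states, satisfies $\T\le m$ — together with a uniform lower bound $\|\phi_i\|\ge c>0$ on the feature norms. These give $\|TP_k(w_k)\|\le m(\|w_k\|+C)$ for a deterministic constant $C$, hence $\|TP_k(w_k)\|^2 \le 2m^2(\|w_k\|^2 + C^2)$, and taking conditional expectation yields the claim with $K = 2m^2\max\{1,C^2\}$.

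The main obstacle I anticipate is confined to the bookkeeping in step (iii): the randomness in $TP_k$ enters through both the sampled trajectory (which states are visited, hence which $\phi_i$ and $\widetilde{V}_i$ appear and the value of $\T$) and the Monte Carlo returns themselves, so the bounds must hold uniformly over realizations rather than merely on average. The two boundedness Facts are precisely what deliver this uniform control; the only additional hypothesis required is the nonvanishing feature-norm lower bound $\|\phi_i\|\ge c$, which I would state explicitly so that $1/\|\phi_i\|$ is controlled.
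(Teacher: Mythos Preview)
Your argument is correct. The zero-mean and adaptedness parts match the paper exactly, and your second-moment bound is valid: the conditional-variance inequality $\E[\|\M_{k+1}\|^2\mid\F_k]\le\E[\|TP_k(w_k)\|^2\mid\F_k]$ holds, and your termwise Cauchy--Schwarz estimate on each summand is sound.

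The paper takes a slightly different route for the second-moment bound. Rather than bounding $\|TP_k(w_k)\|$ and invoking the variance inequality, it writes $\M_{k+1}$ explicitly as an affine function of $w_k$,
\[
\M_{k+1}=\Bigl[\tfrac{1}{\T}\sum_{i=1}^\T\tfrac{\phi_i\phi_i^\tr}{\|\phi_i\|^2}-\sum_{s\in\S}\pi_s\tfrac{\phi_s\phi_s^\tr}{\|\phi_s\|^2}\Bigr]w_k-\Bigl[\tfrac{1}{\T}\sum_{i=1}^\T\tfrac{\widetilde V_i\phi_i}{\|\phi_i\|^2}-\sum_{s\in\S}\pi_s\tfrac{V_s\phi_s}{\|\phi_s\|^2}\Bigr]=Aw_k-b,
\]
and then observes that $A$ has bounded operator norm (each $\phi_i\phi_i^\tr/\|\phi_i\|^2$ has maximal eigenvalue~$1$) while $b$ is bounded via $|\widetilde V_i|\le R_{\max}/(1-\gamma)$. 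From linearity the quadratic growth in $\|w_k\|$ is then immediate. This decomposition buys a cleaner constant and makes the affine-in-$w_k$ structure of the noise transparent; your approach is more elementary and avoids the eigenvalue lemma, at the cost of a slightly looser bound (your factor $m$ is unnecessary once the averaging $1/\T$ is used). Both arguments ultimately rely on the same two Facts and on a positive lower bound for $\|\phi_i\|$, which the paper leaves implicit but which you correctly flag.
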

	
	\vspace{-0.1in}
	\begin{proof}
		We show this in \hyperref[appendixSection: Margingale difference sequence square integrable]{appendix \ref{appendixSection: Margingale difference sequence square integrable}}.
	\end{proof}
	
	\vspace{-0.1in}
	\begin{propn}
		\label{propn:MonteCarloA4}
		The iterates remain bounded almost surely. In other words, $\sup\limits_{k} w_k <\infty\enspace (a.s)$. 
	\end{propn}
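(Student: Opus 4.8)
The plan is to establish stability via the Borkar--Meyn stability criterion for stochastic approximation (see \cite{Borkar}), which upgrades the Robbins--Monro conditions of Propositions~\ref{propn:MonteCarloA1}--\ref{propn:MonteCarloA3} into almost sure boundedness of the iterates, provided the associated \emph{scaled} ODE is globally asymptotically stable at the origin. The first step is therefore to identify the mean field $h(w)$ explicitly. Since under the stationary distribution each state $i$ contributes with weight $\pi_i = D_{ii}$ and the first-visit return is an unbiased estimate of $V_i$, taking the conditional expectation of $TP_k(w_k)$ given $\F_k$ reduces to
\[ h(w) = \E[TP_k(w)\mid\F_k] = \sum_{i=1}^m \pi_i\,\frac{(\phi_i^\tr w - V_i)}{\norm{\phi_i}_2^2}\,\phi_i = \Phi^\tr N D N(\Phi w - V), \]
where I have used $\phi_i/\norm{\phi_i}_2^2 = N_{(i,i)}^2\phi_i$ and collected terms into the matrices $N$, $D$, and $\Phi$. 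Writing $A := \Phi^\tr N D N \Phi$, this is the affine field $h(w) = A(w - w^M)$ with $w^M = A^{-1}\Phi^\tr N D N V$, which recovers exactly the limit claimed in Theorem~\ref{thm: monte carlo convergence}.

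Next I would record that $A$ is symmetric positive definite. Indeed $A = (N\Phi)^\tr D (N\Phi)$, and since $D$ has strictly positive diagonal entries $\pi_i > 0$ while $N\Phi$ has full column rank (as $\Phi$ does, $N$ being an invertible diagonal matrix), we obtain $w^\tr A w = \sum_i \pi_i (\phi_i^\tr w/\norm{\phi_i}_2)^2 > 0$ for all $w \neq 0$. This simultaneously guarantees that $A^{-1}$ exists, so that $w^M$ is well defined, and supplies the contraction that drives the stability argument.

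For the scaled field, set $h_c(w) := h(cw)/c = Aw - Aw^M/c$. As $c \to \infty$ this converges, uniformly on compact sets, to $h_\infty(w) = Aw$, so the limiting ODE is $\dot{w} = -Aw$. Taking the Lyapunov function $\norm{w}_2^2$, one computes $\tfrac{d}{dt}\norm{w}_2^2 = -2\,w^\tr A w \le -2\,\lambda_{\min}(A)\,\norm{w}_2^2 < 0$ for $w \neq 0$, with $\lambda_{\min}(A) > 0$ by positive definiteness; hence the origin is the unique globally asymptotically stable equilibrium of the scaled ODE. Combined with the Lipschitz property (Proposition~\ref{propn:MonteCarloA1}), the step-size conditions (Proposition~\ref{propn:MonteCarlo A2 step size sequence satisfies properties}), and the martingale-difference structure (Proposition~\ref{propn:MonteCarloA3}), the Borkar--Meyn theorem then yields $\sup_k \norm{w_k}_2 < \infty$ almost surely.

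I expect the main obstacle to be twofold. First, one must justify the full-column-rank assumption on $\Phi$ (equivalently, that the normalized feature vectors span $\R^n$); without it $A$ degenerates, the origin ceases to be asymptotically stable, and even $w^M$ fails to be well defined. Second, the noise growth bound feeding into the Borkar--Meyn hypotheses, namely $\E[\norm{\M_{k+1}}_2^2\mid\F_k] \le K(1+\norm{w_k}_2^2)$ for a constant $K$ uniform in $k$, must be verified; this follows from the affinity of $TP_k$ in $w$ together with the two \textbf{Facts} already stated (the $V_i$ are bounded and $\tau$ is bounded), but care is needed to extract a $k$-independent $K$ from these bounds.
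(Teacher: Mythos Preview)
Your proof is correct and in fact coincides almost verbatim with the paper's \emph{alternative} stability proof in Appendix~\ref{appendixsubsection: A4 Stability Criterion}: identify the mean field $h(w)=A(w-w^M)$ with $A=\Phi^\tr N D N\Phi$, observe that $A$ is symmetric positive definite under the full-column-rank assumption on $\Phi$, compute the scaled field $h_c(w)=Aw-Aw^M/c\to Aw$ uniformly on compacts, and invoke the Borkar--Meyn criterion since $\dot w=-Aw$ has the origin as its unique globally asymptotically stable equilibrium.

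The proof the paper gives \emph{in the main text}, however, is a different and more elementary geometric argument. There the boundedness of the $V_i$'s implies that the hyperplanes $\phi_i^\tr w=V_i$ are fixed and bounded, so the finitely many (at most $\binom{m}{n}$) intersection points of any $n$ of them lie in a bounded set; since each Total Projections update is built from orthogonal projections, and an orthogonal projection onto a hyperplane cannot increase the distance to any point on that hyperplane (Pythagoras), each iterate moves no farther from at least one such intersection point, keeping the sequence in a bounded region. That route bypasses the ODE and Lyapunov machinery entirely and exploits only the projection structure, whereas your Borkar--Meyn argument is analytically cleaner, makes the full-rank hypothesis explicit (it is genuinely needed for $A\succ0$ and for $w^M$ to be well defined), and plugs directly into the stochastic-approximation framework used for the rest of Theorem~\ref{thm:convergence of monte carlo without momentum}.
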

	
	\begin{proof}
		First note that $V_i's$ are upper-bounded. Thus the estimates for the hyperplanes are upper-bounded. 
		Now, in a fully determined system, there is at least one, and at most $\binom{m}{n}$ 
		intersection points in $\R^n$ of the m hyperplanes. 
		Since each iteration brings us closer to at least one of these intersection points (by the Pythagoras theorem, as we are doing projections), and the intersection points are all bounded, the iterates are almost surely bounded
	\end{proof}\vspace{-0.05in}
	
	\begin{propn}
		\label{propn:monte carlo unique stable equilibrium point}
		Let $h(\cdot)$ be the function which our update equation tracks asymptotically, then the unique globally asymptotically stable equilibrium point for the limiting o.d.e given by $\dot{w}(t) = h(w(t))$ is given as $w^* = \left[(\Phi^\tr N D N\Phi)^{-1}\Phi^\tr N D N\right] V$
	\end{propn}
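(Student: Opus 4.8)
The plan is to reduce the limiting ODE to a \emph{linear} system whose drift matrix is symmetric positive definite; for such a system global asymptotic stability of the unique equilibrium is standard, and the equilibrium can be read off in closed form.

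First I would compute the drift $h$ explicitly. Since the hyperplanes (states) are sampled with probabilities $d_i$, the conditional expectation of the increment $TP_k$ is
\[
\E[TP_k(w)\mid \F_k] \;=\; \sum_{i=1}^m d_i\,\frac{(\phi_i^\tr w - V_i)\,\phi_i}{\|\phi_i\|^2} \;=\; \Phi^\tr N D N(\Phi w - V),
\]
where I have used that $N$ and $D$ are diagonal (so they commute and $N^2D = NDN$), that the $i$-th column of $\Phi^\tr$ is $\phi_i$, and that $N_{ii}^2 = 1/\|\phi_i\|^2$. Because the update is $w_{k+1}=w_k-\alpha_k TP_k$, the increment has conditional mean in the direction $-\E[TP_k(w)\mid\F_k]$, so the drift of the tracking ODE is $h(w) = -\E[TP_k(w)\mid\F_k] = -A w + \Phi^\tr N D N V$ with $A := \Phi^\tr N D N \Phi$, making $\dot w = h(w)$ the descent flow.

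Next I would establish that $A$ is symmetric positive definite. Symmetry is immediate since $N,D$ are symmetric diagonal matrices. For definiteness, for any $x\neq 0$,
\[
x^\tr A x \;=\; (N\Phi x)^\tr D\,(N\Phi x) \;=\; \sum_{i=1}^m d_i\,(N\Phi x)_i^2 \;>\; 0,
\]
because $D$ has strictly positive diagonal (the stationary distribution of an ergodic chain is strictly positive), $N$ is invertible, and $\Phi$ has full column rank so $\Phi x\neq 0$. Invertibility of $A$ then makes $h(w)=0$ have the unique solution $w^M = A^{-1}\Phi^\tr N D N V$, which is exactly the claimed expression; hence $w^M$ is the unique equilibrium. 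Rewriting the affine drift about this point yields the linear ODE $\dot w = -A(w - w^M)$.

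Finally, for global asymptotic stability I would take the Lyapunov function $L(w)=\tfrac12\|w-w^M\|^2$, which is radially unbounded and vanishes only at $w^M$. Along trajectories,
\[
\dot L = (w-w^M)^\tr \dot w = -(w-w^M)^\tr A (w-w^M) \le -\lambda_{\min}(A)\,\|w-w^M\|^2 < 0 \quad (w\neq w^M),
\]
so every trajectory converges to $w^M$, giving global asymptotic stability. I expect the only genuine subtlety to lie in the drift computation: one must verify that the random episode length and the first-visit averaging by $\tau$ leave the \emph{direction} of the expected increment unchanged, so that the effective weights collapse to the stationary probabilities $d_i$ (at worst up to a positive scalar, which is harmless for both the equilibrium and the sign of $\dot L$). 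Once the drift has the stated form, the positive-definiteness and Lyapunov steps are routine given full column rank of $\Phi$ and positivity of $\pi$.
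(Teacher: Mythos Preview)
Your proposal is correct and follows essentially the same approach as the paper: compute the drift $h$, identify it as a linear map $-A w + b$ with $A=\Phi^\tr NDN\Phi$, show $A$ is symmetric positive definite (the paper does this via strong convexity of the error function $G$ in Appendix~D.3), and read off the unique equilibrium $w^M=A^{-1}b$. The only stylistic differences are that the paper verifies the identities $\sum_s \pi_s\,\phi_s\phi_s^\tr/\|\phi_s\|^2=\Phi^\tr NDN\Phi$ and $\sum_s \pi_s\,V_s\phi_s/\|\phi_s\|^2=\Phi^\tr NDN V$ elementwise rather than by your one-line matrix factorization, and the paper leaves global asymptotic stability implicit (it follows from $G$ being strongly convex and $h=\nabla G$), whereas you supply the explicit quadratic Lyapunov function; your flagged subtlety about the first-visit averaging by $\tau$ is exactly what the paper handles in its computation of $h_{k+1}$ in Appendix~E.2.
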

	\vspace{-0.1in}
	\begin{proof}
		We show this in \hyperref[appendixSection:Convergence of Scale Invariant monte carlo without momentum]{appendix \ref{appendixSection:Convergence of Scale Invariant monte carlo without momentum}}. 
	\end{proof}
	\vspace{-0.1in}
	\begin{proof}[Proof of Theorem \ref{thm:convergence of monte carlo without momentum}]
		From \hyperref[propn:MonteCarloA1]{propositions \ref{propn:MonteCarloA1}}, \hyperref[propn:MonteCarlo A2 step size sequence satisfies properties]{ \ref{propn:MonteCarlo A2 step size sequence satisfies properties}}, \hyperref[propn:MonteCarloA3]{ \ref{propn:MonteCarloA3}}, \hyperref[propn:MonteCarloA4]{ \ref{propn:MonteCarloA4}}, we satisfy the assumptions A1-A4 required to show convergence of a stochastic approximation equation \cite{Borkar}. Based on \hyperref[propn:monte carlo unique stable equilibrium point]{proposition \ref{propn:monte carlo unique stable equilibrium point}} we converge to the unique globally asymptotically stable equilibrium point given by $w^* =\left[(\Phi^\tr N D N\Phi)^{-1}\Phi^\tr N D N\right] V$\vspace{-0.07in}
	\end{proof}
	
	\vspace{-0.1in}
	\subsection{Convergence using Momentum}
	\label{section: Momentum in Scale Invariant Monte Carlo}
	
	\vspace{-0.1in}
	Momentum methods have been shown to converge by \cite{defossez2020convergence,reddiAdamConvergence}. Convergence under heavy-ball momentum has been shown by \citet{ghadimi2015global}. \citet{avrachenkov2020online} have used two-time scale methods to show convergence under momentum terms.
	We consider one such adaptation of these general techniques here.
	
	\begin{proof}[Proof of Theorem 1]
		\label{proof: Theorem 1}
		We cover the full proof in \hyperref[appendixsection: Showing convergence with momentum]{appendix \ref{appendixsection: Showing convergence with momentum}}. Here we provide two propositions (from \hyperref[appendixsection: Showing convergence with momentum]{appendix \ref{appendixsection: Showing convergence with momentum}}) that show that the final iterate is the same as the iterate without momentum, added with perturbation terms and a zero-mean martingale noise sequence. Given that the martingale noise and perturbations have zero expectation and are multiplied with a decaying scalar (that is square summable, but not summable), the convergence properties are the same as for the case without momentum.
	\end{proof}
	
	\begin{propn}
		The stochastic approximation equation with momentum can be rewritten as\\	
		$w_{k+1} - w_k = \alpha_k z_k$\\
		$z_i = z_{i-1} + \zeta_{(i,k)} \left[h(w_k) + \varepsilon_{(i,k)} + M_{(i,k)}\right] \quad \forall i \in [1,k]$\\
		$z_0 = \left[h(w_k) + M_{(0,k)}\right]$
		
		where $M_{(i,k)}$ are martingale difference noise, coefficients $\zeta_{(i,k)} =\beta^i \frac{\alpha_{k-i}}{\alpha_{k}}$ provide exponential decay, expected update $h(\cdot)$ converges to $w^*$ and $\varepsilon_{(i,k)}$ are perturbation terms.
	\end{propn}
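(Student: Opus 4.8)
The plan is to derive the claimed recursion by unrolling the heavy-ball momentum update and then splitting each accumulated increment into a drift term, a perturbation, and a martingale-difference term. First I would absorb the curvature-step normalization into an effective (iterate-dependent) step size $\alpha_k := \eta_k \|TP_k(w_k)\| / \|\Delta TP_k(w_k)\|$, so that the update in Theorem \ref{thm: monte carlo convergence} reads $w_{k+1} = w_k - \alpha_k TP_k(w_k) + \beta(w_k - w_{k-1})$. Writing $\Delta_k := w_{k+1} - w_k$ and adopting the standard initialization $w_{-1} = w_0$ (hence $\Delta_{-1}=0$), this is the first-order linear recursion $\Delta_k = \beta \Delta_{k-1} - \alpha_k TP_k(w_k)$.

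Next I would solve this recursion explicitly. Iterating it gives $\Delta_k = -\sum_{i=0}^{k} \beta^i \alpha_{k-i} TP_{k-i}(w_{k-i})$, so dividing through by $\alpha_k$ yields $w_{k+1}-w_k = \alpha_k z_k$ with $z_k = -\sum_{i=0}^{k} \beta^i (\alpha_{k-i}/\alpha_k)\, TP_{k-i}(w_{k-i})$. This identifies the weights $\zeta_{(i,k)} = \beta^i \alpha_{k-i}/\alpha_k$ exactly as stated, and since $\beta \in (0,1)$ the factor $\beta^i$ forces exponential decay in $i$. To recover the stated sub-recursion over $i$, I would decompose each stochastic increment using the martingale split of Proposition \ref{propn:MonteCarloA3}: write $-TP_{k-i}(w_{k-i}) = h(w_{k-i}) + M_{(i,k)}$, where $h$ is the ODE drift (so that $h(w^M)=0$ and, by Proposition \ref{propn:monte carlo unique stable equilibrium point}, $w^M$ is its stable equilibrium) and $M_{(i,k)} := -TP_{k-i}(w_{k-i}) - h(w_{k-i})$. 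Re-centering the drift at the current iterate via $\varepsilon_{(i,k)} := h(w_{k-i}) - h(w_k)$ gives $-TP_{k-i}(w_{k-i}) = h(w_k) + \varepsilon_{(i,k)} + M_{(i,k)}$. Substituting this into the sum and reading it off term by term is precisely the recursion $z_i = z_{i-1} + \zeta_{(i,k)}[h(w_k) + \varepsilon_{(i,k)} + M_{(i,k)}]$, with base case $z_0 = h(w_k) + M_{(0,k)}$ because $\zeta_{(0,k)} = 1$ and $\varepsilon_{(0,k)} = 0$.

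It then remains to justify the two descriptive claims. That $\{M_{(i,k)}\}$ is a martingale-difference sequence follows from Proposition \ref{propn:MonteCarloA3} applied at each time $k-i$ relative to the filtration $\F_{k-i}$, since each $M_{(i,k)}$ is the centered noise of a single total-projection sample. For the perturbations, Lipschitzness of $h$ (Proposition \ref{propn:MonteCarloA1}) bounds $\|\varepsilon_{(i,k)}\| \le L\,\|w_{k-i}-w_k\| \le L\sum_{j=1}^{i}\|\Delta_{k-j}\|$, which is controlled by the accumulated increments and therefore vanishes as the iterates settle. The main obstacle I anticipate is the iterate-dependent curvature step size: because $\alpha_k$ is random and data-driven, the ratios $\alpha_{k-i}/\alpha_k$ appearing in $\zeta_{(i,k)}$ are not deterministic, and one must verify they stay bounded uniformly in $i$ so that the geometric factor $\beta^i$ genuinely dominates and the tail of the momentum sum remains summable. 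Establishing this boundedness, the resulting exponential decay of $\zeta_{(i,k)}$, and the vanishing of $\varepsilon_{(i,k)}$ is the technical core on which the subsequent two-timescale convergence argument (in the spirit of \cite{avrachenkov2020online}) will rely.
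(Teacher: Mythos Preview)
Your unrolling of the heavy-ball recursion and the identification $\zeta_{(i,k)}=\beta^i\alpha_{k-i}/\alpha_k$ match the paper's derivation exactly. Where you diverge is in the three-way split of each past increment. You take $\varepsilon_{(i,k)}=h(w_{k-i})-h(w_k)$ (recentering the \emph{drift}) and let $M_{(i,k)}$ be the centered noise of step $k-i$ evaluated at $w_{k-i}$. The paper instead first recenters the \emph{stochastic sample} at the current iterate, setting $\varepsilon_{(i,k)}=TP_{k-i}(w_{k-i})-TP_{k-i}(w_k)$, and only then splits $TP_{k-i}(w_k)$ into $h(w_k)$ plus a noise term $M_{(i,k)}$. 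Your perturbation bound via Lipschitzness of $h$ is arguably cleaner than the paper's direct eigenvalue-$1$ argument on $\tfrac{1}{\tau}\sum_j \phi_j\phi_j^\top/\|\phi_j\|^2$, and both routes yield $\|\varepsilon_{(i,k)}\|\le\|w_{k-i}-w_k\|$.

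There is, however, a filtration issue you should address before moving on. In your split, $M_{(i,k)}$ for $i\ge 1$ is the realized centered noise from iteration $k-i$; both the sampled hyperplanes at time $k-i$ and the point $w_{k-i}$ are $\F_k$-measurable, so $\E[M_{(i,k)}\mid\F_k]=M_{(i,k)}$, not zero. Being a martingale difference relative to $\F_{k-i}$ (which is what Proposition~\ref{propn:MonteCarloA3} gives) does \emph{not} yield the property needed in the next proposition, where the collapsed noise $\Mdot_k=\sum_i \zeta_{(i,k)}M_{(i,k)}$ must have zero conditional mean given $\F_k$. The paper's device of anchoring $TP_{k-i}$ at $w_k$ before extracting the noise is precisely designed so that the remaining randomness can be averaged over relative to $\F_k$. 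You will need either to mirror that recentering, or to argue separately that your $\F_k$-measurable past noises can be absorbed into $\vepsilondot_k$ via the geometric decay of $\zeta_{(i,k)}$ together with a.s.\ bounds on the centered samples. Your flag on the random step-size ratios $\alpha_{k-i}/\alpha_k$ is well placed; the paper handles it by bounding $\vartheta_k=\|TP_k(w_k)\|/\|\Delta TP_k(w_k)\|$ uniformly above and below almost surely.
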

	
	\begin{propn}
		The above set of equations collapse into the stochastic equation
		$w_{k+1} - w_k = \alpha_k[\h(w_k) + \vepsilondot_k + \Mdot_k]$
		where $\h(w_k)$ converges to $w^*$, $\{\vepsilondot_k\}$ are perturbation terms and $\{\Mdot_k\}$ are martingale difference noise terms.
	\end{propn}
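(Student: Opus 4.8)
The plan is to solve the linear recursion for $z_k$ in closed form and then sort the resulting terms into a drift part, a perturbation part, and a noise part. Since $z_i = z_{i-1} + \zeta_{(i,k)}[h(w_k) + \varepsilon_{(i,k)} + M_{(i,k)}]$ is a first-order linear difference equation with additive forcing, I would telescope it from the base case $z_0 = h(w_k) + M_{(0,k)}$ to obtain
\[
z_k = h(w_k)\Bigl(1 + \slton[k]\zeta_{(i,k)}\Bigr) + \sum_{i=1}^{k}\zeta_{(i,k)}\varepsilon_{(i,k)} + \Bigl(M_{(0,k)} + \sum_{i=1}^{k}\zeta_{(i,k)}M_{(i,k)}\Bigr).
\]
The observation that makes the collapse possible is that $h$ is evaluated at the single point $w_k$ in every summand — the differences $h(w_{k-i}) - h(w_k)$ having already been folded into the $\varepsilon_{(i,k)}$ by the previous proposition — so the $h(w_k)$ factors out of the sum.

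Next I would define the three objects by grouping:
\[
\h(w_k) := \Bigl(1 + \sum_{i=1}^{k}\zeta_{(i,k)}\Bigr)h(w_k),\quad \vepsilondot_k := \sum_{i=1}^{k}\zeta_{(i,k)}\varepsilon_{(i,k)},\quad \Mdot_k := M_{(0,k)} + \sum_{i=1}^{k}\zeta_{(i,k)}M_{(i,k)},
\]
so that $z_k = \h(w_k) + \vepsilondot_k + \Mdot_k$ and hence $w_{k+1} - w_k = \alpha_k[\h(w_k) + \vepsilondot_k + \Mdot_k]$, which is the claimed form. To control the drift coefficient $c_k := 1 + \sum_{i=1}^{k}\zeta_{(i,k)}$ I would substitute $\zeta_{(i,k)} = \beta^i\,\alpha_{k-i}/\alpha_k$ with $\alpha_k = 1/k^p$: for each fixed $i$ the ratio $\alpha_{k-i}/\alpha_k = (k/(k-i))^p \to 1$, and since the geometric weight $\beta^i$ provides a summable dominating bound, dominated convergence gives $c_k \to 1 + \sum_{i=1}^{\infty}\beta^i = 1/(1-\beta)$, a positive constant. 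In particular $c_k$ is bounded above and below by positive constants for all large $k$.

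Finally I would verify the three defining properties. Because $\h = c_k h$ with $c_k > 0$, the limiting ODE $\dot w = \h(w)$ is a positive time-rescaling of $\dot w = h(w)$; by Proposition~\ref{propn:monte carlo unique stable equilibrium point} the latter has $w^M$ as its unique globally asymptotically stable equilibrium, and scaling by a positive factor preserves both the equilibrium and its stability, so $\h(w_k)$ converges to $w^M$ in the same sense. The noise $\Mdot_k$ is a finite linear combination, with deterministic coefficients $\zeta_{(i,k)}$, of the martingale differences $M_{(i,k)}$, so $\E[\Mdot_k \mid \F_k] = 0$, and square-integrability follows from the uniform second-moment bounds of Proposition~\ref{propn:MonteCarloA3} together with the summability of $\{\zeta_{(i,k)}\}$. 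The step I expect to be the crux is controlling $\vepsilondot_k$: I would bound $\|\vepsilondot_k\| \le \sum_{i=1}^{k}\beta^i(\alpha_{k-i}/\alpha_k)\|\varepsilon_{(i,k)}\|$ and argue that the exponential factor $\beta^i$ dominates the polynomial growth of $\alpha_{k-i}/\alpha_k$ even when $i$ is comparable to $k$ (where $\beta^{k-j}k^p \to 0$), so that $\vepsilondot_k \to 0$ and $\{\vepsilondot_k\}$ is an asymptotically negligible perturbation. The delicate point is exactly this uniform control of the step-size ratio for large $i$, which is where the geometric decay must be shown to win.
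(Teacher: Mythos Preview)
Your proposal is correct and follows essentially the same route as the paper: telescope the $z_i$ recursion, group into $\h(w_k)=c_k\,h(w_k)$, $\vepsilondot_k=\sum_i\zeta_{(i,k)}\varepsilon_{(i,k)}$, and $\Mdot_k=M_{(0,k)}+\sum_i\zeta_{(i,k)}M_{(i,k)}$, then verify the three properties via boundedness of $c_k$, linearity of conditional expectation, and asymptotic vanishing of the perturbation.

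One point where your sketch is thinner than the paper's argument is the perturbation control. You focus on showing that the weights $\zeta_{(i,k)}=\beta^i\alpha_{k-i}/\alpha_k$ stay summable, but summability of the weights alone only gives $\vepsilondot_k$ bounded, not $\vepsilondot_k\to 0$. The paper supplies the missing ingredient: from the explicit form of $TP_{k-i}$ one gets $\|\varepsilon_{(i,k)}\|\le\|w_k-w_{k-i}\|$, and then a split-sum argument handles it---for any $\epsilon>0$ choose $m$ so that the tail $\sum_{i>m}\zeta_{(i,k)}\|w_k-w_{k-i}\|<\epsilon$ by geometric decay and a.s.\ boundedness of iterates, while the finite head $\sum_{i\le m}$ vanishes as $k\to\infty$ because $\alpha_k\to 0$ forces $\|w_k-w_{k-i}\|\to 0$ for each fixed $i$. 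You correctly flag this step as the crux; incorporating the $\|w_k-w_{k-i}\|$ bound and the split is what closes it.
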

	Note that the perturbation terms don't affect convergence and Martingales difference random variables have expectation $0$. Therefore convergence mainly depends on the first term. But the first term is the same as in \hyperref[thm:convergence of monte carlo without momentum]{Theorem \ref{thm:convergence of monte carlo without momentum}}. Therefore the iterates converge to the same point as in \hyperref[thm:convergence of monte carlo without momentum]{Theorem \ref{thm:convergence of monte carlo without momentum}}, even in the presence of momentum.

	\section{DISCUSSION ON STEP SIZE}
	\label{section:Explanation on Machinery}

	In this section we cover in detail our curvature step size and choice of momentum method.
	
	\vspace{-0.07in}
	\subsection{Adaptive step sizes for Total Projections Algorithm}
	\label{section:StepSizeSeq}
	Choice of step size is extremely important for ML practitioners. We propose a novel variation for a step size sequence.
	
	To achieve convergence for a stochastic approximation algorithm, we need the step size sequences $\{\alpha_k\}_{k=1}^\infty$ to be such that $\sum_{k=0}^\infty \alpha_k = \infty$ and $\sum_{k=0}^\infty \alpha_k^2 <\infty$ \citep{Borkar}. 
	To achieve this, our step size sequence takes the form $\eta_k \cdot \theta_k/||TP_k(\cdot)||$, where $\eta_k = 1/k^p;\enspace p\in(0.5,1]$. The second term $\theta_k$ is the term of interest currently, and the third term makes the existing update term $TP_k(\cdot)$ unit norm.
	
	\subsubsection{Idea for Curvature Step}
	
	\begin{figure}[hb]
		\vskip -0.1in
		\begin{center}
			\centerline{\includegraphics[scale=.35]{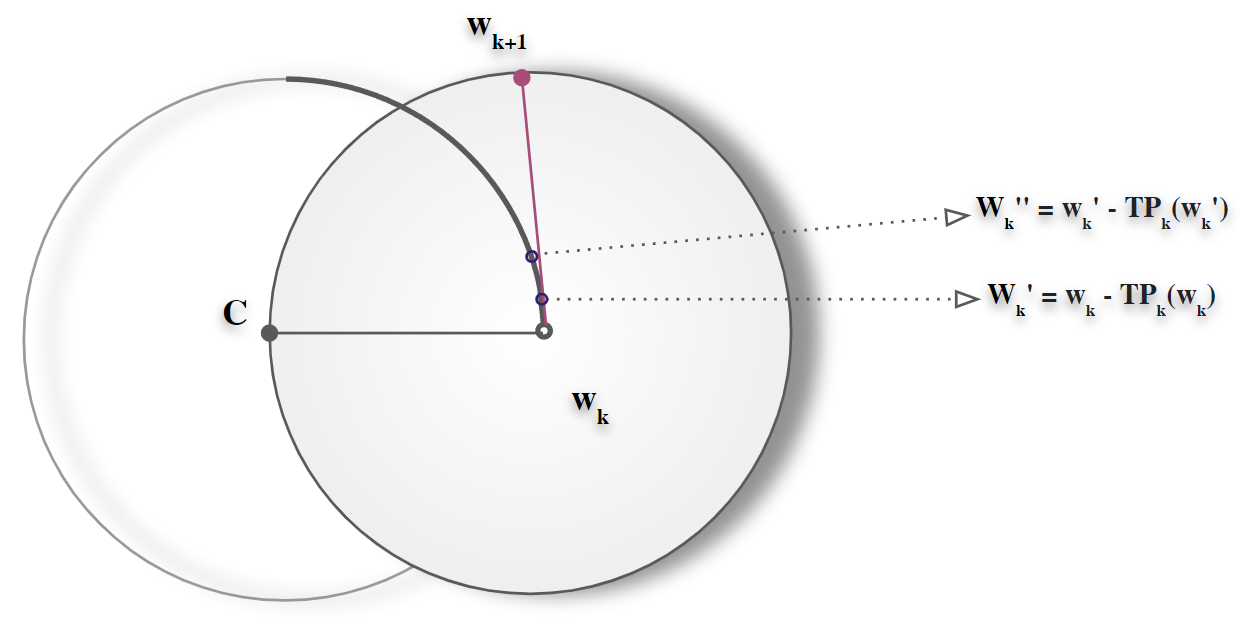}}
			\caption{Illustration for how the curvature-step works}
			\label{fig:Illustration for CurvatureStep}
		\end{center}
		\vskip -0.25in
	\end{figure}
	
	\hyperref[fig:Illustration for CurvatureStep]{Figure \ref{fig:Illustration for CurvatureStep}} illustrates the working of our curvature step on the step size based on the radius of the osculating circle. $w_k$ is the iterate, and C is the center of the circle formed by the osculating circle. We calculate the radius based on intermediate points $w_k'$ and $w_k''$, to finally get to point $w_{k+1}$
	

	\vspace{-0.06in}
	\subsubsection{Estimating Radius of Osculating circle}
	\vspace{-0.06in}
	Let $w_k(t)$ be some stochastic gradient curve we are descending, with some subset of hyperplanes fixed. Then the curvature is given by $\kappa = ||w''(\cdot)||$, where w is parameterized to some unit vector in the space, and radius $R = 1/\kappa$. 
	
	\vspace{-0.03in}
	Note that that our updates, $TP_k(\cdot)$ are tangents to $w(t)$. Since our estimates are not unit parameterized, we need an appropriate change of scale (re-parametrization). In other words, we divide our estimate for tangent by $||TP_k(w_k)||$, to get the unit tangent. Similar re-scaling of our estimate for curvature yields $||TP_k(w_k)||^2$ in the denominator \cite{ChappersMathStackexchange}.

	Let $\Delta TP_k(w_k) = TP_k(w_k - TP_k(w_k)) - TP_k(w_k)$. Then, our guess for the second derivative is $||\Delta TP_k(w_k)||$, which after re-parametrization gives $||\Delta TP_k(w_k)||/||TP_k(w_k)||^2$. 
	Then we have $R = 1/\kappa = ||TP_k(w_k)||^2/||\Delta TP_k(w_k)||$. Thus:
	\begin{equation}
		\label{eqn:definition of curvature step}
		\begin{split}
			\theta_k &= \frac{||TP_k(w_k)||^2}{||\Delta TP_k(w_k)||}
		\end{split}
	\end{equation}

	Thus our update equation (without momentum) becomes: 
	
	\begin{equation}
		\begin{split}
			w_{k+1} &= w_k - \eta_k\theta_k \frac{TP_k(w_k)}{||TP_k(w_k)||} \\
			&= w_k - \eta_k\dfrac{||TP_k(w_k)||}{||\Delta TP_k(w_k)||} (TP_k(w_k))
		\end{split}
	\end{equation}

	We call the step size sequence $\alpha_k$ as \textit{curvature-step} sequence. We now provide a visual illustration and rationale for the curvature-step, for consideration alongside \hyperref[fig:Evidence for Curvature based Step size]{Figure \ref{fig:Evidence for Curvature based Step size}}.

	\comment{
		\vspace{-0.1in}
		\subsubsection{Rationale}
		\vspace{-0.05in}
		\begin{claim*}
			Minimizer $w^*$ of the error on $G(\cdot)$ is outside the $(n-1)$ sphere in $\R^n$ with center at the iterate $w_k$ and radius equal to radius of osculating circle
		\end{claim*}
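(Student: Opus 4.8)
The plan is to exploit the fact that, once the subset of hyperplanes defining $TP_k$ is fixed, the map $TP_k$ is \emph{affine}. Writing $u = w_k - w^*$, I would first record that $TP_k(w) = A(w - w^*)$, where $A = \sum_i \phi_i \phi_i^\tr / \norm{\phi_i}^2$ is symmetric positive semidefinite and $w^*$ is the minimizer of the corresponding quadratic error $G$ (the point at which $TP_k$ vanishes). The osculating-circle radius used by the algorithm is $R = \norm{TP_k(w_k)}^2 / \norm{\Delta TP_k(w_k)}$, so the claim that $w^*$ lies outside the sphere of radius $R$ about $w_k$ is exactly the inequality $R \le \norm{w_k - w^*} = \norm{u}$.

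The key step is to evaluate $\Delta TP_k(w_k)$ in closed form using affineness. Since $TP_k(w_k) = Au$, the probe point is $w_k - TP_k(w_k) = w^* + (u - Au)$, hence $TP_k(w_k - TP_k(w_k)) = A(u - Au) = Au - A^2 u$. Subtracting $TP_k(w_k) = Au$ gives $\Delta TP_k(w_k) = -A^2 u$. Substituting the two norms yields the clean expression $R = (u^\tr A^2 u)/\norm{A^2 u}$.

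The bound $R \le \norm{u}$ is then immediate from Cauchy--Schwarz applied to the symmetric operator $A^2$: one has $u^\tr A^2 u = u^\tr (A^2 u) \le \norm{u}\,\norm{A^2 u}$, so dividing by $\norm{A^2 u}$ gives $R \le \norm{u}$. This places $w^*$ on or outside the stated sphere. Equality holds precisely when $u$ is parallel to $A^2 u$, i.e.\ when $u$ is an eigenvector of $A$; for every other direction the inequality is strict and $w^*$ is strictly outside, matching the claim.

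The point requiring the most care is the boundary (eigendirection) case. There $R = \norm{u}$, so $w^*$ lies on the sphere rather than strictly outside; but in that case the step direction $-TP_k(w_k)/\norm{TP_k(w_k)} = -u/\norm{u}$ together with step length $R = \norm{u}$ lands exactly on $w^*$, so no overshoot occurs and the contraction conclusion is unaffected. I would therefore phrase the result as ``$w^*$ lies on or outside the sphere,'' strict off a measure-zero set of directions, and flag that the differential-geometric radius $1/\kappa$ (which is infinite along eigendirections, where the flow is a straight line) differs from the algorithm's finite-difference estimate $R$; it is the latter, well-behaved quantity for which the non-overshooting guarantee holds.
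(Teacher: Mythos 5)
Your proof is correct, but it takes a genuinely different and considerably more rigorous route than the paper's. The paper's own argument is a qualitative dynamical-systems sketch: it views $w^*$ as a stable node of the limiting flow, notes that convergence is slower along the small-eigenvalue directions, and infers from the resulting curvature of the trajectories that $w^*$ must lie outside the osculating sphere. You instead exploit the affinity of $TP_k$ once the hyperplane subset is fixed, compute $\Delta TP_k(w_k) = -A^2 u$ in closed form with $u = w_k - w^*$ and $A$ symmetric PSD, and reduce the claim to $\left(u^\tr A^2 u\right)/\lVert A^2 u\rVert \le \lVert u\rVert$, which is exactly Cauchy--Schwarz. What your approach buys is an actual proof rather than an intuition, plus a precise characterization of the boundary case (equality exactly when $u$ is an eigendirection of $A$, where the step lands on $w^*$ with no overshoot) --- the paper's sketch asserts strict exteriority and glosses over this. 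You are also right to distinguish the algorithm's finite-difference radius from the differential-geometric $1/\kappa$; it is the former the claim really concerns. The one caveat worth flagging is scope: your identification of $w^*$ with the zero of $TP_k$ is exact only in the batch (or fixed-subset, consistent) setting; for a stochastic subsample the minimizer of the full error $G$ is not the zero of $TP_k$, so the statement as literally written needs either an expectation or the non-stochastic reading --- a gap the paper's own ``in expectation'' phrasing shares rather than resolves.
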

		\vspace{-0.15in}
		\begin{proof}[Proof Sketch]
			We first note that the convergence point is an attractor node, and the convergence is Lyapunov stable, in expectation. The convergence rate here is slower along the lower eigenvalue-vectors of the system \cite{strogatz_NonLinear}. 
			Thus in \hyperref[fig:Illustration for CurvatureStep]{figure \ref{fig:Illustration for CurvatureStep}}, the iterate has more to move along other directions than along the first direction (i.e. along $TP(w_k)$). Thus the convergence point is outside the $n-1$ sphere centered at $w_k$ with radius R.
		\end{proof}
		\vspace{-0.15in}
	}

	\begin{figure}
		\centering
		\begin{subfigure}[t]{0.49\textwidth}
			\centerline{\includegraphics[scale=1]{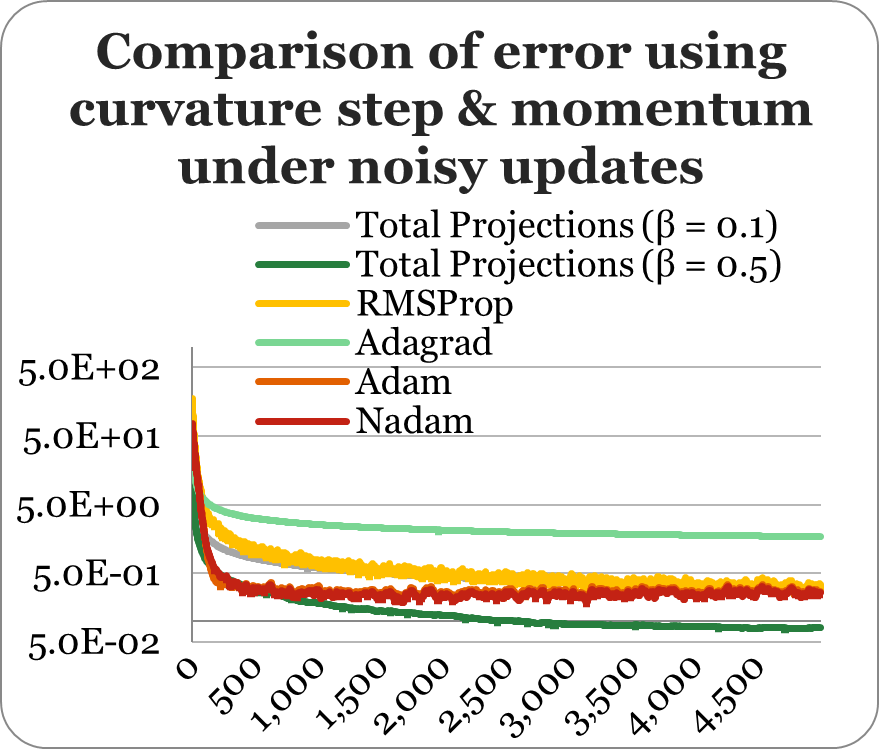}}
			\caption{Mean Error for 10 runs of Total Projections Algorithm with various momentum methods. $\Phi$ and $V$ are sampled uniformly from [-1,1]. (Remark: Adam and Nadam are nearly overlapping)}
			\label{fig:Comparison of Momentums}
		\end{subfigure}
		\hfill
		\begin{subfigure}[t]{0.49\textwidth}
			\includegraphics[scale=1]{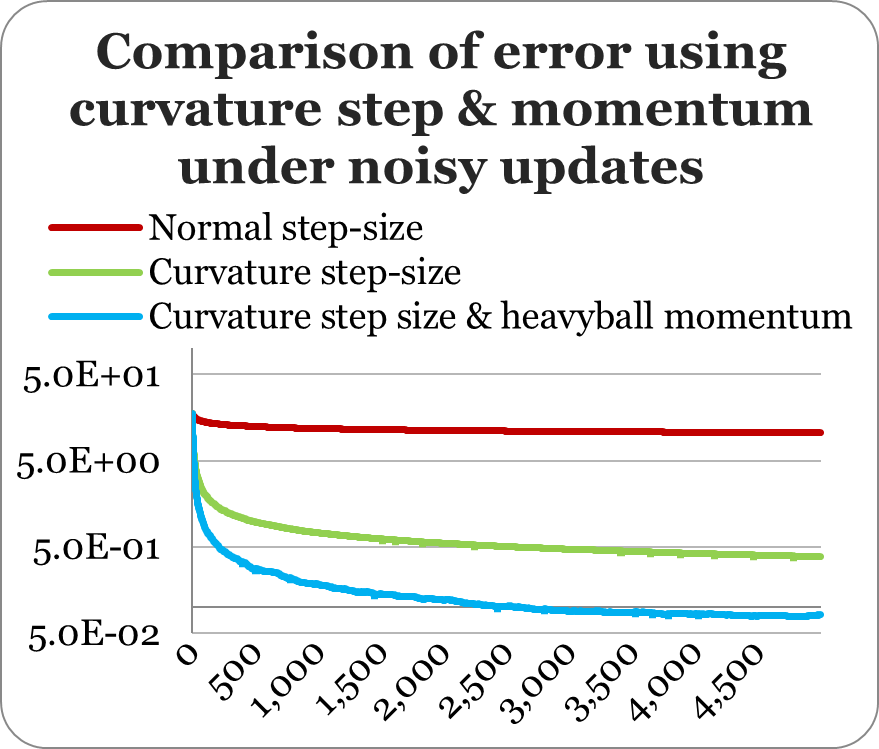}
			\caption{Comparison of error (distance from $w^*$) for plain Total Projections, with Curvature Step Size and Momentum. The figure illustrates the efficacy of our curvature-step size with momentum}
			\label{fig:Comparison Simulation}
		\end{subfigure}
		\vspace{-0.15in}
	\end{figure}

	\vspace{-0.in}
	\section{EXPERIMENTS}
	
	\vspace{-0.12in}
	We carried out simulations for systems with 25 states and 10 features (m=25,n=10) in the presence of noise to see efficacy of our proposed algorithm. We carry out two experiments. The first is to determine the momentum method to be used with our curvature step size method. The second experiment is to compare the efficacy of using a normal step size, using curvature step size with no momentum, and using curvature step size with (Polyak's) heavyball momentum.
	We outline these experiments below.
	
	
	\label{section: Choice of Momentum in Total Projections}
	\vspace{-0.05in}
	\textbf{Momentum Method Used:} 
	Of the various momentum optimization methods used in gradient descent algorithms \cite{ruder2017overview}, our comparisons (\hyperref[fig:Comparison of Momentums]{figure \ref{fig:Comparison of Momentums}}) showed Heavy Ball momentum with $\beta = 0.5$ works best (reasons in \hyperref[appendixSection: Choice of constant momentum multiplier]{appendix \ref{appendixSection: Choice of constant momentum multiplier}}). We use this for our step size sequence. We notice that the decrease in error using some of the momentum methods is not monotonic, meaning that there could be bad updates that are amplified by the momentum method used. In this sense, the heavy-ball momentum is conservative, and ensures convergence so long as the original iterates converge, even in the presence of noise.
	We next look at whether using the heavy-ball momentum so chosen, we get better convergence rates than without using momentum, in the noisy setting.
	
	\vspace{-0.05in}
	\textbf{Advantage of using curvature step and momentum with noisy updates:} In \hyperref[fig:Comparison Simulation]{Figure \ref{fig:Comparison Simulation},} we compare convergence using (1) No Curvature Step size (2) Curvature Step and (3) Momentum. This is for the setting with $m=25,n=10$. We notice that the setting with curvature-step and heavy-ball momentum (with $\beta=0.5$) works best. 
	
	\vspace{-0.02in}
	\begin{rem}
		Our experiments show that the setting without curvature step has very poor convergence rates. This is in-line with the convergence rate of the Randomized Kaczmarz algorithm \citep{strohmerVershynin} which is inversely proportional to the square of the condition number of the linear system. 
		Our experiments show that the curvature step reduces the dependence of convergence rate on the condition number (see figure \ref{fig:Evidence for Curvature based Step size}). We further note this reduced dependence continues in the noisy setting as well (see figure \ref{fig:Comparison Simulation}).
	\end{rem}
	
	\vspace{-0.15in}
	\section{CONCLUSIONS AND DISCUSSION}
	
	\vspace{-0.12in}
	In this work, we presented a scale-invariant version of the popular Monte Carlo algorithm for reinforcement learning. We gave a rationale for why Least Squares criterion fails in many instances, and the feature-scaled version should be used in the linear-function approximation setting. We then proposed a novel adaptive step size sequence based on the curvature of the path of convergence of the iterate $w_k$. We provided a convergence proof for this algorithm in the presence of momentum. Finally we experimentally validated our algorithm through simulations and showed that in the presence of noise we have significant speedups over the regular algorithm. Without noise, our algorithm in fact has exponentially faster convergence than the usual stochastic gradient update rule. 
	
	A possible extension of our work would be to use the proposed step size in the context of non-linear, non-convex settings. Further, we believe there is merit in applying our scale-invariant algorithm
	(rather than some least-squares variant) in various other linear settings where we wish to give equal importance to all data points -- irrespective of norm.
	
	
	

	\clearpage

	\renewcommand\refname{REFERENCES}
	 
	\bibliography{References}
	\bibliographystyle{apalike}

	\clearpage

	
	\comment{
		\section*{CHECKLIST}

		\comment{
			The checklist follows the references.  Please
			read the checklist guidelines carefully for information on how to answer these
			questions.  For each question, change the default \answerTODO{} to \answerYes{},
			\answerNo{}, or \answerNA{}.  You are strongly encouraged to include a {\bf
				justification to your answer}, either by referencing the appropriate section of
			your paper or providing a brief inline description.  For example:
			\begin{itemize}
				\item Did you include the license to the code and datasets? \answerYes{See Section~\ref{gen_inst}.}
				\item Did you include the license to the code and datasets? \answerNo{The code and the data are proprietary.}
				\item Did you include the license to the code and datasets? \answerNA{}
			\end{itemize}
			Please do not modify the questions and only use the provided macros for your
			answers.  Note that the Checklist section does not count towards the page
			limit.  In your paper, please delete this instructions block and only keep the
			Checklist section heading above along with the questions/answers below.
		}
		
		\begin{enumerate}

			\item For all authors...
			\begin{enumerate}
				\item Do the main claims made in the abstract and introduction accurately reflect the paper's contributions and scope?
				\answerYes{}
				\item Did you describe the limitations of your work?
				\answerYes{}
				\item Did you discuss any potential negative societal impacts of your work?
				\answerYes{Our paper is theoretical in nature. Given this, we do not envisage any negative social impact for the work}
				\item Have you read the ethics review guidelines and ensured that your paper conforms to them?
				\answerYes{}
			\end{enumerate}

			\item If you are including theoretical results...
			\begin{enumerate}
				\item Did you state the full set of assumptions of all theoretical results?
				\answerYes{}
				\item Did you include complete proofs of all theoretical results?
				\answerYes{}
			\end{enumerate}

			\item If you ran experiments...
			\begin{enumerate}
				\item Did you include the code, data, and instructions needed to reproduce the main experimental results (either in the supplemental material or as a URL)?
				\answerYes{}
				\item Did you specify all the training details (e.g., data splits, hyperparameters, how they were chosen)?
				\answerNA{}
				\item Did you report error bars (e.g., with respect to the random seed after running experiments multiple times)?
				\answerYes{}
				\item Did you include the total amount of compute and the type of resources used (e.g., type of GPUs, internal cluster, or cloud provider)?
				\answerNo{We only worked on simulations with computations manageable on a single machine. The total computation requirements on a single machine was on the order of a few minutes.}
			\end{enumerate}

			\item If you are using existing assets (e.g., code, data, models) or curating/releasing new assets...
			\begin{enumerate}
				\item If your work uses existing assets, did you cite the creators?
				\answerNA{}
				\item Did you mention the license of the assets?
				\answerNA{}
				\item Did you include any new assets either in the supplemental material or as a URL?
				\answerNA{}
				\item Did you discuss whether and how consent was obtained from people whose data you're using/curating?
				\answerNA{}
				\item Did you discuss whether the data you are using/curating contains personally identifiable information or offensive content?
				\answerNA{}
			\end{enumerate}

			\item If you used crowdsourcing or conducted research with human subjects...
			\begin{enumerate}
				\item Did you include the full text of instructions given to participants and screenshots, if applicable?
				\answerNA{}
				\item Did you describe any potential participant risks, with links to Institutional Review Board (IRB) approvals, if applicable?
				\answerNA{}
				\item Did you include the estimated hourly wage paid to participants and the total amount spent on participant compensation?
				\answerNA{}
			\end{enumerate}

		\end{enumerate}
		
	}

	\clearpage
	
	\clearpage

	\appendix
	\appendixpage
	\addappheadtotoc

	\section{CONVERGENCE POINT OF THE MONTE CARLO - LEAST SQUARES SOLUTION}
	\label{appendixSection: Convergence Point of Monte Carlo}
	
	We now calculate the convergence point of the Monte Carlo algorithm. 
	The first visit Monte Carlo is an unbiased estimator for the value corresponding to states. Further, the updates under the Monte Carlo algorithm with linear function approximation correspond to a stochastic gradient descent on the least squares error function \cite{csaba,suttonBarto}.
	We will show here that the convergence point of the algorithm is given by $w^{**}=(\Phi^\tr \Phi)^{-1} (\Phi^\tr V)$
	\begin{propn}
		$w^{**}=(\Phi^\tr \Phi)^{-1} (\Phi^\tr V)$ and $V^{**} = \Phi (\Phi^\tr \Phi)^{-1} (\Phi^\tr V)$
	\end{propn}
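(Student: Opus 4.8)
The plan is to treat this as a standard least-squares problem and verify the normal-equations solution, then link it back to the Monte Carlo iterates using the facts already recalled in the surrounding text. Recall that the least-squares weight was \emph{defined} as $w^{L}=\argmin_{w}\sum_{i=1}^{m}(\phi_i^\tr w - V_i)^2 = \argmin_{w}\norm{\Phi w - V}_2^2$, so it suffices to minimise the quadratic $G(w)=(\Phi w - V)^\tr(\Phi w - V)$ and read off its unique minimiser.

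First I would compute the gradient and Hessian. Expanding gives $G(w)=w^\tr\Phi^\tr\Phi w - 2V^\tr\Phi w + V^\tr V$, so that $\nabla G(w)=2\Phi^\tr(\Phi w - V)$ and $\nabla^2 G(w)=2\Phi^\tr\Phi\succeq 0$. Hence $G$ is convex and every stationary point is a global minimiser; setting $\nabla G(w)=0$ yields the normal equations $\Phi^\tr\Phi\,w=\Phi^\tr V$. Next I would invoke the standing assumption that the system is overdetermined with $m\ge n$ and $\Phi$ of full column rank, so that $\Phi^\tr\Phi\in\R^{n\times n}$ is positive definite, hence invertible. Solving gives $w^{L}=(\Phi^\tr\Phi)^{-1}\Phi^\tr V$, and substituting into the approximation $V=\Phi w$ gives $V^{L}=\Phi w^{L}=\Phi(\Phi^\tr\Phi)^{-1}\Phi^\tr V$, which is precisely the orthogonal projection of $V$ onto the column space of $\Phi$. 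This disposes of both claims in the proposition.

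To connect this deterministic minimiser to the algorithm itself, I would use the two facts stated just above: first-visit Monte Carlo is an unbiased estimator of each $V_s$, and the linear-function-approximation update is a stochastic gradient descent step on $G$ \cite{csaba,suttonBarto}. Taking conditional expectations shows the expected update points along $-\Phi^\tr(\Phi w - V)=-\tfrac12\nabla G(w)$, so the iterates track the limiting ODE $\dot w = -\Phi^\tr(\Phi w - V)$, whose unique globally asymptotically stable equilibrium is exactly the normal-equations solution $w^{L}$.

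I expect the algebra to be entirely routine; the one point requiring genuine care is the probabilistic link, namely arguing that the sampling-induced state frequencies do not distort the stated (unweighted) minimiser and that unbiasedness of the first-visit estimator survives passing the conditional expectation through the gradient. I would handle this by appealing to the standard Monte Carlo convergence results of \cite{csaba,suttonBarto} rather than re-deriving them, keeping the proof focused on the closed-form identity that the proposition actually asserts.
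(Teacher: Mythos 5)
Your proposal is correct and follows essentially the same route as the paper: differentiate the unweighted least-squares objective, obtain the normal equations $\Phi^\tr\Phi\,w=\Phi^\tr V$, and invert (your explicit appeal to full column rank for invertibility is a point the paper leaves implicit). The additional material on the stochastic-gradient/ODE connection is not needed for the identity the proposition actually asserts, and the paper likewise relegates that link to the surrounding discussion rather than the proof.
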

	\begin{proof}
		
		\vspace{-0.15in}
		\begin{align}
			w^{**} = \argmin\limits_{w \in \R^n} \sum\limits_{s\in\S} (\phi_s^\tr w -V_s)^2
			\intertext{\vspace{-0.05in}Taking the derivative and setting it to 0 for the arg-min:}
			\frac{d}{dw} \left( \sum\limits_{s\in\S} (\phi_s^\tr w^{**} -V_s)^2 \right) &=0
			\intertext{\vspace{-0.05in}taking the derivative:}
			\sum\limits_{s\in\S} 2\phi_s (\phi_s^\tr w^{**} - V_s) &= 0\nonumber\\
			\left( \sum\limits_{s\in\S} \phi_s \phi_s^\tr\right) w^{**} &= \sum\limits_{s\in\S} V_s\phi_s 
			\intertext{$\left( \sum\limits_{s\in\S} \phi_s \phi_s^\tr\right) = \Phi\cdot \Phi^\tr$ and $\left( \sum\limits_{s\in\S} \phi_s V_s \right) = \Phi^\tr\cdot V$. Thus:}
			\Phi^\tr \Phi w^{**} &= \Phi^\tr V\nonumber\\
			\intertext{Then we have:}
			w^{**}&=(\Phi^\tr \Phi)^{-1} (\Phi^\tr V)\nonumber\\
			V^{**} = \Phi w^{**} &= \Phi(\Phi^\tr \Phi)^{-1} (\Phi^\tr V)
		\end{align}
	\end{proof}
	Thus in the case of Least Squares we have the solution given by $V^{**}=\Phi w^{**}=\Phi (\Phi^\tr \Phi)^{-1} (\Phi^\tr V)$
	
	In \hyperref[fig:Least Squares]{figure \ref{fig:Least Squares}}, we illustrate the $\R^m$ perspective of the least squares solution. The least squares solution is a projection onto the column space of $\Phi$. In other words, the solution is the point on the column space of $\Phi$, which is at least distance from V. Our claim is that such a solution may be unduly affected by rows which have large feature-norm. 
	
	For comparison, this solution can be compared with \hyperref[fig:TotalProj]{figure \ref{fig:TotalProj}}, where we illustrate in $\R^n$ why distances to hyperplanes might be a scale invariant solution, which is unaffected by the feature norms.
	
	\begin{figure}[ht]
		\vskip 0.2in
		\begin{center}
			\includegraphics[scale=0.52]{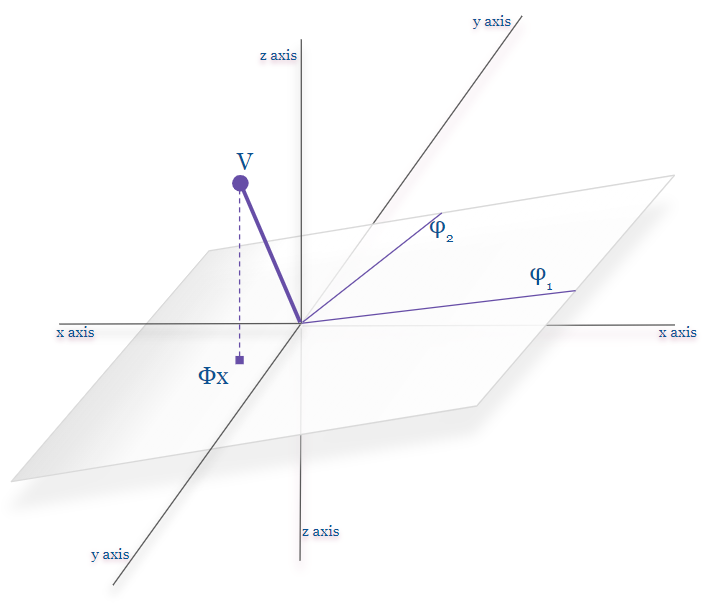}
			\caption{Illustration of the Least Squares solution as a projection}
			\label{fig:Least Squares}
		\end{center}
		\vskip -0.2in
	\end{figure}
	
	\newpage
	\section{DIFFERENCES BETWEEN OUR ALGORITHM AND THE KACZMARZ ALGORITHM}
	
	\label{appendix:Differences to traditional Kaczmarz Algorithm}
	Our algorithm is a variation on the Randomized Kaczmarz algorithm described in \cite{strohmerVershynin}. We note the major differences below
	\begin{enumerate}
		\item The Randomized Kaczmarz algorithm samples the hyperplanes with a probability proportional to the square of the feature-norm, viz $||\phi_s||_2^2$ \cite{strohmerVershynin}. This approach has been criticized in literature \cite{strohmer_comments_2009}. (In our own simulations, this sampling did not provide any benefits). To sample proportional to the feature-norm square of the states, one needs to know the features-norms of all states, which may not be possible
		
		\item In the RL context, obtaining all possible features ab-initio is difficult, and so is sampling as per feature-norm square.
		Natural sampling would be as per the stationary distribution of the ergodic Markov Chain and we allow for this.
		
		\item The original Randomized Kaczmarz method was meant for a fully determined $Ax=b$ system. Therefore, in the original setup, the iterates lie on hyperplanes onto which one projects.
		On the other hand, our iterates don't lie on any hyperplane. This makes it easier to identify the sequence of iterates with a gradient field (of our error function). 
		
		\item We obtain major speedups (up to a few orders of magnitude) over the regular Kaczmarz method due to our usage of \textit{momentum} and step size based on \textit{radius of osculatory-circle}.
		
		\item Our formulation makes the algorithm directly a gradient descent on the error function $\sum\limits_{s\in\S} d_s \left[\dfrac{\phi_s^\tr w - V_s}{||\phi_s||}\right]^2$ where $d_s$ are some positive weights corresponding the hyperplanes $\H_s \equiv \phi_s^\tr w - V_s$. For example, $d_s =\frac{1}{|\S|}\enspace\forall s\in\S$ may correspond to a uniform sampling. Another example is where $d_s = \pi_s$ where $\pi$ is the stationary distribution corresponding to the Transition Matrix of a Markov Chain. 
	\end{enumerate}

	\newpage
	\section{PROPERTIES OF THE TOTAL PROJECTION (TP) OPERATOR}
	\label{appendixSection:TP Algorithm non-stochastic properties}
	In this section, we will consider properties of the Total Projections operation $TP(w) = \dfrac{1}{|\S|} \slsins d_s\left[ \dfrac{\phi_s^\tr w - V_s}{2||\phi_s||_2^2}\right]\phi_s$ and the error function $G(w) = \dfrac{1}{|\S|} \slsins d_s\left[ \dfrac{\phi_s^\tr w - V_s}{2||\phi_s||_2}\right]^2$ such that $\slsins d_s \leq |\S|$ where $d_s$ are some positive weights attached to hyperplanes $\H_s\equiv \phi_s^\tr w - V_s$. 
	
	The properties shown below hold in general for positive weights $\{d_s\}_{s\in\S}$ as long as $\sum_{s\in\S}d_s \leq |\S|$. But it may be worthwhile to consider what these positive weights may be. One example set of weight is $d_s = 1\enspace \forall s\in\S$, which may be considered as uniform weights. Another weight set is $d_s = \pi_s |\S|$ where $\pi_s$ is the probability of occurrence of state $s$ in the stationary distribution, which will be of interest to us in our algorithms.

	We will now show the following properties in the section numbers given:
	
	\begin{enumerate}[itemsep=-0.05in,leftmargin=1in,label={C.\arabic*. }]
		\item $TP(\cdot) = \nabla_w G(\cdot)$
		\item $G(\cdot)$ is convex
		\item $G(\cdot)$ is strongly convex
		\item $\nabla G(\cdot)$ is a Lipschitz function
		\item $Hess\left( G(\cdot)\right)$ is bounded above
		\item The batch version of the Total Projections algorithm converges
		\item Conditions on the step size of the total projection algorithm
		\item Convergence Rate of the Total Projections Algorithm
	\end{enumerate}

	\subsection{Total Projection is a gradient descent on the error function}
	
	\begin{appendixpropn}
		\label{appendixpropn:total projections is a gradient descent on the error function}
		Let $$TP(w) =  \dfrac{1}{|\S|}\slsins d_s\left[ \dfrac{\phi_s^\tr w - V_s}{2||\phi_s||_2^2}\right]\phi_s$$
		
		Then $TP(w) = \nabla_{w} G(w)$
	\end{appendixpropn}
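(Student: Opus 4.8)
The plan is to compute $\nabla_w G$ directly via the chain rule and verify that it coincides term-by-term with $TP(w)$. Since $G(w) = \frac{1}{|\S|}\slsins d_s\left[\frac{\phi_s^\tr w - V_s}{2\|\phi_s\|_2}\right]^2$ is a finite sum over $s\in\S$ and the gradient operator is linear, I would differentiate each summand separately and then recombine. Treating $s$ as fixed, I first rewrite the $s$-th summand as $\frac{d_s}{|\S|}\cdot\frac{(\phi_s^\tr w - V_s)^2}{4\|\phi_s\|_2^2}$, observing that $\|\phi_s\|_2$ and $V_s$ carry no $w$-dependence, so the entire dependence on $w$ is confined to the affine scalar $g_s(w):=\phi_s^\tr w - V_s$.

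Next I would apply the chain rule to the square: $\nabla_w\big(g_s(w)\big)^2 = 2\,g_s(w)\,\nabla_w g_s(w)$. Since $g_s$ is affine with $\nabla_w(\phi_s^\tr w)=\phi_s$, we have $\nabla_w g_s(w)=\phi_s$, so the gradient of the $s$-th summand equals $\frac{d_s}{|\S|}\cdot\frac{2(\phi_s^\tr w - V_s)}{4\|\phi_s\|_2^2}\,\phi_s$. Simplifying the constant $\frac{2}{4}=\frac12$ gives $\frac{d_s}{|\S|}\cdot\frac{\phi_s^\tr w - V_s}{2\|\phi_s\|_2^2}\,\phi_s$, and summing over $s\in\S$ reproduces exactly the stated expression for $TP(w)$, establishing $TP(w)=\nabla_w G(w)$ for every $w$.

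The only point demanding care — the ``obstacle,'' such as it is — is the bookkeeping of the normalizing constants. In $G$ the denominator is $2\|\phi_s\|_2$, so squaring yields $4\|\phi_s\|_2^2$; differentiation then brings down a factor of $2$, leaving $2\|\phi_s\|_2^2$, which is precisely the denominator appearing in $TP$. Because $\|\phi_s\|_2$, $V_s$, and $d_s$ are all independent of $w$, no product-rule cross terms arise, so the computation is a clean chain-rule application and the identity holds identically in $w$.
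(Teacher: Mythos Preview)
Your proof is correct and follows exactly the same approach as the paper, which simply states that the result is obtained by differentiating $G(\cdot)$ with respect to $w$. You have merely written out in full the chain-rule computation that the paper leaves implicit, including the careful tracking of the constants $2$ and $4$ in the normalization.
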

	\begin{proof}
		We obtain this by just differentiating $G(\cdot)$ with respect to w
	\end{proof}
	
	\hyperref[fig:TotalProj]{Figure \ref{fig:TotalProj}} is an illustration of the convergence point of the Total Projections Algorithm. We have three hyperplanes in $\R^2$ and we attempting to find a $w$ such that $w$ is the point that minimizes the total sum of squares of distances to these hyperplanes. Note that hyperplanes are scale invariant in the sense, $\phi_s ^\tr w = V_s$ is the same hyperplane as $c \cdot\phi_s ^\tr w = c\cdot V_s$ for any arbitrary $c\in\R$. Thus our solution remains invariant under a multiplication of any row by a constant $c\in\R$
	
	\begin{figure}[ht]
		\vskip 0.2in
		\begin{center}
			\includegraphics[scale=0.45]{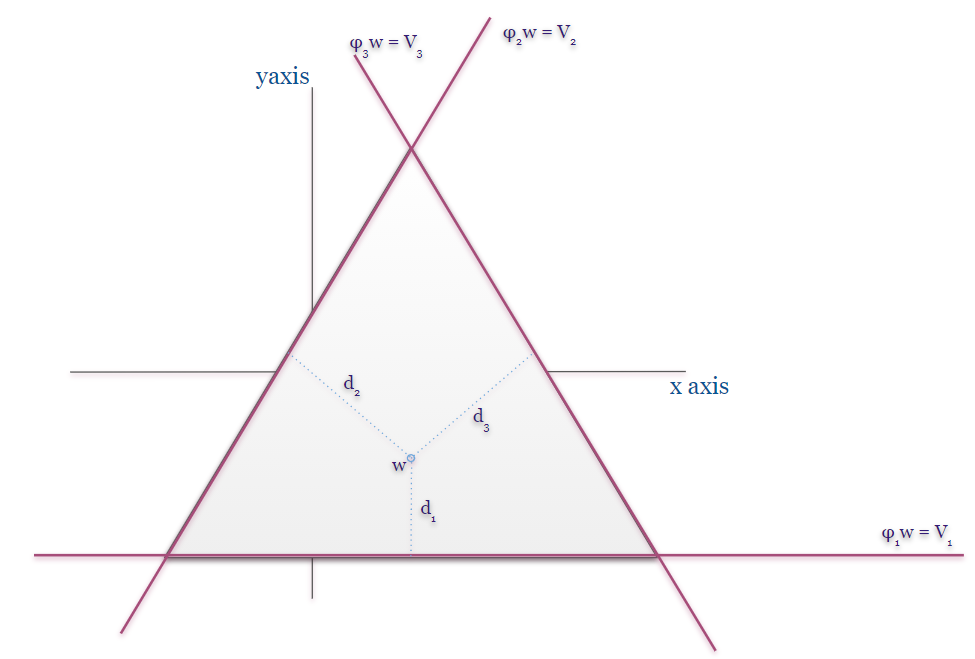}
			\caption{Illustration of the Point that minimizes the sum of squares of distances to hyperplanes}
			\label{fig:TotalProj}
		\end{center}
		\vskip -0.2in
	\end{figure}

	\subsection[The error function for Monte Carlo is convex]{$G(\cdot)$ is convex}
	
	In this subsection, we will show:
	\begin{enumerate}[itemsep=-0.05in,leftmargin=1in,label=(\alph*)]
		\item $G(w) =\dfrac{1}{|\S|} \slsins d_s\left[ \dfrac{\phi_s^\tr w - V_s}{2||\phi_s||_2}\right]^2$ is convex
		
		using:
		\item $\phi \phi^\tr$ is a positive semi definite matrix for all $\phi \in \R^n$
	\end{enumerate}
	
	\begin{appendixpropn}
		\label{appendixpropn:2norm error metric is convex}
		$G(\cdot)$ is convex in $\R^n$
	\end{appendixpropn}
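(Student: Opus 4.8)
The plan is to establish convexity via the Hessian: I would show that $G$ is twice continuously differentiable and that its Hessian is positive semidefinite everywhere on $\R^n$, which is equivalent to convexity for a $C^2$ function. This is also the natural route given that part (b) above flags the fact that $\phi\phi^\tr$ is positive semidefinite for every $\phi$, which is exactly what the Hessian computation will need.

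First I would compute the gradient. Since each summand is the square of the affine map $w \mapsto (\phi_s^\tr w - V_s)/(2\|\phi_s\|_2)$, differentiating under the finite sum gives
$$\nabla_w G(w) = \dfrac{1}{|\S|}\slsins d_s \dfrac{(\phi_s^\tr w - V_s)}{2\|\phi_s\|_2^2}\phi_s,$$
which is precisely $TP(w)$ from Proposition \ref{appendixpropn:total projections is a gradient descent on the error function}. Differentiating once more, the affine factor $\phi_s^\tr w - V_s$ contributes $\phi_s^\tr$ while the outer $\phi_s$ and the normalizer $\|\phi_s\|_2$ are constant in $w$, so
$$\nabla_w^2 G(w) = \dfrac{1}{|\S|}\slsins \dfrac{d_s}{2\|\phi_s\|_2^2}\,\phi_s\phi_s^\tr,$$
which is a fixed matrix independent of $w$.

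Next I would invoke fact (b): for any $x\in\R^n$ we have $x^\tr(\phi_s\phi_s^\tr)x = (\phi_s^\tr x)^2 \geq 0$, so each rank-one term $\phi_s\phi_s^\tr$ satisfies $\phi_s\phi_s^\tr \succeq 0$. Since every coefficient $d_s/(2\|\phi_s\|_2^2)$ is strictly positive (using $d_s>0$ and $\phi_s \neq 0$), the Hessian is a nonnegative linear combination of positive semidefinite matrices and hence is itself positive semidefinite. A twice-differentiable function with an everywhere positive semidefinite Hessian is convex, so $G$ is convex on $\R^n$.

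I do not anticipate any genuine obstacle here; the only care required is to treat the normalizer $\|\phi_s\|_2$ as a constant rather than a function of $w$ so the rank-one terms $\phi_s\phi_s^\tr$ emerge cleanly, and to keep the implicit standing assumption $\phi_s \neq 0$ so the division is well defined. An equivalent and even shorter route, if one prefers to avoid the Hessian entirely, is to observe that each term is the square of an affine function of $w$ — hence convex, being the composition of the convex function $t\mapsto t^2$ with an affine map — and that a nonnegative weighted sum of convex functions is convex.
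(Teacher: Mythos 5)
Your proof is correct and follows essentially the same route as the paper: compute the gradient, show the Hessian equals a nonnegative combination of the rank-one matrices $\phi_s\phi_s^\tr$ (each positive semidefinite), and conclude convexity from the Hessian criterion; the paper additionally spells out the quadratic-form check $w^\tr \mathrm{Hess}(G)w \geq 0$, which your argument subsumes. The only cosmetic difference is a factor of $2$ in the Hessian normalization, where your bookkeeping is actually the more consistent with the stated definition of $G$, and it has no bearing on positive semidefiniteness.
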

	
	\begin{proof}
		We have already seen in \hyperref[appendixpropn:total projections is a gradient descent on the error function]{Section \ref{appendixpropn:total projections is a gradient descent on the error function}} that $\nabla_{w} G(w) = \dfrac{1}{|\S|}\slsins d_s\left[ \dfrac{\phi_s^\tr w - V_s}{2||\phi_s||_2^2}\right]\phi_s$. Now we have 
		\begin{enumerate}
			\item $\R^n$ is a convex set
			\item $G(\cdot)$ is twice differentiable
		\end{enumerate}
		
		Thus it is sufficient to show that $Hess(G(\cdot))$ is positive semi-definite. 
		
		\begin{align*}
			\nabla_w \enspace G(w) & = \dfrac{1}{|\S|}\slsins d_s\left[ \dfrac{\phi_s^\tr w - V_s}{2||\phi_s||_2^2}\right]\phi_s
		\end{align*}
		
		Then,
		\begin{align*} 
			Hess(G(\cdot)) &= \nabla_w (\nabla_w \enspace G(w))\\
			&=\nabla_w\left(\dfrac{1}{|\S|} \slsins d_s\left[ \dfrac{\phi_s^\tr w - V_s}{2||\phi_s||_2^2}\right]\phi_s\right)\\
			&=\dfrac{1}{|\S|}\slsins d_s \nabla_w\left(  \dfrac{(\phi_s^\tr w  - V_s)\phi_s}{||\phi_s||_2^2} \right)\\
			&=\dfrac{1}{|\S|}\slsins d_s \left(  \dfrac{\nabla_w (\phi_s^\tr w  - V_s)\phi_s}{||\phi_s||_2^2} \right)\\
			&=\dfrac{1}{|\S|}\slsins d_s \left(  \dfrac{ \phi_s \nabla_w (\phi_s^\tr w)}{||\phi_s||_2^2} \right)\\
			&=\dfrac{1}{|\S|}\slsins d_s \left(  \dfrac{\phi_s \phi_s^\tr}{||\phi_s||_2^2} \right)
		\end{align*}
		By \hyperref[appendixpropn:phiphi.t is psd]{proposition \ref{appendixpropn:phiphi.t is psd}}, $Hess(G(\cdot)) $ is the sum of $|S|$ positive definite matrices, weighted by some positive coefficients $\dfrac{\pi_s}{|\S|\cdot||\phi_s||_2^2}$. Thus $Hess(G(\cdot))$ is positive semi definite. Thus $G(\cdot)$ is a convex function

		An alternate method to show $G(\cdot)$ is convex, would be to show that $w^\tr [Hess(G(w))] w\geq 0$. 	
		We showed earlier that $Hess(G(\cdot)) = \dfrac{1}{|\S|}\slsins d_s \left(  \dfrac{\phi_s \phi_s^\tr}{||\phi_s||_2^2} \right)$
		Let $w\in \R^n$. Then we have 
		\begin{align*} 
			w^\tr [Hess(G(w))] w  & = w^\tr \dfrac{1}{|\S|}\slsins d_s \left(  \dfrac{\phi_s \phi_s^\tr}{||\phi_s||_2^2} \right) w \tag*{where $\phi_s \neq \0 \quad \forall i$}\\
			& = \dfrac{1}{|\S|}\slsins d_s \left(  \dfrac{w^\tr \phi_s \phi_s^\tr w}{||\phi_s||_2^2} \right)\\
			& = \dfrac{1}{|\S|}\slsins d_s \left(  \dfrac{(\phi_s^\tr w)^\tr (\phi_s^\tr w)}{||\phi_s||_2^2} \right)\\
			& = \dfrac{1}{|\S|}\slsins d_s \left(  \dfrac{(y_i)^\tr (y_i)}{||\phi_s||_2^2} \right) \tag*{where $y_i = \phi_s^\tr w$}\\
			&= \dfrac{1}{|\S|}\slsins d_s \left(  \dfrac{||y_i||_2^2}{||\phi_s||_2^2} \right)\\
			&\geq 0 \tag*{$\forall w \in \R^n$}
		\end{align*}
		This shows that $G(\cdot)$ is convex.

	\end{proof}
	\begin{appendixpropn}
		\label{appendixpropn:phiphi.t is psd}
		$\phi \phi^\tr$ is a positive semi definite matrix for all $\phi \in \R^n$
	\end{appendixpropn}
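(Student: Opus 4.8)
The plan is to verify the two defining conditions of positive semidefiniteness for the rank-one matrix $\phi\phi^\tr$: symmetry, and nonnegativity of the associated quadratic form. Throughout, $\phi\in\R^n$ is treated as a column vector, so that $\phi\phi^\tr$ is an $n\times n$ matrix.

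First I would record symmetry, which is immediate from the transpose rule $(\phi\phi^\tr)^\tr = (\phi^\tr)^\tr\phi^\tr = \phi\phi^\tr$. Having fixed that $\phi\phi^\tr$ is a real symmetric matrix, the remaining task is to show $x^\tr(\phi\phi^\tr)x \geq 0$ for every $x\in\R^n$. The key observation is that $\phi^\tr x$ is a scalar, so we may regroup the product as
\begin{equation*}
    x^\tr\phi\phi^\tr x = (x^\tr\phi)(\phi^\tr x) = (\phi^\tr x)^2 \geq 0,
\end{equation*}
using that the square of a real number is nonnegative. This completes the verification.

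There is no genuine obstacle here; the statement is the elementary fact underpinning the Hessian computation in Proposition \ref{appendixpropn:2norm error metric is convex}, where each summand $\phi_s\phi_s^\tr/\|\phi_s\|_2^2$ must be shown positive semidefinite. The only point worth stating carefully is that the argument produces an exact square rather than merely a nonnegative expression, so the inequality is sharp and PSD (not positive definite) is the most one can claim, since $(\phi^\tr x)^2 = 0$ for any nonzero $x$ orthogonal to $\phi$ whenever $n\geq 2$.
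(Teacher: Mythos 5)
Your proposal is correct and follows essentially the same route as the paper: both verify $x^\tr\phi\phi^\tr x = (\phi^\tr x)^2 \geq 0$ (the paper writes the scalar as $y^\tr y = \|y\|_2^2$ with $y=\phi^\tr x$, which is the same computation). Your explicit remark on symmetry is a small, harmless addition that the paper omits.
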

	\begin{proof}
		Let $\Phi = \phi \phi^\tr$ (for this proposition). Then to prove $\Phi$ is psd, it is sufficient to show $w^\tr \Phi w \geq 0 \quad \forall w \in \R^n$
		
		\begin{align*} 
			w^\tr \Phi w &= w^\tr \phi \phi^\tr w\\
			&=(\phi^\tr w)^\tr (\phi^\tr w)\\
			&= y^\tr y \text{\qquad where $y = \phi^\tr w$}\\
			&= ||y||_2^2 \text{\qquad 2-norm squared of y}\\
			&\geq 0
		\end{align*}
		Thus $\Phi = \phi \phi^\tr$ is positive semi-definite for all $a \in \R^n$
	\end{proof}

	\subsection[The error function for Monte Carlo is strongly convex, and thereby strictly convex]{$G(\cdot)$ is strongly convex and thereby strictly convex}
	
	In this subsection, we will show:
	\begin{enumerate}[itemsep=-0.05in,leftmargin=1in,label=(\alph*)]
		\item $G(\cdot)$ is strongly convex when rank($\Phi$) = n
		\item $G(\cdot)$ is strictly convex when rank($\Phi$) = n
	\end{enumerate}

	\begin{appendixpropn}
		\label{appendixpropn:G is strongly convex}
		$G(\cdot)$ is strongly convex if $\Phi \in R^{m\times n}$ has rank n
	\end{appendixpropn}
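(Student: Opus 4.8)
The plan is to leverage the Hessian computation already carried out in the proof that $G(\cdot)$ is convex, where it was shown that $Hess(G(\cdot)) = \frac{1}{|\S|}\slsins d_s \frac{\phi_s \phi_s^\tr}{\|\phi_s\|_2^2}$. The crucial observation is that this Hessian is constant in $w$, so that strong convexity --- which for a twice-differentiable function is equivalent to the existence of a $\mu > 0$ with $Hess(G(w)) \succeq \mu I$ for all $w$ --- reduces to showing that the single matrix $H := Hess(G(\cdot))$ is strictly positive definite. Once that is established I would set $\mu = \lambda_{\min}(H) > 0$ and invoke the standard equivalence between $Hess(G) \succeq \mu I$ everywhere and $\mu$-strong convexity.

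To show $H \succ 0$, I would reuse the quadratic-form identity from Proposition \ref{appendixpropn:2norm error metric is convex}, namely $w^\tr H w = \frac{1}{|\S|}\slsins d_s \frac{(\phi_s^\tr w)^2}{\|\phi_s\|_2^2}$, which is already known to be $\geq 0$. The task is to upgrade this to a strict inequality for every $w \neq \0$. Because all the weights $d_s$ are strictly positive and every summand is nonnegative, the sum can vanish only if $\phi_s^\tr w = 0$ for every $s \in \S$, i.e. only if $\Phi w = \0$.

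This is exactly where the rank hypothesis enters, and it is the only non-routine step. Since $\Phi \in \R^{m \times n}$ has rank $n$, its null space is trivial (of dimension $n - n = 0$), so $\Phi w = \0$ forces $w = \0$. Contrapositively, $w \neq \0$ guarantees $\Phi w \neq \0$, hence at least one term $(\phi_s^\tr w)^2 / \|\phi_s\|_2^2$ is strictly positive and $w^\tr H w > 0$. Thus $H$ is positive definite, $\lambda_{\min}(H) > 0$, and $G$ is strongly convex with parameter $\mu = \lambda_{\min}(H)$.

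I expect the main (indeed only) subtlety to be the bookkeeping of the second paragraph: one must use that all $d_s > 0$ so that a vanishing weighted sum of nonnegative terms forces each term to vanish, and then translate ``$\phi_s^\tr w = 0$ for all $s$'' into the null-space statement $\Phi w = \0$ before applying the rank condition. Everything else follows directly from the already-established Hessian formula and, critically, from the fact that the Hessian does not depend on $w$, which lets a single positive-definiteness check deliver a uniform strong-convexity constant.
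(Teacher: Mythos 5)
Your proposal is correct and follows essentially the same route as the paper: both compute the (constant) Hessian $\dfrac{1}{|\S|}\slsins d_s \dfrac{\phi_s\phi_s^\tr}{\|\phi_s\|_2^2}$, observe the quadratic form is a nonnegative weighted sum of $(\phi_s^\tr w)^2/\|\phi_s\|_2^2$, and use the rank-$n$ hypothesis (trivial null space of $\Phi$, equivalently $\{\phi_s\}$ spanning $\R^n$) to rule out $\phi_s^\tr w = 0$ for all $s$ when $w \neq \0$, concluding $\lambda_{\min} > 0$. The only cosmetic difference is that the paper phrases the last step as a contradiction applied to a hypothetical zero-eigenvalue eigenvector, whereas you argue the contrapositive directly.
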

	\begin{proof}
		If $\Phi$ has rank n, then the vectors $\{\phi_s \}_{s\in\S}$ span $\R^n$. Then we have to show that if $\lambda_{min}$ is the least eigenvalue of $Hess(G(\cdot)) = \dfrac{1}{|\S|}\slsins d_s \left(  \dfrac{\phi_s \phi_s^\tr}{||\phi_s||_2^2} \right)$, then $\lambda_{min} >0$. We show this as follows:

		$\phi_s\phi_s^\tr$ is a rank 1 symmetric matrix. Symmetric matrices have real eigen values. Further, 
		\begin{align*}
			(\phi_s^\tr w)^2 &\geq 0\tag*{$\forall \phi_s,w\in \R^n$}\\
			(\phi_s^\tr w)(\phi_s^\tr w) &\geq 0\\
			(w^\tr \phi_s)(\phi_s^\tr w) &\geq 0\\
			w^\tr (\phi_s \phi_s^\tr) w &\geq 0\\
			w^\tr \left(\dfrac{\phi_s \phi_s^\tr}{||\phi_s||_2^2}\right) w &\geq 0\\
			\dfrac{1}{|\S|}\slsins d_s w^\tr \left(\dfrac{\phi_s \phi_s^\tr}{||\phi_s||_2^2}\right) w &\geq 0\\
			w^\tr \left(\dfrac{1}{|\S|}\slsins d_s \left(\dfrac{\phi_s \phi_s^\tr}{||\phi_s||_2^2}\right)\right) w &\geq 0\\
		\end{align*}
		
		This means the eigenvalues of $\dfrac{1}{|\S|}\slsins d_s \left(\dfrac{\phi_s \phi_s^\tr}{||\phi_s||_2^2}\right)$ are non-negative. It remains to be shown that the no eigenvalue is equal to 0. This is true as if some eigenvalue is equal to 0, then for the corresponding eigenvector, say $w$,
		\begin{align*}
			w^\tr \left(\dfrac{1}{|\S|}\slsins d_s \left(\dfrac{\phi_s \phi_s^\tr}{||\phi_s||_2^2}\right)\right) w &= 0\\
			\dfrac{1}{|\S|}\slsins d_s w^\tr \left(\dfrac{\phi_s \phi_s^\tr}{||\phi_s||_2^2}\right) w &= 0\\
			w^\tr \left(\dfrac{\phi_s \phi_s^\tr}{||\phi_s||_2^2}\right) w &= 0 \quad \forall \phi_s \in A\\
			w^\tr \phi_s \phi_s^\tr w &= 0 \quad \forall \phi_s \in A\\
			(\phi_s^\tr w)^2 &= 0 \quad \forall \phi_s \in A\\
			\phi_s^\tr w &= 0 \quad \forall \phi_s \in A\\
		\end{align*}
		But this is a contradiction as $\{\phi_s \}_{i=1}^n$ is a spanning set for $\R^n$. Thus minimum eigenvalue of $Hess(G(\cdot)) = \dfrac{1}{|\S|}\slsins d_s \left(  \dfrac{\phi_s \phi_s^\tr}{||\phi_s||_2^2} \right)$ is greater than 0.
		Thus by definition of strong convexity we have that $G(\cdot)$ is strongly convex when $\Phi \in \R^{m\times n}$ has rank n.

	\end{proof}
	\begin{appendixpropn}
		\label{appendixpropn: G is mu strongly convex}
		Let $\lambda_{min}$ be the least eigen value of $\dfrac{1}{|\S|}\slsins d_s \left(  \dfrac{\phi_s \phi_s^\tr}{||\phi_s||_2^2} \right)$. Then, 
		$G(\cdot)$ is $\mu-$strongly convex where $\mu = \lambda_{min}$ 
	\end{appendixpropn}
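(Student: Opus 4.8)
The plan is to invoke the standard second-order characterization of strong convexity: for a twice-differentiable function on the convex domain $\R^n$, $G$ is $\mu$-strongly convex exactly when $Hess(G(w)) - \mu I$ is positive semi-definite for every $w$. I would first recall the Hessian already computed in Proposition \ref{appendixpropn:2norm error metric is convex}, namely
\begin{equation*}
	Hess(G(w)) = \dfrac{1}{|\S|}\slsins d_s \left( \dfrac{\phi_s \phi_s^\tr}{||\phi_s||_2^2} \right),
\end{equation*}
and stress the structural fact that this matrix is independent of $w$. This holds because $G$ is a weighted sum of squared affine terms in $w$, hence a quadratic form, so its Hessian is a single fixed symmetric positive semi-definite matrix.

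Next, since $Hess(G(\cdot))$ is symmetric it admits an orthonormal eigenbasis with real eigenvalues, of which $\lambda_{min}$ is by definition the smallest. Expanding any $v \in \R^n$ in this eigenbasis yields at once $v^\tr Hess(G(\cdot)) v \geq \lambda_{min}\|v\|_2^2$, i.e. $Hess(G(\cdot)) - \lambda_{min} I$ is positive semi-definite. This is precisely the defining inequality for $\mu$-strong convexity with $\mu = \lambda_{min}$, so the claim follows directly.

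Alternatively, and perhaps more transparently, I would exploit that $G$ is exactly quadratic, so that Taylor's theorem holds with no remainder term: for all $x,y \in \R^n$,
\begin{equation*}
	G(y) = G(x) + \nabla G(x)^\tr (y - x) + \tfrac{1}{2}(y-x)^\tr Hess(G(\cdot)) (y-x).
\end{equation*}
Bounding the final term below by $\tfrac{1}{2}\lambda_{min}\|y-x\|_2^2$ through the eigenvalue estimate reproduces the strong-convexity inequality with parameter $\mu = \lambda_{min}$, which is the desired conclusion.

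I do not anticipate any genuine obstacle, since the argument is essentially a restatement of Proposition \ref{appendixpropn:G is strongly convex}: that result already established $\lambda_{min} > 0$ under the rank-$n$ hypothesis, so the only new content here is to identify the strong-convexity constant explicitly as $\lambda_{min}$. The single point to handle with care is that strong convexity with a strictly positive $\mu = \lambda_{min}$ genuinely requires $\mathrm{rank}(\Phi) = n$; if $\Phi$ is rank deficient then $\lambda_{min} = 0$ and the statement degenerates to the ordinary convexity already shown in Proposition \ref{appendixpropn:2norm error metric is convex}.
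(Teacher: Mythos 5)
Your proposal is correct and follows essentially the same route as the paper: both verify the second-order condition $w^\tr\left(Hess(G(\cdot))-\mu I\right)w \geq 0$ using the eigenvalue lower bound $w^\tr Hess(G(\cdot))\, w \geq \lambda_{min}\|w\|_2^2$, relying on Proposition \ref{appendixpropn:G is strongly convex} for $\lambda_{min}>0$ under the rank-$n$ hypothesis. If anything, your version is slightly tighter at the boundary: the paper's displayed chain only yields strict positivity for $\mu \in [0,\lambda_{min})$ before asserting $\mu=\lambda_{min}$, whereas your eigenbasis argument gives the required positive semi-definiteness of $Hess(G(\cdot))-\lambda_{min}I$ directly at $\mu=\lambda_{min}$.
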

	\begin{proof}
		Note that we can show $\mu$-strongly convex when we show the following.
		Consider
		\begin{align*}
			w^\tr (Hess(G(\cdot))-\mu I) w  & = w^\tr \left(\dfrac{1}{|\S|}\slsins d_s \left(  \dfrac{\phi_s \phi_s^\tr}{||\phi_s||_2^2} \right) -\mu I \right)w \tag*{where $\phi_s \neq \0 \quad \forall i$}\\
			& = w^\tr \left(\dfrac{1}{|\S|}\slsins d_s  \dfrac{\phi_s \phi_s^\tr}{||\phi_s||_2^2} \right)w -\mu w^\tr w \\
			\intertext{If $\lambda_{min}$ is the least eigenvalue of $\left(\dfrac{1}{|\S|}\slsins d_s  \dfrac{\phi_s \phi_s^\tr}{||\phi_s||_2^2} \right)$,  $\lambda_{min} > 0$ as we have already shown}
			w^\tr (Hess(G(\cdot))-\mu I) w & \geq \lambda_{min}w^\tr w - \mu w^\tr w\\
			&\geq (\lambda_{min} - \mu)||w|_2^2
			\intertext{Thus $\exists\ \mu \in [0,\lambda_{min}) $ such that}
			w^\tr (Hess(G(\cdot))-\mu I) w  &> 0
		\end{align*}
		Thus we see that $G(\cdot)$ is $\mu$ strongly convex where $\mu = \lambda_{min}$ 
	\end{proof}

	\begin{appendixpropn}
		\label{appendixpropn:G is strictly convex}
		$G(\cdot)$ is strictly convex
	\end{appendixpropn}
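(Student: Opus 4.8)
The plan is to obtain strict convexity as an immediate consequence of the strong convexity already established, exactly as the subsection heading (``strongly convex and thereby strictly convex'') suggests. Recall that a function $f$ on a convex domain is $\mu$-strongly convex precisely when $f(\lambda x + (1-\lambda)y) \le \lambda f(x) + (1-\lambda)f(y) - \frac{\mu}{2}\lambda(1-\lambda)\norm{x-y}_2^2$ holds for all $x,y$ and all $\lambda \in (0,1)$. When $\mu > 0$ and $x \ne y$, the final term is strictly positive, so the inequality becomes strict, which is exactly the defining inequality for strict convexity. Thus the first step is simply to invoke Proposition \ref{appendixpropn: G is mu strongly convex}, which supplies $\mu = \lambda_{min} > 0$ (under $\mathrm{rank}\,\Phi = n$), and to read off strict convexity.

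A second, self-contained route I would present as an alternative is the Hessian criterion. Having computed $Hess(G(\cdot)) = \frac{1}{|\S|}\slsins d_s \frac{\phi_s \phi_s^\tr}{\norm{\phi_s}_2^2}$ and having shown in Proposition \ref{appendixpropn:G is strongly convex} that its least eigenvalue is strictly positive whenever $\{\phi_s\}_{s\in\S}$ spans $\R^n$, I would appeal to the standard fact that a $C^2$ function whose Hessian is positive definite at every point of a convex set is strictly convex. Since the Hessian here is a constant positive-definite matrix, strict convexity on all of $\R^n$ follows directly, with no further computation.

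The only genuine subtlety worth flagging as the ``obstacle'' is the dependence on the full-rank hypothesis. If $\Phi$ fails to have rank $n$, then there is a nonzero $w$ with $\phi_s^\tr w = 0$ for every $s$, the least eigenvalue of the Hessian is $0$, and $G(\cdot)$ is then convex but \emph{not} strictly convex. Accordingly, I would state and prove the proposition under the same rank assumption carried through Propositions \ref{appendixpropn:G is strongly convex} and \ref{appendixpropn: G is mu strongly convex}, after which the argument collapses to the one-line implication that strong convexity with $\mu > 0$ forces strict convexity.
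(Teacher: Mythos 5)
Your proposal is correct and follows essentially the same route as the paper, whose entire proof is the one-line observation that strong convexity (Proposition \ref{appendixpropn: G is mu strongly convex}) implies strict convexity. Your added caveat about the full-rank hypothesis is well taken — the paper states the proposition without the rank-$n$ assumption even though its proof inherits that assumption from the strong-convexity propositions, so making the hypothesis explicit, as you do, is an improvement rather than a deviation.
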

	\begin{proof}
		Strict convexity is a subset of strong convexity. Thus G is strictly convex.
	\end{proof}

	\subsection[The gradient of the error function is Lipschitz]{$\nabla G(\cdot)$ is a Lipschitz function}
	\label{appendixSection:gradient of the error function is Lipschitz}
	
	In this subsection, we will show:
	\begin{enumerate}[itemsep=-0.05in,leftmargin=1in,label=(\alph*)]
		\item $\nabla_w G(\cdot)$ is Lipschitz continuous with Lipschitz constant equal to $1$
		
		using
		\item $\dfrac{\phi_s\phi_s^\tr}{||\phi_s||_2^2}$ has one eigenvalue 1 and rest $n-1$ eigenvalues 0.
	\end{enumerate}

	\begin{appendixpropn}
		\label{appendixpropn: max eigenval of phiiphii.tr by norm phi sq is 1}
		Let $\phi_s\neq \0$. Then, $\mathcal{V}_s = \dfrac{\phi_s\phi_s^\tr}{||\phi_s||_2^2}$ has one eigenvalue 1 and rest $n-1$ eigenvalues 0.
	\end{appendixpropn}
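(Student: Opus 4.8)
The plan is to exploit the rank-one structure of $\phi_s \phi_s^\tr$ and directly exhibit a complete set of eigenvectors of $\mathcal{V}_s = \phi_s\phi_s^\tr/\|\phi_s\|_2^2$, rather than compute the characteristic polynomial. First I would identify the nonzero eigenvalue by checking that $\phi_s$ itself is an eigenvector: since $\phi_s^\tr \phi_s = \|\phi_s\|_2^2$ and $\phi_s \neq \0$ guarantees the denominator is nonzero, we get $\mathcal{V}_s \phi_s = \phi_s (\phi_s^\tr \phi_s)/\|\phi_s\|_2^2 = \phi_s$. Hence $1$ is an eigenvalue with eigenvector $\phi_s$.

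Next I would account for the remaining directions via the orthogonal complement. For any $v \in \R^n$ with $\phi_s^\tr v = 0$, we have $\mathcal{V}_s v = \phi_s (\phi_s^\tr v)/\|\phi_s\|_2^2 = \0$, so every such $v$ is an eigenvector with eigenvalue $0$. Because $\phi_s \neq \0$, the subspace $\{v : \phi_s^\tr v = 0\}$ has dimension $n-1$, which furnishes $n-1$ linearly independent eigenvectors for the eigenvalue $0$. Appending $\phi_s$ to a basis of this complement yields $n$ linearly independent eigenvectors in total, so the listed eigenvalues $1, 0, \ldots, 0$ (one $1$ and $n-1$ zeros) exhaust the spectrum with multiplicity, which is the claim.

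There is no real obstacle here; the argument is elementary, so the only thing to guard against is a bookkeeping slip in the multiplicity count. To make the conclusion robust I would also record a quick independent cross-check using two matrix invariants: $\mathcal{V}_s$ is symmetric and therefore orthogonally diagonalizable with real eigenvalues, while $\mathrm{rank}(\phi_s \phi_s^\tr) = 1$ forces at most one nonzero eigenvalue, and $\mathrm{tr}(\mathcal{V}_s) = \mathrm{tr}(\phi_s\phi_s^\tr)/\|\phi_s\|_2^2 = \phi_s^\tr\phi_s/\|\phi_s\|_2^2 = 1$ pins the sum of eigenvalues to $1$. Together these independently confirm that the unique nonzero eigenvalue equals $1$ and the rest vanish, matching the direct eigenvector computation above.
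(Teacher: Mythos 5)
Your proof is correct and follows essentially the same route as the paper's: exhibit $\phi_s$ as an eigenvector with eigenvalue $1$, and show that the $(n-1)$-dimensional orthogonal complement of $\phi_s$ is annihilated by $\mathcal{V}_s$. Your version is in fact slightly more streamlined (you count dimensions of the orthogonal complement directly instead of first invoking symmetry to argue that the remaining eigenvectors must be orthogonal to $\phi_s$, as the paper does), and the trace/rank cross-check is a nice redundancy, but the underlying decomposition is identical.
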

	\begin{proof}
		\textbf{Claim 1: eigenvalues are real and $\mathcal{V}_s$ is p.s.d:} Let $\mathcal{V}_s = \dfrac{\phi_s\phi_s^\tr}{||\phi_s||_2^2}$. Then, $\mathcal{V}_s$ is symmetric thus has real eigen values (\cite{axler}). The second part follows from \hyperref[appendixpropn:phiphi.t is psd]{proposition \ref{appendixpropn:phiphi.t is psd}}
		
		\textbf{Claim 2: $\mathcal{V}_s$ is a rank 1 matrix:} We note that the rank of $v v^\tr$ = 1 for any $v\neq \0, v\in \R^n$. This is because the rank is the dimension of the column space of the matrix. Since the columns of $v v^\tr$ are all scalar multiples of $v$, rank is 1
		
		\textbf{Claim 3: $\dfrac{\phi_s}{||\phi_s||_2}$ is an eigenvector of $\dfrac{\phi_s\phi_s^\tr}{||\phi_s||_2^2}$ with eigenvalue 1:}
		Let $\nu_s = \dfrac{\phi_s}{||\phi_s||_2}$. Then let $\nu_s\nu_s^\tr = \mathcal{V}_s$.
		Then we have to show $\nu_s$ is an eigen vector of $\mathcal{V}_s = \nu_s\nu_s^\tr$. But this is easy to see. $\mathcal{V}_s \nu_s = \nu_s \nu_s^\tr\nu_s = \nu_s \frac{\phi_s^\tr\phi_s}{||\phi_s||_2^2} = \nu_s$. Thus $\nu_s$ is an eigen vector of $\mathcal{V}_s$ with eigen value 1
		
		\textbf{Claim 4: The other eigenvectors are orthogonal to eigenvector with eigenvalue 1}
		First we note that $\nu_s$ is the only eigenvector of $\mathcal{V}_s$ with eigen value 1. Then we show in general that in a real symmetric matrix, eigenvectors with distinct eigenvalues are orthogonal.
		
		Let $\nu_1$ and $\nu_2$ be two eigenvectors of $\mathcal{V}_s$ with \textit{distinct} eigenvalues $\mu_1$ and $\mu_2$. Then $\mathcal{V}_s\nu_1 = \mu_1 \nu_1$ and $\mathcal{V}_s\nu_2 = \mu_2 \nu_2$. Consider $\mu_1 \nu_2^\tr \nu_1$. This is equal to $ \nu_2^\tr (\mu_1\nu_1) = \nu_2^\tr \mathcal{V}_s \nu_1 = \nu_2^\tr \mathcal{V}_s^\tr \nu_1 = (\mathcal{V}_s\nu_2)^\tr \nu_1 = (\mu_2\nu_2)^\tr \nu_1 = \mu_2 \nu_2^\tr\nu_1$. Thus $\mu_1 \nu_2^\tr\nu_1 = \mu_2 \nu_2^\tr\nu_1$ for distinct $\mu_1,\mu_2$, implying that $\nu_2^\tr\nu_1=0$, or in other words $\nu_1$ and $\nu_2$ are orthogonal

		\textbf{Claim 5: eigenvalue 0 has a multiplicity of $n-1$:}
		It can be shown \cite{axler} that a rank 1 matrix has at most 1 non-zero eigenvalue and eigenvalue 0 with multiplicity $n-1$ as follows.
		
		First we note that there are n eigenvectors for $\mathcal{V}_s = \nu_s\nu_s^\tr$ in $\R^n$. We have found one eigenvector $\nu_s$ with eigenvalue 1. We have also shown that all other eigenvectors are orthogonal to $\nu_s$. Consider any eigenvector $\nu$ orthogonal to $\nu_s$. Then $\nu_s^\tr\nu = 0$. Now consider $\mathcal{V}_s \nu = (\nu_s \nu_s^\tr) \nu = \nu_s (\nu_s^\tr\nu) = \nu_s \cdot 0 = 0$. Thus for all $n-1$ eigenvectors orthogonal to $\nu_s$, eigenvalue is 0

		Thus $\dfrac{\phi_s\phi_s^\tr}{||\phi_s||_2^2}$ has eigenvalue 1 with multiplicity 1, and eigenvalue 0 with multiplicity $n-1$
		
	\end{proof}
	
	Now we are ready to show the Lipschitz property of $\nabla G(\cdot)$
	
	\begin{appendixpropn}
		\label{appendixpropn:grad G is Lipschitz continuous}
		$\nabla G(\cdot)$ is Lipschitz continuous
	\end{appendixpropn}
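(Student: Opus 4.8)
The plan is to exploit the fact that $G$ is \emph{quadratic} in $w$, so that its Hessian is a constant matrix and the Lipschitz constant of $\nabla G$ is exactly the operator norm of that fixed Hessian. First I would recall from Proposition \ref{appendixpropn:2norm error metric is convex} the computation
$$
Hess(G(\cdot)) = \dfrac{1}{|\S|}\slsins d_s\,\dfrac{\phi_s \phi_s^\tr}{\norm{\phi_s}_2^2},
$$
noting crucially that this expression carries no dependence on $w$. Consequently $\nabla G$ is an affine map, $\nabla G(w) = Hess(G)\,w + c$ for some constant vector $c\in\R^n$, and therefore for any $w_1,w_2\in\R^n$ we have $\nabla G(w_1)-\nabla G(w_2) = Hess(G)(w_1-w_2)$, whence
$$
\norm{\nabla G(w_1)-\nabla G(w_2)}_2 \;\le\; \norm{Hess(G)}_{op}\,\norm{w_1-w_2}_2 .
$$
This reduces the whole claim to bounding the operator norm of the Hessian.

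Next I would bound $\norm{Hess(G)}_{op}$ using the spectral facts already in hand. By Proposition \ref{appendixpropn: max eigenval of phiiphii.tr by norm phi sq is 1}, each matrix $\mathcal{V}_s = \phi_s\phi_s^\tr/\norm{\phi_s}_2^2$ is symmetric positive semidefinite with largest eigenvalue $1$, so its operator norm equals $1$. Applying subadditivity and positive homogeneity of the operator norm to the sum gives
$$
\norm{Hess(G)}_{op} = \Bigl\|\tfrac{1}{|\S|}\slsins d_s \mathcal{V}_s\Bigr\|_{op} \le \tfrac{1}{|\S|}\slsins d_s \norm{\mathcal{V}_s}_{op} = \tfrac{1}{|\S|}\slsins d_s \le 1,
$$
where the last inequality invokes the standing normalization $\slsins d_s \le |\S|$ assumed throughout this section. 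Combining this with the previous display yields $\norm{\nabla G(w_1)-\nabla G(w_2)}_2 \le \norm{w_1-w_2}_2$, establishing that $\nabla G$ is Lipschitz with constant $L = 1$.

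I do not expect a genuine obstacle here, precisely because the quadratic structure makes the Hessian constant; the only point requiring care is that the bound on a sum of PSD matrices rests on subadditivity of $\norm{\cdot}_{op}$ together with the weight constraint $\slsins d_s \le |\S|$, rather than on any eigenvalue-interlacing argument. Should one prefer not to lean on the affine form of $\nabla G$, the identical estimate follows from the mean value inequality $\nabla G(w_1)-\nabla G(w_2) = \int_0^1 Hess\bigl(G(w_2+t(w_1-w_2))\bigr)(w_1-w_2)\,dt$ and bounding the integrand pointwise by $\norm{Hess(G)}_{op}\,\norm{w_1-w_2}_2$; I would mention this alternative only in passing.
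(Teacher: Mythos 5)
Your proposal is correct and follows essentially the same route as the paper: the paper likewise computes $\nabla G(w)-\nabla G(y) = \left(\tfrac{1}{|\S|}\slsins d_s\,\phi_s\phi_s^\tr/\norm{\phi_s}_2^2\right)(w-y)$ (i.e., exploits the constant Hessian/affine gradient), invokes Proposition \ref{appendixpropn: max eigenval of phiiphii.tr by norm phi sq is 1} to bound each rank-one term by $1$, and uses $\slsins d_s \le |\S|$ to conclude $L \le 1$. Your write-up merely makes the subadditivity-of-operator-norm step explicit where the paper leaves it implicit.
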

	
	\begin{proof}
		We already showed that:
		\begin{align*} 
			\nabla G(w)  &= \dfrac{1}{|\S|}\slsins d_s \dfrac{(\phi_s^\tr w  - V_s)\phi_s}{||\phi_s||_2^2}\\
			\intertext{Then we have}
			\nabla G(w)-\nabla G(y)  &= \left( \dfrac{1}{|\S|}\slsins d_s \dfrac{(\phi_s^\tr w  - V_s)\phi_s}{||\phi_s||_2^2} \right) - \left(\dfrac{1}{|\S|}\slsins d_s \dfrac{(\phi_s^\tr y  - V_s)\phi_s}{||\phi_s||_2^2} \right)\\
			&= \left( \dfrac{1}{|\S|}\slsins d_s \dfrac{(\phi_s^\tr (w-y))\phi_s}{||\phi_s||_2^2} \right)\\
			&= \left( \dfrac{1}{|\S|}\slsins d_s \dfrac{\phi_s(\phi_s^\tr (w-y))}{||\phi_s||_2^2} \right)\\
			&= \left( \dfrac{1}{|\S|}\slsins d_s \dfrac{\phi_s \phi_s^\tr }{||\phi_s||_2^2} \right)(w-y)\\
			\intertext{Since max eigen value of $\dfrac{\phi_s \phi_s^\tr }{||\phi_s||_2^2}$ is 1}
			||\nabla G(w)-\nabla G(y)|| &\leq (\dfrac{1}{|\S|}\slsins d_s )||w-y||\\
			||\nabla G(w)-\nabla G(y)|| &\leq \dfrac{|\S|}{|\S|} ||w-y||=||w-y||
		\end{align*}
		Thus the function $G(\cdot)$ is Lipschitz where the Lipschitz constant, $L\leq1$

	\end{proof}
	
	\subsection[The Hessian of the error function is bounded above]{The Hessian of $G(\cdot)$ is bounded above}
	\label{appendixsection:Hessian of G is bounded above}
	In this subsection, we will show:
	\begin{enumerate}[itemsep=-0.05in,leftmargin=1in,label=(\alph*)]
		\item The Hessian of $G(\cdot)$ is bounded above, or 
		
		$Hess G(\cdot)\preceq I$\quad where $I$ is the identity matrix
	\end{enumerate}
	
	\begin{appendixpropn}
		\label{appendixpropn: Hessian of G is bounded above}
		$Hess G(\cdot)\preceq I$\quad where $I$ is the identity matrix
	\end{appendixpropn}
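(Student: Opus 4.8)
The plan is to read off the bound directly from the spectral information already established. Recall that the Hessian was computed as $\text{Hess}(G(\cdot)) = \frac{1}{|\S|}\slsins d_s \frac{\phi_s\phi_s^\tr}{\|\phi_s\|_2^2}$, so the claim $\text{Hess}\,G(\cdot) \preceq I$ amounts to controlling the largest eigenvalue of this weighted sum of rank-one projectors.

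First I would invoke Proposition \ref{appendixpropn: max eigenval of phiiphii.tr by norm phi sq is 1}, which shows each matrix $\frac{\phi_s\phi_s^\tr}{\|\phi_s\|_2^2}$ has all eigenvalues in $\{0,1\}$, with the eigenvalue $1$ of multiplicity one. In particular each such matrix is dominated by the identity in the Loewner order, i.e.\ $\frac{\phi_s\phi_s^\tr}{\|\phi_s\|_2^2} \preceq I$. Since the weights $d_s$ are positive, scaling preserves the order, giving $d_s\frac{\phi_s\phi_s^\tr}{\|\phi_s\|_2^2}\preceq d_s I$ for each $s$.

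Next I would sum over $s\in\S$ and use the standing assumption $\slsins d_s \leq |\S|$ from the setup of this section. Summation respects the Loewner order, so $\slsins d_s\frac{\phi_s\phi_s^\tr}{\|\phi_s\|_2^2}\preceq \left(\slsins d_s\right) I \preceq |\S|\, I$. Dividing through by the positive scalar $|\S|$ then yields exactly $\text{Hess}(G(\cdot)) \preceq I$.

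I do not expect a genuine obstacle here: the only things to verify are that the Loewner order is preserved under nonnegative scaling and under finite sums (both immediate from the definition via the quadratic forms $w^\tr(\cdot)w$), and that the weight constraint $\slsins d_s \leq |\S|$ is precisely what converts the crude bound $\left(\slsins d_s\right) I$ into $|\S|\,I$ so that the $1/|\S|$ prefactor cancels. If one prefers to avoid citing Loewner-order lemmas, the same conclusion follows by testing against an arbitrary $w\in\R^n$ and bounding $w^\tr \frac{\phi_s\phi_s^\tr}{\|\phi_s\|_2^2} w = (\phi_s^\tr w)^2/\|\phi_s\|_2^2 \leq \|w\|_2^2$ via Cauchy--Schwarz, then summing and rescaling as above.
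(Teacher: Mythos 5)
Your proposal is correct and follows essentially the same route as the paper: both reduce the claim to bounding the quadratic form $w^\tr\,Hess(G(w))\,w$, use the fact from Proposition \ref{appendixpropn: max eigenval of phiiphii.tr by norm phi sq is 1} that each $\phi_s\phi_s^\tr/\|\phi_s\|_2^2$ has maximum eigenvalue $1$, and then invoke $\slsins d_s \leq |\S|$ to cancel the $1/|\S|$ prefactor. Phrasing the intermediate steps via the Loewner order rather than directly through $w^\tr(\cdot)w$ is a cosmetic difference only.
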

	\begin{proof}
		The proposition is equivalent to showing $w^\tr Hess(G(w)) w \leq ||w||_2^2$
		
		Using $Hess(G(\cdot)) = \dfrac{1}{|\S|}\slsins d_s \left[ \dfrac{\phi_s \phi_s^\tr}{||\phi_s||_2^2} \right]$, for some $w\in \R^n$, we have 
		\begin{align*} 
			w^\tr Hess(G(w)) w  & = w^\tr \dfrac{1}{|\S|}\slsins d_s \left[  \dfrac{\phi_s \phi_s^\tr}{||\phi_s||_2^2} \right] w \tag*{where $\phi_s \neq \0 \quad \forall i$}\\
			& = \dfrac{1}{|\S|}\slsins d_s w^\tr \left[  \dfrac{ \phi_s \phi_s^\tr}{||\phi_s||_2^2} \right] w\\
			\intertext{Since the maximum eigenvalue of $\dfrac{ \phi_s \phi_s^\tr}{||\phi_s||_2^2} = 1$ by \hyperref[appendixpropn: max eigenval of phiiphii.tr by norm phi sq is 1]{proposition \ref{appendixpropn: max eigenval of phiiphii.tr by norm phi sq is 1}},}
			w^\tr Hess(G(w)) w &= \dfrac{1}{|\S|}\slsins d_s w^\tr \left(  \dfrac{ \phi_s \phi_s^\tr}{||\phi_s||_2^2} w \right)\\
			& \leq \dfrac{1}{|\S|}\slsins d_s w^\tr w \\
			&= \dfrac{1}{|\S|}\slsins d_s ||w||_2^2 \\
			&\leq ||w||_2^2 \\
		\end{align*}
	\end{proof}
	
	\subsection{The batch version of the Total Projections algorithm converges}
	Now we proceed to prove convergence of the batch version (non stochastic version) of the TP algorithm. We have already shown $G(\cdot)$ is convex. Thus, we know that it has a unique optimum point. Thus if our algorithm converges to $\textit{some}$ optimum, it is guaranteed that we will converge to the unique optimum.
	
	\begin{appendixpropn}
		\label{appendixpropn:Total Projections Batch version converges to w*}
		Let $w^*$ be the minimizer of $G(w) = \dfrac{1}{|\S|}\slsins d_s \left[\dfrac{|\phi_s^\tr w - V_s|^2}{2||\phi_s||_2^2}\right]$. Then if the sequence $\{w_1,w_2,w_3,\dots \}$ is obtained by successive total projection operations, starting from some arbitrary point $w_0 \in \R^n$, then $w_\infty = w^*$
	\end{appendixpropn}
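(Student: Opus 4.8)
The plan is to use the fact, already established in this section, that $G$ is an exact quadratic whose Hessian is constant and positive definite, so that a single batch total-projection step is an affine map and the error obeys a linear contraction.

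\textbf{Step 1 (the update is gradient descent on a quadratic).} By Proposition \ref{appendixpropn:total projections is a gradient descent on the error function}, $TP(w)=\nabla_w G(w)$, so the batch iteration is $w_{k+1}=w_k-\alpha\,TP(w_k)=w_k-\alpha\,\nabla G(w_k)$. Since each summand of $G$ is quadratic in $w$, its gradient is affine, namely
\[
\nabla G(w)=Hw-b,\qquad H:=Hess(G(\cdot)),
\]
where $H$ does not depend on $w$ and $b\in\R^n$ is fixed.

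\textbf{Step 2 (the optimum is unique).} Because $\Phi$ has rank $n$, Proposition \ref{appendixpropn:G is strongly convex} gives $H\succ 0$; hence $H$ is invertible and the first-order condition $\nabla G(w^*)=0$ has the single solution $w^*=H^{-1}b$, which by convexity (Proposition \ref{appendixpropn:2norm error metric is convex}) is the global minimizer.

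\textbf{Step 3 (error recursion and contraction).} Writing $e_k:=w_k-w^*$ and using $Hw^*=b$,
\[
e_{k+1}=w_k-\alpha(Hw_k-b)-w^*=(I-\alpha H)e_k,
\]
so $e_k=(I-\alpha H)^k e_0$. By Proposition \ref{appendixpropn: G is mu strongly convex} every eigenvalue of $H$ is at least $\mu=\lambda_{min}>0$, and by Proposition \ref{appendixpropn: Hessian of G is bounded above} every eigenvalue is at most $1$. Hence for any step size $\alpha\in(0,2)$ — in particular the plain choice $\alpha=1$ — the spectral radius of $I-\alpha H$ equals $\max(|1-\alpha\mu|,|1-\alpha\lambda_{max}|)<1$, so $\norm{e_k}\to 0$ geometrically and $w_\infty=w^*$.

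The step-by-step calculation is routine; the only real care is in Step 3, where the contraction must remain valid for the step sizes actually used. The cleanest way to guarantee this is to lean on the constant spectrum of $H$ (eigenvalues trapped in $(0,1]$), which makes $I-\alpha H$ an honest contraction, with the admissible range of $\alpha$ pinned down separately in the step-size discussion (D.7). It is worth emphasizing that $\mu>0$ — and thus convergence to a single point rather than to an affine subspace of minimizers — is exactly what the rank-$n$ hypothesis on $\Phi$ buys us.
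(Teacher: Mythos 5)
Your proof is correct, but it takes a genuinely different route from the paper's. The paper argues via a descent-lemma: it Taylor-expands $G$, uses the Hessian upper bound to get the sufficient-decrease inequality $G(w_{k+1}) \leq G(w_k) - (\alpha - \tfrac{\alpha^2}{2})\|\nabla G(w_k)\|_2^2$ for $\alpha \in (0,2)$, writes $G(w_{k+1}) - G(w^*) = \gamma_k\,(G(w_k)-G(w^*))$ with $\gamma_k < 1$, and concludes by asserting that the infinite product $\prod_k \gamma_k$ vanishes. You instead exploit the fact that $\nabla G$ is affine with constant Hessian $H$, so the batch step is the linear recursion $e_{k+1} = (I-\alpha H)e_k$, and the spectrum of $H$ being trapped in $[\lambda_{\min},1]$ with $\lambda_{\min}>0$ makes $I - \alpha H$ a uniform contraction for $\alpha\in(0,2)$. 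Your approach buys two things: it delivers the geometric rate immediately (the paper only recovers a rate separately in its later convergence-rate proposition, again via strong convexity), and it closes a gap the paper's argument leaves open — an infinite product of factors each strictly less than $1$ need not converge to $0$ unless the factors are bounded away from $1$, which is precisely what your spectral bound supplies and what the paper's $\gamma_k$ bookkeeping does not. The price is that your argument leans on the rank-$n$ hypothesis from the outset (to get $H \succ 0$ and a unique minimizer), whereas the paper's monotone-decrease portion works for merely convex $G$; but since the proposition asserts convergence to a unique $w^*$, that hypothesis is needed in either treatment. One small point to make explicit: your recursion assumes a fixed step size $\alpha$ across iterations, which matches the paper's intended batch scheme (with $\alpha_{OPT}=1$ fixed in the following subsection), but if the $\alpha_k$ were allowed to vary over $(0,2)$ and accumulate at $0$ or $2$, the product of contraction factors could again fail to vanish, so you would need $\alpha_k$ bounded away from the endpoints.
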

	\begin{proof}
		
		Consider the algorithm $w_{k+1} = w_k - \alpha_k (TP(w_k))$ where $TP(w_k) = \nabla_{w} G(w_k) = \dfrac{1}{|\S|}\slsins d_s \left[\dfrac{(\phi_s^\tr w_k - V_s)\phi_s}{||\phi_s||_2^2}\right]$ and $\alpha_k$ is some step size sequence. This is a gradient descent algorithm on $G(\cdot)$. 
		It has been proved in literature \cite{BoydCVO} that a (batch) gradient descent algorithm converges to the local minimizer. Since we have shown that $G(\cdot)$ is a convex function over a convex set, it has a single local minimizer, which is also the global optimum.

		We start with the second order Taylor series expansion of $G(\cdot)$ at some point $y \in \R^n$ in the neighborhood of $w\in \R^n$, and some z between $w$ and $y$, we have
		
		\begin{align*}
			G(y) &= G(w) + \nabla G(w)^\tr (y-w) + \frac{1}{2} (y-w)^\tr Hess(G(z))(y-w)\\
			\intertext{By \hyperref[appendixpropn: Hessian of G is bounded above]{proposition \ref{appendixpropn: Hessian of G is bounded above}}, $H(G(\cdot))$ is bounded above by $1$}
			G(y) &\leq G(w) + \nabla G(w)^\tr (y-w) + \frac{1}{2} ||y-w||_2^2\\
			\intertext{In gradient descent, we proceed in the opposite direction of the gradient. $\therefore y-w = -\alpha \nabla G(w)$ }
			G(y) &\leq G(w) -\alpha \nabla G(w)^\tr \nabla G(w) + \frac{1}{2} ||y-w||_2^2\\
			\intertext{Then, $||y-w||_2 = \alpha ||\nabla G(w)||_2$}
			G(y) &\leq G(w) -\alpha || \nabla G(w)||_2^2 + \frac{1}{2} (\alpha \nabla ||G(w)||_2)^2\\
			G(y) &\leq G(w) -(\alpha-\frac{\alpha^2}{2}) || \nabla G(w)||_2^2 
			\intertext{We want $\alpha-\dfrac{\alpha^2}{2} = \alpha\left(1-\dfrac{\alpha}{2}\right) > 0$\label{appendixeqn: alpha between 0 2/m}. Setting $\alpha-\dfrac{\alpha^2}{2} =c >0$ we get $\alpha \in \left(0,2\right)$:}
			G(y) &\leq G(w) - c || \nabla G(w)||_2^2 \tag{for some constant $c > 0$}\\
			G(y) &< G(w)
			\intertext{Since $TP(w) =\nabla G(w)$, we have $y = w - \alpha TP(w)$ where $\alpha \in(0,2)$}
			G(w - \alpha TP(w)) &< G(w)\\
			\intertext{If we label the successive iterates as $w_k$ and $w_{k+1}$, and the step size for the k'th step as $\alpha_k$:}
			G(w_{k+1}) &< G(w_k)\tag{for $\alpha_k \in \left(0,2\right)$}\\
			\intertext{Let $w^* = \argmin\limits_{w\in\R^n} G(w)$. Then:}
			G(w_{k+1}) - G(w^*) &< G(w_k) - G(w^*)\\
			\intertext{Then for some constant $\gamma_k<1, \gamma_{k}\in \R$:}
			G(w_{k+1}) - G(w^*) &= \gamma_k(G(w_k) - G(w^*))
			\intertext{Similarly, for some constant $\gamma_{k-1}<1, \gamma_{k-1}\in \R$:}
			G(w_{k+1}) - G(w^*) &= \gamma_{k-1}\gamma_k(G(w_{k-1}) - G(w^*))\\
			&= \dots\\
			G(w_{k+1}) - G(w^*) &= \left(\prod\limits_{i=0}^{k} \gamma_i \right)(G(w_0) - G(w^*))\tag{where $\gamma_i<1\quad \forall i \in \{0,\dots,k\}$}\\
			\intertext{Now we take the limit as $k\to \infty$}
			\lim\limits_{k\to\infty}\left(G(w_{k+1}) - G(w^*) \right)&= \lim\limits_{k\to\infty}\left( \left(\prod\limits_{i=0}^{k} \gamma_i \right)(G(w_0) - G(w^*))\right)\tag{where $\gamma_i<1\quad \forall i \in  \lim\limits_{k\to\infty} \{0,\dots,k\}$}\\
			G(w_\infty) - G(w^*)&= \left(\lim\limits_{k\to\infty} \prod\limits_{i=1}^{k} \gamma_i \right)(G(w_1) - G(w^*))\tag{where $\gamma_i<1\quad \forall i \in  \lim\limits_{k\to\infty} \{1,\dots,k\}$}\\
			\intertext{Since the product of infinite numbers less than 1 is 0, we have:}
			G(w_\infty) - G(w^*)&=0\\
			G(w_\infty) &=G(w^*)\\
			\intertext{Since $G(\cdot)$ is convex over $\R^n$, there $w^*$ is the unique minimizer}
			w_\infty &=w^*
		\end{align*}
	\end{proof}
	Thus we show convergence. To get rate of convergence, we need to make some assumptions about $\alpha$.
	\subsection{Conditions on the step size of the total projection algorithm}
	We showed in \hyperref[appendixpropn:Total Projections Batch version converges to w*]{proposition \ref{appendixpropn:Total Projections Batch version converges to w*}} that the batch version of Total Projections converges to the global optimum for $\alpha_k \in (0,2)$. Now we will study what is the ideal step size to take in this above range as part of the TP algorithm.
	
	\begin{appendixpropn}
		\label{appendixpropn:alpha is 1}
		The optimal step-size $\alpha_{OPT} = 1$
	\end{appendixpropn}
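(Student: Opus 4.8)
The plan is to build directly on the per-step descent bound already established in Proposition \ref{appendixpropn:Total Projections Batch version converges to w*}. There, using $Hess(G(\cdot)) \preceq I$ from Proposition \ref{appendixpropn: Hessian of G is bounded above}, the second-order Taylor expansion of $G(\cdot)$ gives, for the gradient step $y = w - \alpha\, TP(w) = w - \alpha\,\nabla G(w)$, the inequality
\[
G(y) \leq G(w) - \left(\alpha - \frac{\alpha^2}{2}\right)\,||\nabla G(w)||_2^2 .
\]
I would read the scalar factor $g(\alpha) := \alpha - \alpha^2/2$ as the guaranteed one-step decrease per unit of $||\nabla G(w)||_2^2$, so that the ``optimal'' step is precisely the one maximizing this guaranteed decrease.

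First I would treat $g(\alpha)$ as a one-dimensional concave quadratic on $(0,2)$ — the range over which descent ($G(w_{k+1}) < G(w_k)$) was shown in Proposition \ref{appendixpropn:Total Projections Batch version converges to w*} — and maximize it. Differentiating, $g'(\alpha) = 1 - \alpha$, which vanishes only at $\alpha = 1$; since $g''(\alpha) = -1 < 0$, this is the unique maximizer, attaining the largest descent-guarantee value $g(1) = 1/2$. Hence $\alpha_{OPT} = 1$. Equivalently, I would note that $\nabla G$ is $L$-Lipschitz with $L = 1$ (Proposition \ref{appendixpropn:grad G is Lipschitz continuous}, using the maximum-eigenvalue-$1$ fact of Proposition \ref{appendixpropn: max eigenval of phiiphii.tr by norm phi sq is 1}), so this is exactly the textbook optimal step $1/L$ for an $L$-smooth objective.

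The main obstacle is not a computation but a point of interpretation that must be argued carefully: justifying that maximizing the \emph{upper bound} $g(\alpha)$ is the correct notion of optimality. I would address this by observing that the bound is tight in the worst case precisely because the largest eigenvalue of $Hess(G(\cdot))$ equals $1$ — attained along the direction $\phi_s/||\phi_s||_2$ — so no larger guaranteed decrease is available for any fixed-$\alpha$ rule, and thus $\alpha = 1$ is optimal for the smoothness-based descent guarantee. As a complementary sanity check, I would remark that the worst-case contraction factor $\gamma_k$ appearing in the convergence argument of Proposition \ref{appendixpropn:Total Projections Batch version converges to w*} is minimized at the same value of $\alpha$, so the choice $\alpha_{OPT} = 1$ is also consistent with the fastest worst-case convergence rate.
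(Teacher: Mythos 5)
Your proposal is correct and follows essentially the same route as the paper: both start from the descent inequality $G(w_{k+1}) \leq G(w_k) - (\alpha - \alpha^2/2)\,\|\nabla G(w_k)\|_2^2$ and optimize the quadratic $\alpha - \alpha^2/2$ by setting its derivative to zero, yielding $\alpha_{OPT}=1$. Your added remarks on concavity, the tightness of the bound via the unit maximum eigenvalue of the Hessian, and the $1/L$ interpretation are sound supplements but do not change the argument.
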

	\begin{proof}
		We have already seen in \hyperref[appendixpropn:Total Projections Batch version converges to w*]{proposition \ref{appendixpropn:Total Projections Batch version converges to w*}} that for some $w_{k+1}$ in the neighborhood of $w_k$, we have
		\begin{equation}
			\label{appendixeqn: wk+1 wk inequality wrt alpha}
			G(w_{k+1}) \leq G(w_k) -(\alpha-\frac{\alpha^2}{2}) || \nabla G(w_k)||_2^2
		\end{equation}
		which is quadratic in $\alpha$. If we want to minimize the LHS, with respect to $\alpha$, we set the derivative of the RHS with respect to $\alpha$ to 0. Thus for an optimal alpha, viz. $\alpha_{OPT}$ we have:
		\begin{align}
			\nabla_\alpha G(w_{k+1}) &=0\nonumber\\
			\nabla_\alpha \left(G(w_k) -\left[\alpha_{OPT}-\frac{\alpha_{OPT}^2}{2}\right] || \nabla G(w_k)||_2^2 \right)&=0\nonumber\\
			\nabla_\alpha G(w_k) - \nabla_\alpha\left[\alpha_{OPT}-\frac{\alpha_{OPT}^2}{2}\right] || \nabla G(w_k)||_2^2 = 0\nonumber\\
			\intertext{Since $\nabla_\alpha G(w_k) = 0$ and $\nabla G(w_k)||_2^2$ is independent of $\alpha$}
			\nabla_\alpha\left[\alpha_{OPT}-\frac{\alpha_{OPT}^2}{2}\right] =0\nonumber\\
			\intertext{which leads to:}
			1 - \alpha_{OPT} &= 0\nonumber
			\intertext{Thus}
			\alpha_{OPT} = 1
		\end{align}
	\end{proof}
	In light of this, the stochastic update equation for the batch version of the TP algorithm is $$w_{k+1} = w_k-\left(\frac{1}{|\S|}\slsins d_s \left[ \dfrac{\phi_s^\tr w - V_s}{||\phi_s||^2}\right]\phi_s \right)$$

	\subsection{Convergence Rate of the Total Projections Algorithm}
	Now we are ready to show the exponential convergence rate for the Total Projections algorithm. We will now show the rate of convergence of the TP algorithm is exponential when $\Phi$ has full column rank using:
	\begin{enumerate}[itemsep=-0.05in,leftmargin=1in,label=(\alph*)]
		\item $G(w_{k+1}) - G(w^*)\leq G(w_k)- G(w^*) - \frac{1}{2} \left||\nabla G(w_k)|\right|_2^2$ 
		
		\item $||\nabla G(w_k)||_2^2  \geq \dfrac{2(G(w_k) -G(w^*))}{2-\lambda_{min}}$
		
	\end{enumerate}

	\begin{appendixpropn}
		\label{appendixpropn:TP has exponential rate of convergence}
		Rate of convergence of the TP algorithm is exponential when $\Phi$ has full column rank
	\end{appendixpropn}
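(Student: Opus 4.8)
The plan is to turn the two ingredients (b) and (c) into a single geometric contraction on the suboptimality gap $G(w_k) - G(w^*)$, and then to transfer that decay to the iterates. Throughout I take the optimal constant step $\alpha = 1$ (justified by Proposition \ref{appendixpropn:alpha is 1}), and I write $\lambda_{min}$ for the least eigenvalue of $Hess(G(\cdot))$, which is strictly positive precisely because $\Phi$ has full column rank (Proposition \ref{appendixpropn:G is strongly convex}).

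First I would establish the one-step descent inequality (b). Since $Hess(G(\cdot)) \preceq I$ by Proposition \ref{appendixpropn: Hessian of G is bounded above}, the gradient map is $1$-Lipschitz, so the descent lemma gives $G(w_{k+1}) \leq G(w_k) + \nabla G(w_k)^\tr (w_{k+1} - w_k) + \tfrac12 \norm{w_{k+1}-w_k}^2$. Substituting the update $w_{k+1} - w_k = -\nabla G(w_k)$ (recall $TP(\cdot) = \nabla G(\cdot)$ and $\alpha = 1$) collapses the right-hand side to $G(w_k) - \tfrac12 \norm{\nabla G(w_k)}^2$, and subtracting $G(w^*)$ yields (b).

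Next I would prove the gradient-domination (Polyak--Lojasiewicz) inequality (c). Because $G$ is $\lambda_{min}$-strongly convex (Proposition \ref{appendixpropn: G is mu strongly convex}), the quadratic lower bound $G(y) \geq G(w_k) + \nabla G(w_k)^\tr(y - w_k) + \tfrac{\lambda_{min}}{2}\norm{y - w_k}^2$ holds for every $y$. Minimizing both sides over $y$ — the left-hand minimum is $G(w^*)$, while the quadratic on the right is minimized at $y = w_k - \lambda_{min}^{-1}\nabla G(w_k)$ — gives $G(w^*) \geq G(w_k) - \tfrac{1}{2\lambda_{min}}\norm{\nabla G(w_k)}^2$, i.e. $\tfrac12\norm{\nabla G(w_k)}^2 \geq \lambda_{min}\,(G(w_k) - G(w^*))$. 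Here the hypothesis $\mathrm{rank}(\Phi) = n$ is essential: it is exactly what forces $\lambda_{min} > 0$, and without it this constant degenerates to $0$ and no geometric rate survives.

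Finally I would combine the two. Feeding (c) into (b) gives $G(w_{k+1}) - G(w^*) \leq (1 - \lambda_{min})\,(G(w_k) - G(w^*))$, and since $Hess(G(\cdot)) \preceq I$ forces $0 < \lambda_{min} \leq 1$, the factor $\rho := 1 - \lambda_{min}$ lies in $[0,1)$. Iterating yields $G(w_k) - G(w^*) \leq \rho^{k}\,(G(w_0) - G(w^*))$, a geometric (exponential) decay of the gap. To pass to the iterates I would invoke strong convexity once more as $G(w_k) - G(w^*) \geq \tfrac{\lambda_{min}}{2}\norm{w_k - w^*}^2$, so that $\norm{w_k - w^*}^2 \leq \tfrac{2}{\lambda_{min}}\rho^{k}(G(w_0) - G(w^*))$; together with Proposition \ref{appendixpropn:Total Projections Batch version converges to w*}, which already fixes the limit at $w^*$, this establishes the claimed exponential rate. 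The main obstacle is step (c): the descent inequality and the telescoping are routine, but identifying the sharp contraction constant and recognizing that it is the full-rank assumption which keeps $\lambda_{min}$, and hence $\rho$, bounded away from $1$ is the crux of the argument.
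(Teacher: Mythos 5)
Your proof is correct, and the overall skeleton (one-step descent inequality, then a gradient-domination bound, then telescoping) matches the paper's. Where you genuinely diverge is in how you obtain the gradient-domination step. The paper's Proposition \ref{appendixpropn: Bound of norm square of grad G(wk)} substitutes the gradient step $w_{k+1}-w_k=-\nabla G(w_k)$ into the strong-convexity lower bound and then tries to pass from $G(w_k)-G(w_{k+1})$ to $G(w_k)-G(w^*)$, arriving at $\|\nabla G(w_k)\|_2^2 \geq \tfrac{2}{2-\lambda_{min}}\bigl(G(w_k)-G(w^*)\bigr)$ and hence the contraction factor $\tfrac{1-\lambda_{min}}{2-\lambda_{min}}$; as written that passage relies on the claim $G(w_k)-G(w_{k+1}) > G(w_k)-G(w^*)$, which has the inequality pointing the wrong way since $G(w_{k+1})\geq G(w^*)$. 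You instead minimize both sides of the strong-convexity inequality over $y$, which is the standard Polyak--Lojasiewicz derivation; it is airtight, gives $\|\nabla G(w_k)\|_2^2 \geq 2\lambda_{min}\bigl(G(w_k)-G(w^*)\bigr)$, and yields the cleaner factor $1-\lambda_{min}\in[0,1)$ (guaranteed to be a genuine contraction because $Hess(G)\preceq I$ forces $\lambda_{min}\leq 1$). Your route therefore buys correctness of the key lemma at the cost of a nominally weaker constant than the one the paper states. You also add a final step the paper omits, converting the geometric decay of $G(w_k)-G(w^*)$ into geometric decay of $\|w_k-w^*\|$ via strong convexity, which strengthens the conclusion from a rate on function values to a rate on the iterates themselves.
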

	\begin{proof}
		
		Firstly, from \hyperref[appendixpropn: G wk+1 G wk inequality after substituting alpha]{proposition \ref{appendixpropn: G wk+1 G wk inequality after substituting alpha}}, we have:
		\begin{align}
			G(w_{k+1}) - G(w^*)&\leq G(w_k)- G(w^*) - \frac{1}{2} \left||\nabla G(w_k)|\right|_2^2\nonumber
			\intertext{Then from \hyperref[appendixpropn: Bound of norm square of grad G(wk)]{proposition \ref{appendixpropn: Bound of norm square of grad G(wk)}} we have:}
			||\nabla G(w_k)||_2^2  &\geq \frac{2(G(w_k) -G(w^*))}{2-\lambda_{min}}\tag{Note: $\lambda_{min}\leq\lambda_{max}\leq 1$}\nonumber
			\intertext{Combining, we get:}
			G(w_{k+1}) - G(w^*)&\leq G(w_k)- G(w^*) - \frac{1}{2}\frac{2(G(w_k) -G(w^*))}{2-\lambda_{min}}\nonumber\\
			&= [G(w_k)- G(w^*)]\left[1- \frac{1}{2-\lambda_{min}}\right]\nonumber\\
			&=[G(w_k)- G(w^*)] \left[\frac{1-\lambda_{min}}{2-\lambda_{min}}\right]\nonumber
			\intertext{We now can create a telescoping product. For successive iterates $\{w_1,\dots,w_k\}$:}
			G(w_{k+1}) - G(w^*)&\leq [G(w_k)- G(w^*)] \left[\frac{1-\lambda_{min}}{2-\lambda_{min}}\right]\nonumber\\
			&\leq [G(w_{k-1})- G(w^*)] \left[\frac{1-\lambda_{min}}{2-\lambda_{min}}\right]^2\nonumber\\
			&\leq \dots\nonumber\\
			&\leq [G(w_{0})- G(w^*)] \left[\frac{1-\lambda_{min}}{2-\lambda_{min}}\right]^{n+1}
		\end{align}
		
		Thus we have a Q-linear rate of convergence, also known as exponential rate of convergence
	\end{proof}
	
	\begin{appendixpropn}
		\label{appendixpropn: G wk+1 G wk inequality after substituting alpha}
		$G(w_{k+1}) - G(w^*)\leq G(w_k)- G(w^*) - \frac{1}{2} \left||\nabla G(w_k)|\right|_2^2$
	\end{appendixpropn}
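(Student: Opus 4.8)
The plan is to read this inequality off directly from the analysis already carried out for the batch algorithm, since all the real work was done in the earlier propositions. In the proof of \hyperref[appendixpropn:Total Projections Batch version converges to w*]{Proposition \ref{appendixpropn:Total Projections Batch version converges to w*}} the one-step descent estimate
\begin{equation*}
	G(w_{k+1}) \leq G(w_k) - \left(\alpha - \frac{\alpha^2}{2}\right) \|\nabla G(w_k)\|_2^2
\end{equation*}
was established (this is equation \eqref{appendixeqn: wk+1 wk inequality wrt alpha}), valid for a gradient step $w_{k+1} = w_k - \alpha \nabla G(w_k)$. It was obtained via the second-order Taylor expansion of $G$ together with the upper bound $Hess\,G(\cdot) \preceq I$ from \hyperref[appendixpropn: Hessian of G is bounded above]{Proposition \ref{appendixpropn: Hessian of G is bounded above}}. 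This is essentially the only ingredient needed.

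First I would invoke \hyperref[appendixpropn:alpha is 1]{Proposition \ref{appendixpropn:alpha is 1}}, which shows that the optimal step size is $\alpha = 1$ and which is the step the batch update actually takes. Substituting $\alpha = 1$ collapses the coefficient to $\alpha - \alpha^2/2 = 1 - 1/2 = 1/2$, giving
\begin{equation*}
	G(w_{k+1}) \leq G(w_k) - \frac{1}{2}\|\nabla G(w_k)\|_2^2.
\end{equation*}
Finally I would subtract the optimal value $G(w^*)$ from both sides, which turns the left-hand side into $G(w_{k+1}) - G(w^*)$ and the first right-hand term into $G(w_k) - G(w^*)$, leaving exactly the claimed bound.

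I expect no genuine obstacle here: the proposition is a restatement of the previously derived descent inequality specialised to the optimal step size, so its entire substance already sits in the Taylor-expansion/Hessian-bound step. The one point worth checking is consistency of notation—namely that the $\alpha = 1$ used here is indeed the step size in the update that produces the $w_{k+1}$ appearing in the statement—which is guaranteed by \hyperref[appendixpropn:alpha is 1]{Proposition \ref{appendixpropn:alpha is 1}} and the fact that this inequality is then fed telescopically into the convergence-rate argument of \hyperref[appendixpropn:TP has exponential rate of convergence]{Proposition \ref{appendixpropn:TP has exponential rate of convergence}}.
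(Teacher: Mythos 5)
Your proposal matches the paper's proof exactly: both cite the descent inequality $G(w_{k+1}) \leq G(w_k) - (\alpha - \alpha^2/2)\|\nabla G(w_k)\|_2^2$ from the earlier Taylor-expansion/Hessian-bound argument, substitute the optimal step size $\alpha = 1$ to get the coefficient $\tfrac{1}{2}$, and subtract $G(w^*)$ from both sides. Nothing further is needed.
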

	\begin{proof}
		From \hyperref[appendixeqn: wk+1 wk inequality wrt alpha]{equation \ref{appendixeqn: wk+1 wk inequality wrt alpha}} in \hyperref[appendixpropn:alpha is 1]{proposition \ref{appendixpropn:alpha is 1}}, we can see
		$G(w_{k+1}) \leq G(w_k) -(\alpha-\frac{\alpha^2}{2}) || \nabla G(w_k)||_2^2$. Substituting $\alpha = 1$ from \hyperref[appendixpropn:alpha is 1]{proposition \ref{appendixpropn:alpha is 1}}, we get 
		\begin{align*}
			G(w_{k+1}) &\leq G(w_k) -\frac{1}{2} || \nabla G(w_k)||_2^2\\
			\intertext{Then subtracting $G(w^*)$ from both sides:}
			G(w_{k+1}) - G(w^*)&\leq G(w_k)- G(w^*) - \frac{1}{2} \left||\nabla G(w_k)|\right|_2^2
		\end{align*}
	\end{proof}
	
	\begin{appendixpropn}
		\label{appendixpropn: Bound of norm square of grad G(wk)}
		$||\nabla G(w_k)||_2^2  \geq \dfrac{2}{2-\lambda_{min}}(G(w_k) -G(w^*))$
	\end{appendixpropn}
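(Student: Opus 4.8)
The plan is to exploit the fact that $G$ is a \emph{convex quadratic}. By Proposition \ref{appendixpropn:total projections is a gradient descent on the error function} and the Hessian computation, $Hess(G(\cdot)) = H := \frac{1}{|\S|}\slsins d_s \frac{\phi_s\phi_s^\tr}{\|\phi_s\|_2^2}$ is a constant symmetric positive (semi)definite matrix, independent of $w$. Hence $G$ equals its own second-order Taylor expansion about the minimizer $w^*$, so that
\[
G(w)-G(w^*) = \tfrac12\,(w-w^*)^\tr H\,(w-w^*), \qquad \nabla G(w) = H(w-w^*),
\]
the latter because $\nabla G(w^*)=0$. This reduces the proposition to a purely spectral inequality, which is the clean way to attack a lower bound of Polyak--{\L}ojasiewicz type.

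First I would diagonalize $H=\sum_i \lambda_i v_i v_i^\tr$ with orthonormal eigenvectors $v_i$ and eigenvalues $\lambda_i \ge \lambda_{min} > 0$; the strict positivity of $\lambda_{min}$ is exactly Propositions \ref{appendixpropn:G is strongly convex} and \ref{appendixpropn: G is mu strongly convex}, valid since $\Phi$ has full column rank. Writing $w_k-w^* = \sum_i c_i v_i$, the two quantities of interest become
\[
\|\nabla G(w_k)\|_2^2 = (w_k-w^*)^\tr H^2 (w_k-w^*) = \sum_i \lambda_i^2 c_i^2, \qquad 2\bigl(G(w_k)-G(w^*)\bigr) = \sum_i \lambda_i c_i^2 .
\]

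Then I would establish the bound term-by-term: since every eigenvalue satisfies $\lambda_i \ge \lambda_{min}$, we have $\lambda_i^2 c_i^2 \ge \lambda_{min}\,\lambda_i c_i^2$, and summing yields $\|\nabla G(w_k)\|_2^2 \ge 2\lambda_{min}\bigl(G(w_k)-G(w^*)\bigr)$. Equivalently, I could argue coordinate-free by minimizing the $\mu$-strong-convexity lower bound $G(y)\ge G(w_k)+\nabla G(w_k)^\tr(y-w_k)+\tfrac{\lambda_{min}}{2}\|y-w_k\|_2^2$ over $y$ (with $\mu=\lambda_{min}$ from Proposition \ref{appendixpropn: G is mu strongly convex}) and using $G(w^*)\le G(y)$ for all $y$, which gives the same constant.

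The main obstacle, and the step I would scrutinize most, is reconciling this with the exact constant $\tfrac{2}{2-\lambda_{min}}$ in the statement. Because $Hess(G(\cdot))\preceq I$ (Proposition \ref{appendixpropn: Hessian of G is bounded above}) forces $\lambda_{min}\le 1$, one checks $2\lambda_{min}\le \tfrac{2}{2-\lambda_{min}}$, equivalent to $(\lambda_{min}-1)^2\ge 0$; so the claimed constant is at least as large as what the spectral argument cleanly delivers, and the worst-case direction $w_k-w^*\parallel v_{\min}$ already saturates the $2\lambda_{min}$ bound. I would therefore either (i) carry the proof through with the attainable constant $2\lambda_{min}$, which still feeds Proposition \ref{appendixpropn:TP has exponential rate of convergence} and yields geometric convergence with ratio $(1-\lambda_{min})$, or (ii) trace precisely where the denominator $2-\lambda_{min}$ is meant to enter, checking whether the intended route passes through an upper estimate of the form $G(w_k)-G(w^*)\le \bigl(1-\tfrac{\lambda_{min}}{2}\bigr)\|\nabla G(w_k)\|_2^2$ and whether any additional structural restriction on $w_k-w^*$ is being assumed.
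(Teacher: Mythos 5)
Your spectral/P{\L}-type argument is sound and is genuinely different from the paper's route. The paper does not diagonalize; it writes the second-order Taylor lower bound $G(w_{k+1}) \geq G(w_k) + \nabla G(w_k)^\tr(w_{k+1}-w_k) + \frac{\lambda_{min}}{2}\|w_{k+1}-w_k\|_2^2$ at the specific point $w_{k+1} = w_k - \nabla G(w_k)$, obtaining $\left[\frac{2-\lambda_{min}}{2}\right]\|\nabla G(w_k)\|_2^2 \geq G(w_k) - G(w_{k+1})$, and then passes to $G(w_k)-G(w^*)$ by asserting $G(w_k) - G(w_{k+1}) > G(w_k) - G(w^*)$. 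That assertion is equivalent to $G(w_{k+1}) < G(w^*)$, which is impossible since $w^*$ is the global minimizer; the inequality runs the wrong way, so the paper's proof of the constant $\frac{2}{2-\lambda_{min}}$ does not go through. Your own worst-case check settles the matter: with $w_k - w^* \parallel v_{\min}$ the ratio $\|\nabla G(w_k)\|_2^2/(G(w_k)-G(w^*))$ equals exactly $2\lambda_{min}$, and $2\lambda_{min} < \frac{2}{2-\lambda_{min}}$ whenever $\lambda_{min}<1$, so the proposition as stated is false in general and no proof strategy can rescue the stated constant. Your option (i) is the correct repair: the attainable constant is $2\lambda_{min}$ (obtained either by your eigendecomposition or by minimizing the strong-convexity lower bound over $y$ and setting $y=w^*$), and substituting it into the downstream argument of Proposition \ref{appendixpropn:TP has exponential rate of convergence} still yields a geometric contraction, now with ratio $1-\lambda_{min} \in [0,1)$ rather than $\frac{1-\lambda_{min}}{2-\lambda_{min}}$, so the exponential-rate conclusion survives. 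In short: your approach is not only different but more careful, and it correctly exposes a genuine error in the paper's statement and proof.
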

	\begin{proof}
		From \hyperref[appendixpropn: G is mu strongly convex]{proposition \ref{appendixpropn: G is mu strongly convex}} we note that when $\Phi$ has full column rank, then $G(\cdot)$ is $\mu-$ strongly convex, with $\mu = \lambda_{min}$, where $\lambda_{min}$ is the least eigenvalue of $Hess(G(\cdot))$
		
		Let $w_{k+1}\in \R^n$ be some point in the neighborhood of $w_k\in\R^n$, and z be a point in the interval $[w_k,w_{k+1}]$. Then by second order Taylor series expansion,
		\begin{align*}
			G(w_{k+1}) &= G(w_k) +\nabla_w G(w_k)^\tr(w_{k+1}-w) + \frac{1}{2} (w_{k+1}-w_k)^\tr Hess(G(z))(w_{k+1}-w_k)
			\intertext{Since the Hessian is bounded below:}
			G(w_{k+1}) &\geq G(w_k) +\nabla_w G(w_k)^\tr(w_{k+1}-w_k) + \frac{1}{2} (w_{k+1}-w_k)^\tr \lambda_{min}(w_{k+1}-w_k)\\
			&= G(w_k) +\nabla_w G(w_k)^\tr(w_{k+1}-w_k) + \frac{\lambda_{min}}{2} ||w_{k+1}-w_k||_2^2
		\end{align*}
		But if $w_{k+1} = w_k - \nabla G(w_k)$ or $w_{k+1}-w_k = - \nabla G(w_k)$. Thus,
		\begin{align*}
			G(w_{k+1}) &\geq G(w_k) +\nabla_w G(w_k)^\tr(-\nabla G(w_k)) + \frac{\lambda_{min}}{2} \left|\left|- \nabla G(w_k)\right|\right|_2^2\\
			&= G(w_k) -||\nabla_w G(w_k)||_2^2 + \frac{\lambda_{min}}{2} ||\nabla G(w_k)||_2^2\\
			&= G(w_k) -\left[\frac{2-\lambda_{min}}{2}\right]||\nabla_w G(w_k)||_2^2 \\
			\left[\frac{2-\lambda_{min}}{2}\right]||\nabla G(w_k)||_2^2  &\geq G(w_k) -G(w_{k+1}) \\
			\intertext{But $G(w_k) -G(w_{k+1}) > G(w_k) -G(w^*)$. Thus }
			\left[\frac{2-\lambda_{min}}{2}\right]||\nabla G(w_k)||_2^2  &\geq G(w_k) -G(w^*)\\
			||\nabla G(w_k)||_2^2  &\geq \left[\frac{2}{2-\lambda_{min}}\right](G(w_k) -G(w^*))\tag{Note: $\lambda_{min}\leq\lambda_{max}\leq 1$}\\
		\end{align*}
	\end{proof}

	\section{CONVERGENCE OF SCALE INVARIANT MONTE CARLO WITHOUT MOMENTUM}
	\label{appendixSection:Convergence of Scale Invariant monte carlo without momentum}
	
	In this section, we will show the convergence point of the Scale Invariant Monte Carlo.
	
	\subsection{Notation and Problem Setup}
	Firstly note that under the linear function approximation regime, we are solving the overdetermined system $\Phi w = \widetilde{V}$ with $|\S| = m$ hyperplanes of the form $\H_s\equiv \phi_s^\tr w - \widetilde{V}_s = 0$. We know that the first visit Monte Carlo, is an unbiased estimator of the value function $V$ for each state. Thus $\widetilde{V}$ is an unbiased estimator of $V$
	
	The sampling of the hyperplanes is as per the stationary distribution of the transition matrix $\P$. The stationary distribution is given by $\pi$ with the probability of $\H_s$ given by $\pi_s$. We denote the diagonal matrix associated with $\pi$ as $D$. Finally, we define a normalization matrix $N$ where N is a diagonal matrix with $N_{(i,i)} = \frac{1}{||\phi(i)||_2}$

	Let us define $TP_k(\cdot):\R^n\mapsto\R^n$ to be the function that takes a point $w_k$ and gives us the shift in $w_k$ for the k'th iteration. 
	
	Thus $TP_k(w_k) = \dfrac{1}{\T}\sum\limits_{i=1}^\T \dfrac{\phi_i^\tr w_k - \widetilde{V}_i}{||\phi_i||^2}\phi_i$. Further, let us define $TP(\cdot)$ \hyperref[appendixSection:TP Algorithm non-stochastic properties]{(ref. section \ref{appendixSection:TP Algorithm non-stochastic properties})} as $TP(w_k) = \sum\limits_{s\in\S} \pi_s \dfrac{\phi_s^\tr w_k - \widetilde{V}_s}{||\phi_s||^2}\phi_s$
	
	$TP_k(w_k)$ then depends on the trajectory for the Monte Carlo. In other words, it depends on the set of hyperplanes sampled (which is random), where the number of hyperplanes sampled is also random. 
	We will assume that the stopping time $\tau$ is obtained by some independent random process \cite{Cinlar2011}. In other words, $\tau$ is an independent Random Variable. We make this assumption as if $\tau$ is dependent explicitly on landing at certain states in the Markov Chain, then we lose the stationarity of the distribution as all states will eventually reach the absorbing states.

	The limiting ODE that the stochastic update equation $w_{k+1} = w_k - \alpha_k TP_k(w_k)$ tracks is given by $\dot{w}(t) = h_{k+1}(w(t))$ where $h_{k+1}(w) = \E \left[TP_k(w_k)\big| \F_k\right]$ for the filtration $\F_k = \{w_0,\dots,w_k\}$.	
	Note that $\dot{w}(t) = h_{k+1}(w(t))$ is a well studied o.d.e which converges to the point where $\dot{w}(t) = 0$ \cite{Borkar}. Let us denote this point as $w^*$. Then the problem in this section is to find the point of convergence, $w^*$.
	
	\subsection{Putting the update equation in standard form:}
	\label{appendixSection: Update equation without momentum Standard Form}
	Consider the update equation $w_{k+1} = w_k - \alpha_k TP_k(w_k)$. Given the filtration $\F_k = \{w_0,\dots,w_k\}$, we wish to find $h_{k+1}(w)$. Let $\{1\dots\T\}$ be the set of unique hyperplanes sampled on the k'th run of trajectory. Then:
	
	\begin{equation*}
		h_{k+1}(w) = \E \left[TP_k(w_k)\big| \F_k\right] = \E \left[\frac{1}{\T}\sum\limits_{i=1}^\T \dfrac{\phi_i^\tr w_k - \widetilde{V}_i}{||\phi_i||^2}\phi_i\bigg|\F_k\right]
	\end{equation*}

	We note that $\T$, the set of hyperplanes $\{1\dots\T\}$ sampled, as well as $\widetilde{V}$ are all random variables. To simplify from the three random variables, first we write the above expression as an expectation over the conditional expectation given $\T$. Then:
	
	\begin{align}
		h_{k+1}(w) &= \E_{\T} \left[\E \left[\frac{1}{\T}\sum\limits_{i=1}^\T \dfrac{\phi_i^\tr w_k - \widetilde{V}_i}{||\phi_i||^2}\phi_i\bigg|\F_k,\T\right]\right]\nonumber
		\intertext{By linearity of expectation, we can take the expectation inside the brackets:}	
		&= \E_{\T} \left[ \frac{1}{\T}\sum\limits_{i=1}^\T \E \left[\dfrac{\phi_i^\tr w_k - \widetilde{V}_i}{||\phi_i||^2}\phi_i\bigg|\F_k,\T\right]\right]\label{appendixeqn:Double expectation with tau}
		\intertext{But any hyperplane $\H_i\equiv \phi_i^\tr w - V_i$ is chosen with probability equal to $\pi_i$ where $\pi$ is the stationary distribution. Thus the weights $d_s$ that we used in \hyperref[appendixSection:TP Algorithm non-stochastic properties]{Section \ref{appendixSection:TP Algorithm non-stochastic properties}} now take the form $d_s = \pi_s |\S|$. 		
			Thus $\forall i \in \{1,\dots,\T\}$, :}
		\E_{(i,\widetilde{V})} \left[\dfrac{\phi_i^\tr w_k - \widetilde{V}_i}{||\phi_i||^2}\phi_i\bigg| \F_k,\T\right] &= \dfrac{1}{|\S|}\slsins \pi_s |\S|\cdot  \E_{\widetilde{V}} \left[\dfrac{\phi_s^\tr w_k - \widetilde{V}_s}{||\phi_s||^2}\phi_s\right] \\
		&= \slsins \pi_s \left[\dfrac{\phi_s^\tr w_k - V_s}{||\phi_s||^2}\phi_s\right]\nonumber
		\intertext{Substuting this back in \eqref{appendixeqn:Double expectation with tau}, we get:}
		h_{k+1}(w) &=\E_{\T} \left[ \frac{1}{\T} \sum\limits_{i=1}^\T \left[\slsins \pi_s\dfrac{\phi_s^\tr w_k - V_s}{||\phi_s||^2}\phi_s\right] \right]\nonumber
		\intertext{Since each of the terms in the sum is the same:}
		&=\E_{\T} \left[  \slsins \pi_s\dfrac{\phi_s^\tr w_k - V_s}{||\phi_s||^2}\phi_s \right]\nonumber
		\intertext{Since each term inside is independent of $\tau$, the expectation stays the same. Thus:}
		h_{k+1}(w) &= \slsins \pi_s\dfrac{\phi_s^\tr w_k - V_s}{||\phi_s||^2}\phi_s \nonumber
		\intertext{But the RHS is simply $TP(w_k)$ for the Monte Carlo. Thus:}
		h_{k+1}(w) &= TP(w_k)
	\end{align}
	
	Since the function $h_k(\cdot)$ is constant for all $k$, i.e. $h_k(\cdot) = TP(\cdot)$, we can simply refer to this as $h(\cdot) = TP(\cdot)$.
	
	Now we are in a position to put our update equation in standard form. Let $M_{k+1} = TP_k(w_k) - \E[TP_k(w_k)\big|\F_k]$, then we can write the update rule as:
	
	\begin{align}
		w_{k+1} = w_k - \alpha_k(h(w_k)+M_{k+1})
	\end{align}
	where $\alpha_{k}$ is as defined in \hyperref[section:StepSizeSeq]{section \ref{section:StepSizeSeq}}. Further, $h(w_k) = TP(w_k)$ and $M_{k+1} = TP_k(w_k) - TP(w_k)$
	
	In the next section we will show that the four conditions required for convergence (\cite{Borkar}) are satisfied. In the section after that we will show the point it converges to.
	
	\subsection{Showing satisfaction of assumptions A1-A4 required for convergence}
	
	We need to show the following assumptions are satisfied:
	\begin{enumerate}
		\item The map $h(\cdot)$ is Lipschitz
		\item Step sizes $\{\alpha_k\}$ are positive scalars satisfying $\slktoinf \alpha_k = \infty$ and $\slktoinf \alpha_k^2 < \infty$
		\item $\{M_k\}$ is a martingale difference sequence with respect to the filtrations $\F_k$. 
		
		Further $\{M_k\}$ are square integrable with $\E \left[||M_{k+1}||^2\bigg|F_k\right] \leq K(1+||w_k||^2)$ a.s. for some positive constant K
		\item The iterates $\{w_k\}$ remain bounded almost surely		
	\end{enumerate}
	
	We will show these in order.
	
	\subsubsection[The error function with weights as per stationary distribution is Lipshitz]{The map $h(\cdot)$ is Lipschitz}
	\label{appendixSection: h is Lipschitz}
	In \hyperref[appendixSection:gradient of the error function is Lipschitz]{Appendix section \ref{appendixSection:gradient of the error function is Lipschitz}}, we showed that the TP update is Lipschitz for general weights $d_s$ as long as $\slsins d_s \leq |\S|$. Now we are considering the specific case where $d_s = \pi_s |\S|$. Since $\pi$ is a probability distribution (and therefore sums to 1), we satisfy $\slsins d_s \leq |\S|$. Thus $h(\cdot)$ is Lipschitz.
	
	\subsubsection[The step size sequence is not summable, but square summable]{The sequence $\{\alpha_k\}$ is square summable but not summable}
	\label{appendixSection: alpha without momentum is square summable not summable}
	\begin{appendixpropn}
		\label{appendixpropn:single iterate has proper step size sequence}
		The step size sequence $\{\alpha_k\}_{k=1}^\infty$ satisfies $\slktoinf \alpha_k = \infty$ and $\slktoinf\alpha_k^2 < \infty$
	\end{appendixpropn}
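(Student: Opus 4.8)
The plan is to unwind the definition of $\alpha_k$ from Section~\ref{section:StepSizeSeq} and reduce the claim to the elementary summability facts about $\eta_k=k^{-p}$. Recall that the effective step multiplying $TP_k(w_k)$ is $\alpha_k=\eta_k\,\norm{TP_k(w_k)}/\norm{\Delta TP_k(w_k)}$, so it suffices to prove that the ratio $r_k:=\norm{TP_k(w_k)}/\norm{\Delta TP_k(w_k)}$ is bounded above and below by \emph{strictly positive} constants that do not depend on $k$. Granting this, I would write $\alpha_k=\eta_k r_k$ and sandwich: $\sum_k\alpha_k\geq c_1\sum_k k^{-p}$ and $\sum_k\alpha_k^2\leq c_2^2\sum_k k^{-2p}$. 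Since $p\in(0.5,1]$ we have $p\leq 1$, so $\sum_k k^{-p}=\infty$, and $2p>1$, so $\sum_k k^{-2p}<\infty$; this is exactly the assertion $\sum_k\alpha_k=\infty$ and $\sum_k\alpha_k^2<\infty$.

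The first step is to compute $\Delta TP_k$ explicitly. Because $TP_k$ is the gradient of a quadratic it is affine, $TP_k(w)=A_kw-b_k$ with $A_k=\frac{1}{\T}\sum_{i=1}^{\T}\frac{\phi_i\phi_i^\tr}{\norm{\phi_i}^2}$ and $b_k=\frac{1}{\T}\sum_{i=1}^{\T}\frac{V_i\phi_i}{\norm{\phi_i}^2}$. A one-line substitution gives $TP_k\!\left(w_k-TP_k(w_k)\right)=TP_k(w_k)-A_kTP_k(w_k)$, hence $\Delta TP_k(w_k)=-A_kTP_k(w_k)$ and $r_k=\norm{TP_k(w_k)}/\norm{A_kTP_k(w_k)}$ (when $TP_k(w_k)\neq 0$; at a fixed point the update is trivial and the step is immaterial).

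Next I would control the spectrum of $A_k$. By Proposition~\ref{appendixpropn: max eigenval of phiiphii.tr by norm phi sq is 1} each summand $\phi_i\phi_i^\tr/\norm{\phi_i}^2$ is an orthogonal projection with eigenvalues in $\{0,1\}$, so $A_k$ is positive semidefinite with $\lambda_{max}(A_k)\leq 1$, giving $\norm{A_kTP_k(w_k)}\leq\norm{TP_k(w_k)}$ and thus $r_k\geq 1$. The observation that keeps the ratio finite is that $TP_k(w_k)=A_kw_k-b_k$ lies in $\mathrm{range}(A_k)$: the term $A_kw_k$ obviously does, and $b_k$ is a linear combination of the sampled $\phi_i$, so $b_k\in\mathrm{span}\{\phi_i\}=\mathrm{range}(A_k)$. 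On its range $A_k$ is bounded below by its smallest positive eigenvalue $\lambda_{min}^+(A_k)$, so $\norm{A_kTP_k(w_k)}\geq\lambda_{min}^+(A_k)\norm{TP_k(w_k)}$ and therefore $1\leq r_k\leq 1/\lambda_{min}^+(A_k)$.

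Finally I would make these bounds uniform in $k$. Since $|\S|=m$ is finite, the trajectory at step $k$ can only select one of finitely many subsets of hyperplanes, so $A_k$ ranges over a \emph{finite} set of matrices; taking the minimum of $\lambda_{min}^+$ over this finite set yields a constant $c>0$ with $\lambda_{min}^+(A_k)\geq c$ almost surely for all $k$, whence $1\leq r_k\leq 1/c=:c_2$ and $c_1:=1$. The main obstacle is precisely this two-sided control of $r_k$: the only way the argument could fail is a vanishing denominator $\norm{\Delta TP_k(w_k)}$, and what rules that out is the pairing of two facts -- that $TP_k(w_k)$ never has a component in the null space of $A_k$, and that only finitely many $A_k$ can occur, so no subsequence of $\lambda_{min}^+(A_k)$ can drift to $0$. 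Everything else is the routine comparison of $\sum k^{-p}$ and $\sum k^{-2p}$ on the range $p\in(0.5,1]$.
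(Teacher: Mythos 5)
Your proof is correct, and it reaches the same overall reduction as the paper -- writing $\alpha_k=\eta_k\,\norm{TP_k(w_k)}/\norm{\Delta TP_k(w_k)}$ and comparing against $\sum_k k^{-p}$ and $\sum_k k^{-2p}$ -- but the way you control the ratio is genuinely different and, in fact, tighter than what the paper does. The paper also observes that $\Delta TP_k(w_k)$ equals the matrix $\frac{1}{\T}\sum_i\phi_i\phi_i^\tr/\norm{\phi_i}^2$ applied to $TP_k(w_k)$, but then argues only that each individual term is nonzero almost surely (``the chance of $TP_k(w_k)$ being perpendicular to all the $\phi_i$ is almost surely $0$'') and that numerator and denominator are bounded above because the iterates are; it then passes to $\sup_k$ and $\inf_k$ of the ratio without justifying that these are finite and strictly positive \emph{uniformly} in $k$, which is the point that actually matters for summability, and it relegates the degenerate case to a footnote about thresholding. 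Your argument closes exactly that gap: by noting that $TP_k(w_k)=A_kw_k-b_k$ lies in $\mathrm{range}(A_k)$ (since $b_k$ is a combination of the sampled $\phi_i$) and that $A_k$ is an average of rank-one orthogonal projections, you get the deterministic two-sided bound $1\leq r_k\leq 1/\lambda_{min}^{+}(A_k)$, and the finiteness of the state space makes the set of possible matrices $A_k$ finite, so the bound is uniform over $k$ with no appeal to ``almost sure'' genericity of $w_k$. What the paper's route buys is brevity and a template that transfers verbatim to settings where the hyperplane set is not finite; what yours buys is an actual uniform constant, which is what the proposition needs. The only loose end in your write-up is the fixed-point case $TP_k(w_k)=0$, where $\alpha_k$ is formally $0/0$; a one-line convention (e.g.\ set $\alpha_k=\eta_k$ there, since the update is the zero vector regardless) keeps both series estimates intact, and the paper is no more careful on this point than you are.
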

	\begin{proof}\vspace{-0.1in}
		We provide the full proof for \hyperref[propn:MonteCarlo A2 step size sequence satisfies properties]{proposition \ref{propn:MonteCarlo A2 step size sequence satisfies properties}} as  follows.
		\begin{align*}
			\slktoinf \alpha_k &= \slktoinf \dfrac{\theta_k \eta_k}{||TP_k(w_k)||}\\ 
			&= \slktoinf \dfrac{\theta_k}{k^p\cdot||TP_k(w_k)||}\\
			\intertext{\vspace{-0.1in}Expanding $\theta_k = \dfrac{||TP_k(w_k)||^2}{||\Delta TP_k(w_k)||}$, we get:}
			&= \slktoinf \dfrac{||TP_k(w_k)||}{k^p\cdot||\Delta TP_k(w_k)||}
			\intertext{\vspace{-0.1in}let $\vartheta_k = \dfrac{||TP_k(w_k)||}{||\Delta TP_k(w_k)||}$. Then :}
			&= \slktoinf \dfrac{\vartheta_k}{k^p}
		\end{align*}
		
		\vspace{-0.1in}
		We first show the almost sure lower bounds on $TP_k(w_k)$ and $\Delta TP_k(w_k)$. Note that  $TP_k(w_k) = \frac{1}{\T}\sum\limits_{i=1}^\T \left[\frac{\phi_i^\tr w_k - \widetilde{V}_i}{||\phi_i||^2}\right]\phi_i = \frac{1}{\T}\sum\limits_{i=1}^\T \left[\frac{\phi_i\phi_i^\tr}{||\phi_i||^2}\right]w_k - \frac{1}{\T}\sum\limits_{i=1}^\T \left[\frac{ \widetilde{V}_i\phi_i}{||\phi_i||^2}\right]$ is almost surely not equal to 0 for random $w_k\in \R^n$. For $\Delta TP_k(w_k)$, we write:
		\begin{align*}
			\Delta TP_k(w_k) &= TP_k(w_k-TP_k(w_k))-TP_k(w_k)\\
			&= \frac{1}{\T}\sum\limits_{i=1}^\T \left[\frac{\phi_i^\tr (w_k-TP_k(w_k)) - \widetilde{V}_i}{||\phi_i||^2}\right]\phi_i - \frac{1}{\T}\sum\limits_{i=1}^\T \left[\frac{\phi_i^\tr w_k - \widetilde{V}_i}{||\phi_i||^2}\right]\phi_i\\
			&= -\frac{1}{\T}\left[\sum\limits_{i=1}^\T \dfrac{\phi_i\phi_i^\tr}{||\phi_i||^2}\right] TP_k(w_k)
		\end{align*}
		
		We firstly note that $TP_k(w_k)\neq 0$ almost surely as the iterate doesn't lie on the hyperplanes that $TP_k(\cdot)$ uses. (WLOG, if we do lie on the intersection of the hyperplanes, then we may choose other hyperplanes). Further, for any given vector $TP_k(w_k)$, the chance of $TP_k(w_k)$ being perpendicular to all the vectors$\{\phi_i\}_{i=1}^\T$ is almost surely 0. Thus $\Delta TP_k(w_k) \neq 0\enspace a.s.$. 
		
		For the upper bounds, we first note that the iterates $\{w_k\}$ are bounded a.s. as per \hyperref[propn:MonteCarloA4]{proposition \ref{propn:MonteCarloA4}} and \hyperref[appendixsubsection: A4 Stability Criterion]{Appendix \ref{appendixsubsection: A4 Stability Criterion}}. Then we further have that the estimates $\widetilde{V}$ are bounded by $R_{max}/1-\gamma$ where $R_{max}$ is the maximum reward on transitions and $\gamma$ is the discounting factor. Since, the iterates $\{w_k\}$ are bounded, $TP_k(w_k)$ and $\Delta TP_k(w_k)$ are upper bounded.

		Now by these statements, $\vartheta_k = \dfrac{||TP_k(w_k)||}{||\Delta TP_k(w_k)||}$ is upper and lower bounded almost surely.	\footnote{Note: In our simulations, such points where $||\Delta TP_k(w_k)|| \sim 0$ were never reached and iterates were stable even very close to the solution (see \hyperref[fig:Evidence for Curvature based Step size]{Figure \ref{fig:Evidence for Curvature based Step size}}). But to ensure algorithmic stability (given limited floating point precision), we can physically set the updates to not occur when $\Delta TP_k(w_k)$ is below a certain $\varepsilon$ (say $10^{-6}$) threshold.}
		
		Then, let $\overline{\vartheta} = \sup\limits_k \theta_k$ and $\underline{\vartheta} = \inf\limits_k \vartheta_k$. Then 
		\begin{align*}
			\slktoinf \alpha_k &= \sltoinf \dfrac{\vartheta_k}{k^p} \\
			&\geq \slktoinf \dfrac{\underline{\vartheta}}{k^p} \\
			&= \underline{\vartheta} \slktoinf \dfrac{1}{k^p} \\
			&= \underline{\vartheta}\times \infty \\
			&= \infty
		\end{align*}
		
		Similarly, 
		\begin{align*}
			\slktoinf \alpha_k^2 &=\slktoinf \vartheta_k^2 \eta_k^2 \\
			&\leq \overline{\vartheta} \slktoinf \eta_k^2 \\
			&= \overline{\vartheta} \slktoinf \frac{1}{k^{2p}}\\
			\intertext{Now since $\slktoinf \dfrac{1}{k^{2p}}$ is finite, and $\overline{\vartheta}$ is finite. Thus:}
			\slktoinf \alpha_k^2 &< \infty
		\end{align*}
	\end{proof}
	
	\subsubsection[Mk is a martingale difference sequence that is square integrable]{$\{M_k\}$ is a martingale difference sequence that is square integrable:}
	\label{appendixSection: Margingale difference sequence square integrable}
	We need to show that $\E[M_{k+1}\big|\F_k] = 0\enspace a.s.$ and $\E[|M_{k+1}||^2\big|\F_k ]\leq K(1+||w_k||^2)$ where $M_{k+1} = TP_k(w_k) - TP(w_k)$
	
	For the first part, we have that 
	\begin{align*}
		\E[M_{k+1}\big|\F_k] &= \E\left[TP_k(w_k) - TP(w_k)\big|\F_k\right]\\
		&= \E\left[TP_k(w_k)\big|\F_k\right] - \E\left[TP(w_k)\big|\F_k\right]\\
		\intertext{Then since $\E[TP(w_k)|\F_k] = TP(w_k)$, we get:}
		&= \E\left[TP_k(w_k)\big|\F_k\right] - TP(w_k)
		\intertext{But we already computed in \hyperref[appendixSection: Update equation without momentum Standard Form]{appendix \ref{appendixSection: Update equation without momentum Standard Form}} that $\E\left[TP_k(w_k)\big|\F_k\right] = TP(w_k)$. Thus:}
		&= TP(w_k) - TP(w_k)\\
		&=0
	\end{align*}
	
	For the second part, we write 
	\begin{align*}
		M_{k+1} &= TP_k(w_k) - TP(w_k)\\
		&= \frac{1}{\T}\sum\limits_{i=1}^\T \left[\frac{\phi_i^\tr w_k - \widetilde{V}_i}{||\phi_i||^2}\right]\phi_i - \slsins \pi_s \left[\frac{\phi_s^\tr w_k - V_s}{||\phi_s||^2}\right]\phi_s\\
		&= \left[\frac{1}{\T}\sum\limits_{i=1}^\T \frac{\phi_i\phi_i^\tr}{||\phi_i||^2} - \slsins\pi_s\frac{\phi_s\phi_s^\tr}{||\phi_s||^2}   \right]w_k - \left[\frac{1}{\T}\sum\limits_{i=1}^\T \frac{\widetilde{V}_i\phi_i}{||\phi_i||^2} - \slsins\pi_s\frac{V_s\phi_s}{||\phi_s||^2}   \right]
		\intertext{We can call $\left[\frac{1}{\T}\sum\limits_{i=1}^\T \frac{\phi_i\phi_i^\tr}{||\phi_i||^2} - \slsins\pi_s\frac{\phi_s\phi_s^\tr}{||\phi_s||^2}   \right]$ as A and $\left[\frac{1}{\T}\sum\limits_{i=1}^\T \frac{\widetilde{V}_i\phi_i}{||\phi_i||^2} - \slsins\pi_s\frac{V_s\phi_s}{||\phi_s||^2}\right]$ as b. Then:}
		M_{k+1} &= Aw_k - b
	\end{align*}
	Note that the eigenvalues of $A$ are bounded as each term $\dfrac{\phi_i\phi_i^\tr}{||\phi_i||^2}$ has a maximum eigenvalue of 1 as per \hyperref[appendixpropn: max eigenval of phiiphii.tr by norm phi sq is 1]{proposition \ref{appendixpropn: max eigenval of phiiphii.tr by norm phi sq is 1}}.  Similarly $b$ is bounded as $\widetilde{V}$ is bounded by $\dfrac{R_{max}}{1-\gamma}$ where $R_{max}$ is the maximum reward on transitions between states and $\gamma$ is the discounting factor.

	Now we see that $M_{k+1} = Aw_k - b$ is linear in $w_k$ with bounded coefficients. Thus $\E \left[||Aw_k - b||^2\big|\F_k\right]$ is quadratic in $w_k$. Now it's straightforward to see that there exists some constant $K$ such that $\E \left[||M_{k+1}||^2\big|\F_k\right] \leq K(1+||w_k||^2)$.

	\subsubsection{The iterates remain bounded almost surely}
	\label{appendixsection: iterates remain bounded almost surely no momentum}
	We have already shown this in \hyperref[propn:MonteCarloA4]{proposition \ref{propn:MonteCarloA4}}. We also provide a proof based on stability criterion from \cite{LakshmiNarayanBhatnagar} in \hyperref[appendixsubsection: A4 Stability Criterion]{appendix section \ref{appendixsubsection: A4 Stability Criterion}}
	
	Now that we satisfy conditions A1-A4  for iterate convergence \cite{Borkar} in sections \hyperref[appendixSection: h is Lipschitz]{\ref{appendixSection: h is Lipschitz}} to \hyperref[appendixsection: iterates remain bounded almost surely no momentum]{\ref{appendixsection: iterates remain bounded almost surely no momentum}},
	we know that the iterates will converge. It remains to be seen where it converges to, which we will cover in the next section.
	
	\subsection{Convergence point of the Scale Invariant Monte Carlo}
	\label{appendixsubsection:Convergence point of the Scale Invariant Monte Carlo}
	In this section we will show that:
	\begin{enumerate}[itemsep=0.05in,leftmargin=1in,label=(\alph*)]
		\item If $w^*$ is the point of convergence of the Scale Invariant Monte Carlo Algorithm, then \\
		$w^* = \left[\left(\Phi^\tr N D N \Phi\right)^{-1} \Phi^\tr N D N\right] V$
		
		using
		\item $\left(\slsins\pi_s \left[\dfrac{\phi_s\phi_s^\tr  }{||\phi_s||_2^2}\right]\right)w^*=\slsins\pi_s \left[\dfrac{\phi_s V_s}{||\phi_s||_2^2}\right]$
		
		\item $\slsins\pi_s \left[\dfrac{\phi_s\phi_s^\tr  }{||\phi_s||_2^2}\right] = \Phi^\tr N D N \Phi$
		\item $\slsins\pi_s \left[\dfrac{\phi_s V_s}{||\phi_s||_2^2}\right] = \Phi^\tr N D N V$
		
	\end{enumerate}
	
	\begin{appendixpropn}
		\label{appendixpropn: Derivation of w^*}
		The convergence point $w^*$ of our algorithm, which is the stable point of the o.d.e that our stochastic update equation tracks, satisfies the condition
		$$w^* = \left[\left(\Phi^\tr N D N \Phi\right)^{-1} \Phi^\tr N D N\right] V$$
	\end{appendixpropn}
	\begin{proof}
		Since we are looking for the point $w^*$ where $\expectation TP(w^*)=0$, from \hyperref[appendixpropn: Condition that stable point w^* satisfies]{proposition \ref{appendixpropn: Condition that stable point w^* satisfies}} we have:
		\begin{align}
			\left(\slsins\pi_s \left[\dfrac{\phi_s\phi_s^\tr  }{||\phi_s||_2^2}\right]\right)w^* &= \slsins\pi_s \left[\dfrac{\phi_s V_s}{||\phi_s||_2^2}\right]\nonumber
			\intertext{From \hyperref[appendixpropn: sum Pi phi V equals Phi tr N D N V]{proposition \ref{appendixpropn: sum Pi phi V equals Phi tr N D N V}}, we have that  $\slsins\pi_s \left[\dfrac{\phi_s V_s}{||\phi_s||_2^2}\right] = \Phi^\tr N D N V$. Thus:}
			\left(\slsins\pi_s \left[\dfrac{\phi_s\phi_s^\tr  }{||\phi_s||_2^2}\right]\right) w^* &= \Phi^\tr N D N V\nonumber
			\intertext{From \hyperref[appendixpropn: Pi phi phi tr equals phi N D N phi]{proposition \ref{appendixpropn: Pi phi phi tr equals phi N D N phi}}, we have that $\slsins\pi_s \left[\dfrac{\phi_s\phi_s^\tr  }{||\phi_s||_2^2}\right] = \Phi^\tr N D N \Phi$. Thus:}
			\Phi^\tr N D N \Phi w^* &= \Phi^\tr N D N V\nonumber
			\intertext{Multiplying by $\left[\Phi^\tr N D N \Phi\right]^{-1}$ on both sides:}
			\left(\Phi^\tr N D N \Phi\right)^{-1}\Phi^\tr N D N \Phi w^* &= \left(\Phi^\tr N D N \Phi\right)^{-1} \Phi^\tr N D N V\nonumber
			\intertext{To finally get:}
			w^* &= \left[\left(\Phi^\tr N D N \Phi\right)^{-1} \Phi^\tr N D N\right] V
		\end{align}
	\end{proof}
	
	\begin{appendixpropn}
		\label{appendixpropn: Condition that stable point w^* satisfies}
		The convergence point $w^*$ of our algorithm, which is the stable point of the o.d.e that our stochastic update equation tracks, satisfies the condition
		$$\left(\slsins\pi_s \left[\dfrac{\phi_s\phi_s^\tr  }{||\phi_s||_2^2}\right]\right)w^*=\slsins\pi_s \left[\dfrac{\phi_s V_s}{||\phi_s||_2^2}\right]$$
	\end{appendixpropn}
	\begin{proof}
		We are looking for the point where $h(w(t)) = 0$. In other words, we are looking for a point $w^*$ where $\expectation TP(w^*)=0$. Then we have:
		\begin{align*}
			\slsins \pi_s \frac{\phi_s^\tr w^* - V_s}{||\phi_s||^2}\phi_s &= 0\\
			\intertext{Which we can directly rewrite to:}
			\left(\slsins\pi_s \left[\dfrac{\phi_s\phi_s^\tr  }{||\phi_s||_2^2}\right]\right)w^* &= \slsins\pi_s \left[\dfrac{\phi_s V_s}{||\phi_s||_2^2}\right]
		\end{align*}
	\end{proof}
	\begin{appendixpropn}
		\label{appendixpropn: Pi phi phi tr equals phi N D N phi}
		$\slsins\pi_s \left[\dfrac{\phi_s\phi_s^\tr  }{||\phi_s||_2^2}\right] = \Phi^\tr N D N \Phi$
	\end{appendixpropn}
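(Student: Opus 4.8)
The plan is to reduce the right-hand side to a single weighted sum of rank-one outer products and then match it term-by-term against the left-hand side. First I would note that $N$ and $D$ are both $m\times m$ diagonal matrices, with $N_{ss} = 1/\|\phi_s\|_2$ and $D_{ss} = \pi_s$. Since a product of diagonal matrices is again diagonal, the composite $NDN$ is diagonal, and its $(s,s)$ entry is $\frac{1}{\|\phi_s\|_2}\cdot\pi_s\cdot\frac{1}{\|\phi_s\|_2} = \frac{\pi_s}{\|\phi_s\|_2^2}$.

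The only real content is the standard identity that for any diagonal matrix $M$ with diagonal entries $M_{ss}$, and $\Phi$ whose $s$-th row is $\phi_s^\tr$, one has $\Phi^\tr M \Phi = \slsins M_{ss}\,\phi_s\phi_s^\tr$. I would verify this entrywise: the $(i,j)$ entry of $\Phi^\tr M \Phi$ equals $\sum_s \Phi_{si}M_{ss}\Phi_{sj} = \slsins M_{ss}\,(\phi_s)_i(\phi_s)_j$, which is precisely the $(i,j)$ entry of $\slsins M_{ss}\,\phi_s\phi_s^\tr$.

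Setting $M = NDN$ and inserting the diagonal entries computed above yields $\Phi^\tr N D N \Phi = \slsins \frac{\pi_s}{\|\phi_s\|_2^2}\,\phi_s\phi_s^\tr$, which is exactly the left-hand side, completing the argument. There is no genuine obstacle here---the proof is pure matrix algebra---so the only thing requiring care is the bookkeeping: correctly tracking the row/column convention for $\Phi$ (rows indexed by states, so that $\Phi_{si} = (\phi_s)_i$) and confirming that the three diagonal factors collapse entrywise as claimed.
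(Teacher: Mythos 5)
Your proposal is correct and follows essentially the same route as the paper's proof: both compute the diagonal entries $[NDN]_{(s,s)} = \pi_s/\|\phi_s\|_2^2$ and then verify the identity $\Phi^\tr (NDN) \Phi = \slsins \frac{\pi_s}{\|\phi_s\|_2^2}\phi_s\phi_s^\tr$ by matching $(i,j)$ entries. No gaps; the bookkeeping on the row convention for $\Phi$ is exactly the point the paper also takes care of.
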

	\begin{proof}
		Note that the LHS and RHS are both matrices of size $n\times n$. We will show the equality explicitly for each (i,j)'th entry of this matrix.
		
		For the LHS, the entry at position (i,j) is given by $\slsins\pi_s \left[\dfrac{\phi_s(i)\phi_s(j)}{||\phi_s||_2^2}\right]$
		
		For the RHS, first note that $N D N$ is a diagonal matrix of size $|\S|\times |\S|$. The diagonal entries are given by $\left[N D N\right]_{(s,s)} = \dfrac{\pi_s}{||\phi_s||_2^2}$. Then $NDN\Phi$ has $|\S|$ rows of the form $\dfrac{\pi_s}{||\phi_s||_2^2} \phi_s$. Finally, the entry at the (i,j)'th location of $\Phi^\tr N D N \Phi$, which is a $n\times n$ matrix is given by $\slsins \phi_s(i)^\tr \dfrac{\pi_s}{||\phi_s||_2^2} \phi_s (j)$. Note that this can be rewritten as $\slsins\pi_s \left[\dfrac{\phi_s(i)\phi_s(j)}{||\phi_s||_2^2}\right]$, which is the same as the LHS.
	\end{proof}
	\begin{appendixpropn}
		\label{appendixpropn: sum Pi phi V equals Phi tr N D N V}
		$\slsins\pi_s \left[\dfrac{\phi_s V_s}{||\phi_s||_2^2}\right] = \Phi^\tr N D N V$
	\end{appendixpropn}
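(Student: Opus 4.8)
The plan is to mirror the proof of the preceding proposition \ref{appendixpropn: Pi phi phi tr equals phi N D N phi} almost verbatim, the only difference being that here both sides are vectors in $\R^n$ rather than $n\times n$ matrices. So I would establish the identity by comparing the two sides coordinate-by-coordinate. First I would check dimensions: the left-hand side is a weighted sum of the vectors $\phi_s\in\R^n$ and hence lies in $\R^n$; on the right, $N$ and $D$ are diagonal $m\times m$ matrices, $V\in\R^m$, and $\Phi^\tr\in\R^{n\times m}$, so $\Phi^\tr N D N V\in\R^n$ as well, making an entrywise comparison well-posed.

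Next I would unwind the right-hand side by peeling off the diagonal factors. Since $N$ and $D$ are diagonal with $N_{(s,s)}=1/\|\phi_s\|_2$ and $D_{(s,s)}=\pi_s$, the product $NDN$ is again diagonal with $(NDN)_{(s,s)}=\pi_s/\|\phi_s\|_2^2$. Applying this to $V$ produces a vector in $\R^m$ whose $s$-th entry is $\pi_s V_s/\|\phi_s\|_2^2$. Left-multiplying by $\Phi^\tr$, whose columns are precisely the $\phi_s$, then gives in the $i$-th coordinate the scalar $\slsins \phi_s(i)\,\pi_s V_s/\|\phi_s\|_2^2$.

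Finally I would read off the $i$-th entry of the left-hand side, which is directly $\slsins \pi_s\,\phi_s(i)V_s/\|\phi_s\|_2^2$, and observe that it agrees with the right-hand expression term for term; since $i$ was arbitrary, the two vectors coincide. I do not expect any genuine obstacle here — the argument is purely bookkeeping. The only points to handle with care are correctly identifying the diagonal entries of $NDN$ and keeping straight the orientation of $\Phi^\tr$ (columns indexed by states $s$), exactly as was done in the matrix-valued companion proposition \ref{appendixpropn: Pi phi phi tr equals phi N D N phi}.
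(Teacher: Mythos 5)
Your proposal matches the paper's own proof essentially verbatim: both compute $(NDN)_{(s,s)}=\pi_s/\|\phi_s\|_2^2$, propagate this through $V$ and then $\Phi^\tr$, and compare the $i$-th coordinate of each side to see they both equal $\slsins \pi_s\,\phi_s(i)V_s/\|\phi_s\|_2^2$. The argument is correct and complete.
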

	\begin{proof}
		In this case we are dealing with a vector in $\R^n$ for both the LHS and the RHS. We will show equality by showing the i'th entry of this vector on both LHS and RHS are the same.
		
		For the LHS, we have a sum of $|\S|$ vectors of the form $\left(\dfrac{\pi_sV_s}{||\phi_s||_2^2}\right)\phi_s$. Then the entry at i'th location is given by $\slsins \left(\dfrac{\pi_sV_s}{||\phi_s||_2^2}\right)\phi_s(i)$
		
		For the RHS, note that $\left[N D N\right]_{(s,s)} = \dfrac{\pi_s}{||\phi_s||_2^2}$ as in \hyperref[appendixpropn: Pi phi phi tr equals phi N D N phi]{proposition \ref{appendixpropn: Pi phi phi tr equals phi N D N phi}}. Then $N D N V$ is a vector of size $|\S|$ where the entry for state s is given as $\left[N D N V\right]_s = \dfrac{\pi_sV_s}{||\phi_s||_2^2}$. Finally, we have that the entry at the i'th row ($i \in \{1,\dots ,n\}$) in $\Phi^\tr N D N V$ is given by $\dfrac{\pi_sV_s}{||\phi_s||_2^2} \phi_s(i)$, which is the same as the LHS
	\end{proof}

	\section{CHOICE OF MOMENTUM MULTIPLIER FOR HEAVYBALL MOMENTUM}
	\label{appendixSection: Choice of constant momentum multiplier}
	We plot the mean error with iterations for different $\beta$ values to do a comparison between the various constant values in Figures \ref{fig:Beta Comparison Non Stochastic} and \ref{fig:Beta Comparison Stochastic}. This will enable us to see reasons for our choice of $\beta=0.5$. 
	
	Note that when we increase $\beta$ beyond 0.5, we see non-smoothness in convergence of the stochastic case. Thus we do not go for $\beta>0.5$ even though it sometimes leads to faster convergence.
	\begin{figure}[htb]
		\begin{subfigure}{.5\linewidth}
			\includegraphics[scale=1]{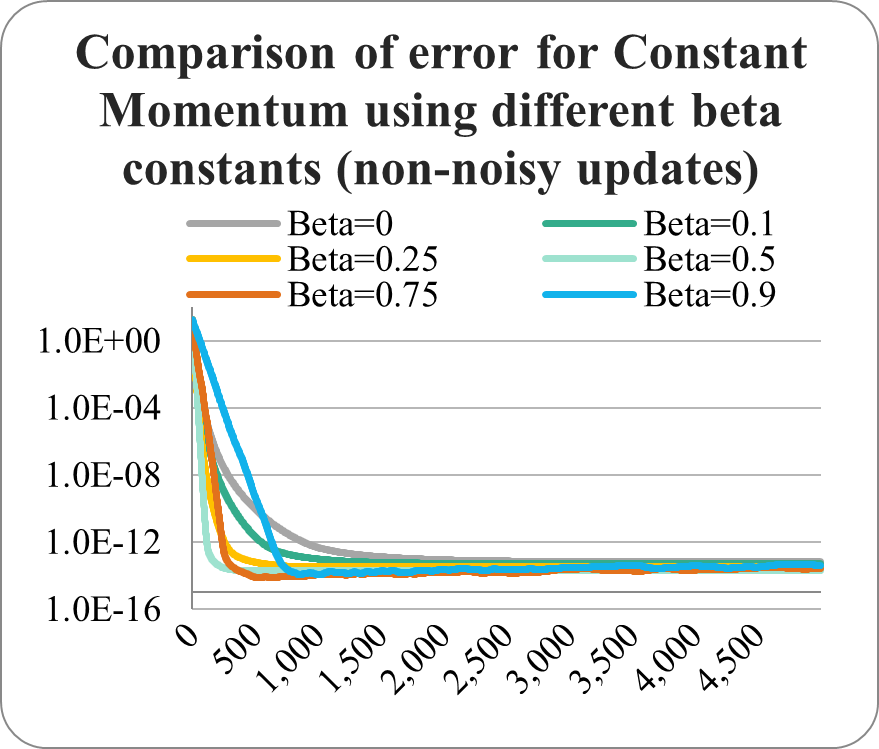}
			\caption{Non Stochastic Case}
			\label{fig:Beta Comparison Non Stochastic}
		\end{subfigure}%
		\hfill
		\begin{subfigure}{.5\linewidth}
			\includegraphics[scale=1]{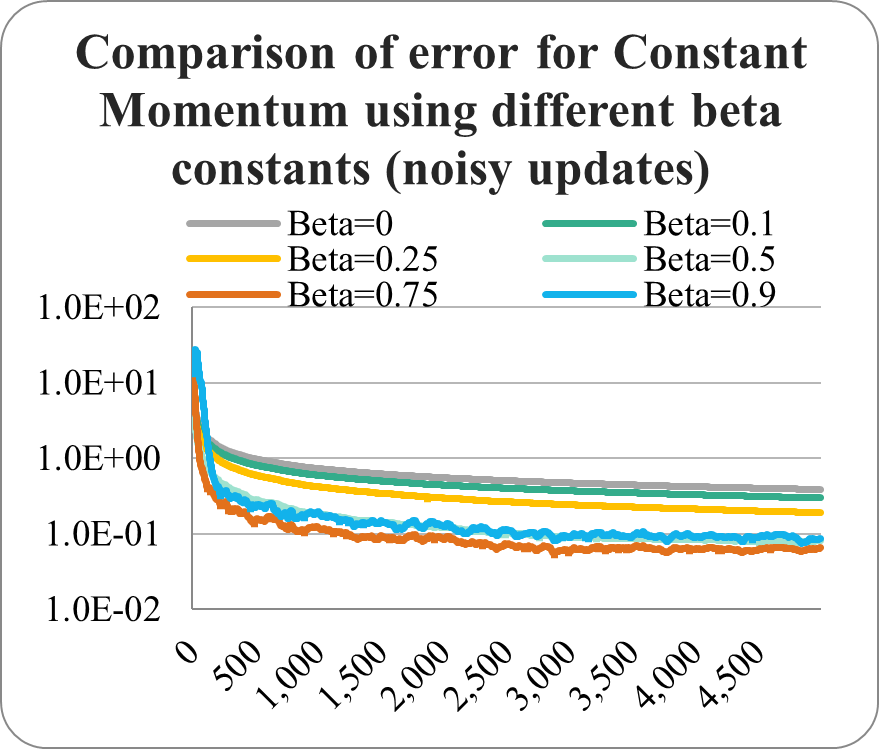}
			\caption{Stochastic Case}
			\label{fig:Beta Comparison Stochastic}
		\end{subfigure}
		\caption{Comparison of different Momentum}
		\label{fig: Beta Comparisons}
	\end{figure}
	
	We note that in the non-stochastic case, all values of $\beta \in [0,1)$ lead to convergence. Given enough iterations, we expect the same in the stochastic case as well.
	
	\section{SHOWING CONVERGENCE WITH MOMENTUM FOR THE SCALE INVARIANT MONTE-CARLO (SIM) ALGORITHM}
	\label{appendixsection: Showing convergence with momentum}
	
	\subsection{Problem Setup}
	
	Our original stochastic approximation equation with momentum can be written as $$w_{k+1} = w_k- \alpha_k  \sltotau\left[\dfrac{\phi_i^\tr w - \widetilde{V}_i}{||\phi_i||^2}\phi_i\right] + \beta (w_k-w_{k-1})$$
	where the notations have the usual meaning explained in \hyperref[section: notation and preliminaries]{section \ref{section: notation and preliminaries}} and further, $\beta \in [0,1)$. We want to show that this converges, where we have already shown that the update $w_{k+1} = w_k- \alpha_k  \sltotau\left[\dfrac{\phi_i^\tr w - \widetilde{V}_i}{||\phi_i||^2}\phi_i\right]$ converges.
	
	\textbf{Approach used}
	
	Traditional algorithms may attempt such a momentum under the two timescale approximation scheme. These have been considered in \cite{Borkar,LakshmiNarayanBhatnagar}. Two time scale approximation are also considered in \cite{avrachenkov2020online} in the context of web page change rate estimation.
	We take a different approach. 
	First we convert the given stochastic approximation equation with momentum into a two timescale regime, with two iterates getting updated. 
	Then we collapse the second iterate into a perturbation on the first iterate $w_k$, and thus show convergence. 
	We detail this in the following sections.
	
	\subsection{Adapting the stochastic-approximation equation with momentum into a two timescale structure:}
	\begin{appendixpropn}
		The update equation $w_{k+1} = w_k- \alpha_k  \sltotau\left[\dfrac{\phi_i^\tr w - \widetilde{V}_i}{||\phi_i||^2}\phi_i\right] + \beta (w_k-w_{k-1})$ can also be written as the set of equations
		\begin{align*}
			w_{k+1} - w_k &= \alpha_k z_k \\
			z_0 &= TP_k(w_k)\\
			z_i &= z_{i-1} + \zeta_{(i,k)} TP_{k-i}(w_{k-i})\quad \forall i \in [1,k]		\end{align*}
		where $\zeta_{(i,k)}  = \beta^i \frac{\alpha_{k-i}}{\alpha_{k}}$
	\end{appendixpropn}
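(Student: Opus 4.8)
The plan is to treat the heavy-ball recursion as a first-order linear difference equation in the increment vector, solve it in closed form, and then recognise that closed form as exactly the partial-sum recursion being asserted. First I would set $d_k := w_{k+1}-w_k$. Writing the momentum update $w_{k+1} = w_k - \alpha_k TP_k(w_k) + \beta(w_k - w_{k-1})$ in terms of increments turns it into the one-step recursion $d_k = -\alpha_k TP_k(w_k) + \beta\, d_{k-1}$, where I take $d_{-1}=0$ (the usual heavy-ball initialisation $w_{-1}=w_0$, equivalently a vanishing momentum term on the first step).

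Second, I would solve this recursion by unrolling it. A straightforward induction on $k$ gives
\begin{equation*}
  d_k = -\sum_{j=0}^{k} \beta^{\,j}\,\alpha_{k-j}\, TP_{k-j}(w_{k-j});
\end{equation*}
the base case $k=0$ is $d_0=-\alpha_0 TP_0(w_0)$, and the inductive step substitutes the formula for $d_{k-1}$ into $d_k=-\alpha_k TP_k(w_k)+\beta d_{k-1}$ and shifts the summation index, the boundary contribution $\beta^{\,k+1}d_{-1}$ dropping out by the convention above.

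Third, I would factor out the leading stepsize. Setting $d_k = \alpha_k z_k$ and dividing through by $\alpha_k$ gives
\begin{equation*}
  z_k = \sum_{j=0}^{k} \beta^{\,j}\,\frac{\alpha_{k-j}}{\alpha_k}\, TP_{k-j}(w_{k-j}),
\end{equation*}
so that the coefficient multiplying $TP_{k-j}(w_{k-j})$ is exactly $\zeta_j = \beta^{\,j}\alpha_{k-j}/\alpha_k$, with $\zeta_0=1$ reproducing $z_0=TP_k(w_k)$. Finally I would confirm the claimed recursive form by reading $z_i$ as the truncated sum $\sum_{j=0}^{i}\zeta_j TP_{k-j}(w_{k-j})$ for $i=0,\dots,k$; differencing consecutive truncations yields $z_i - z_{i-1} = \zeta_i TP_{k-i}(w_{k-i})$, which is precisely $z_i = z_{i-1}+\zeta_i TP_{k-i}(w_{k-i})$, and $w_{k+1}-w_k=\alpha_k z_k$ by construction.

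The content here is purely algebraic, so no Lipschitz, boundedness, or martingale hypotheses enter; the only place I would be careful is the bookkeeping of the two indices and the overall sign. The index $i$ is an inner unrolling depth for a \emph{fixed} outer iteration $k$ — it counts how many past momentum terms have been accumulated — which is why the coefficients $\zeta_i$ legitimately depend on $k$ through the ratio $\alpha_{k-i}/\alpha_k$ and why $i$ ranges only over $[0,k]$. I would also reconcile the descent sign: the closed form above carries an overall minus sign, so to match the stated $w_{k+1}-w_k=\alpha_k z_k$ with $z_0=TP_k(w_k)$ one must absorb that sign consistently (equivalently, track $z_k$ along the negative update direction). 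Getting this reindexing right is exactly what makes the subsequent collapse of the $z_i$-system into a single perturbed stochastic-approximation iteration well defined.
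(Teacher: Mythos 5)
Your proposal is correct and follows essentially the same route as the paper's proof: unroll the heavy-ball recursion into the discounted sum $\sum_{j=0}^{k}\beta^{j}\alpha_{k-j}TP_{k-j}(w_{k-j})$, factor out $\alpha_k$, and read off the $z_i$ recursion as successive partial sums with coefficients $\zeta_i=\beta^{i}\alpha_{k-i}/\alpha_k$. Your explicit handling of the initialization $d_{-1}=0$ and of the overall minus sign is in fact slightly more careful than the paper, whose displayed proof silently drops the descent sign when passing to the difference form.
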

	\begin{proof}
		Consider:
		\begin{align}
			w_{k+1} &= w_k - \alpha_k  \sltotau\left[\dfrac{\phi_i^\tr w_k - \widetilde{V}_i}{||\phi_i||^2}\phi_i\right] + \beta (w_k-w_{k-1})
			\intertext{Rewriting as a difference:}
			w_{k+1} - w_k &= \alpha_k  \sltotau\left[\dfrac{\phi_i^\tr w_k - \widetilde{V}_i}{||\phi_i||^2}\phi_i\right] + \beta (w_k-w_{k-1})\nonumber
			\intertext{We will call the term $\sltotau\left[\dfrac{\phi_i^\tr w_k - \widetilde{V}_i}{||\phi_i||^2}\phi_i\right]$ as $TP_k(w_k)$}
			w_{k+1} - w_k &= \alpha_k  TP_k(w_k) + \beta (w_k-w_{k-1})\nonumber
			\intertext{Expanding the momentum term}
			w_{k+1} - w_k &= \alpha_k  TP_k(w_k)  + \beta (\alpha_{k-1}  TP_{k-1}(w_{k-1}) + \beta (w_{k-1}-w_{k-2}))\nonumber\\
			&= \alpha_k  TP_k(w_k)  + \beta \alpha_{k-1}  TP_{k-1}(w_{k-1}) + \beta^2 (w_{k-1}-w_{k-2})\nonumber\\
			&= \alpha_k  TP_k(w_k)  + \nonumber\\
			&\quad\qquad \beta \alpha_{k-1}  TP_{k-1}(w_{k-1}) + \beta^2 (\alpha_{k-2}  TP_{k-2}(w_{k-2}) + \beta (w_{k-2}-w_{k-3}))\nonumber\\
			&=\quad \dots\nonumber
			\intertext{Thus we can write the whole thing as:}
			&=\alpha_k TP_k(w_k) + \nonumber\\
			&\quad\qquad\beta \alpha_{k-1}  TP_{k-1}(w_{k-1}) + \beta^2 \alpha_{k-2}  TP_{k-2}(w_{k-2})+\dots+\beta^k\alpha_0TP_0(w_0)
		\end{align}
		We note that this is in the form of a discounted sum of vectors, which we have to bring into a form that is the sum of two iterates \cite{Kushner1997}.
		
		We reverse the order of the second iterate set. We build $z_k$ bottom up as follows. Let:
		\begin{align}
			z_0 &= TP_k(w_k)\nonumber\\
			z_1 &= z_0 + \beta \frac{\alpha_{k-1}}{\alpha_k} TP_{k-1}(w_{k-1})\nonumber\\
			\dots\enspace &= \quad \dots\nonumber\\
			z_{k} &= z_{k-1} + \beta^{k} \frac{\alpha_{0}}{\alpha_{k}} TP_{0}(w_{0})\nonumber
			\intertext{Further, to simplify this set of equations, we let $\zeta_{(i,k)}$ be the step size corresponding to $z_i$ such that $\zeta_{(i,k)} =\beta^i \frac{\alpha_{k-i}}{\alpha_{k}}$. Then we have the set of equations as:}
			w_{k+1} - w_k &= \alpha_k z_k\nonumber \\
			z_0 &= TP_k(w_k)\nonumber\\
			z_1 &= x_0 + \zeta_{(1,k)} TP_{k-1}(w_{k-1})\nonumber\\
			\dots\enspace &= \quad \dots\nonumber\\
			z_{k} &= z_{k-1} + \zeta_{(k,k)} TP_{0}(w_{0})\nonumber
			\intertext{Or more generally if $\zeta_{(i,k)} =\beta^i \frac{\alpha_{k-i}}{\alpha_{k}}$:}
			w_{k+1} - w_k &= \alpha_k z_k \\
			z_0 &= TP_k(w_k)\nonumber\\
			z_i &= z_{i-1} + \zeta_{(i,k)} TP_{k-i}(w_{k-i})\quad \forall i \in [1,k]\nonumber
		\end{align}
	\end{proof}
	
	\subsection{Collapsing the two iterate stochastic approximation equations into a single iterate form:}
	Now wish to express the above equation in terms of an expected update and a Martingale noise term (with respect to the filtration). For $z_0$, such an expression is straightforward: We add and subtract the expectation to change the equation from $z_0 = TP_k(w_k)$ to 
	\begin{equation}
		z_0 = \expectation \left[ TP_k(w_k)\big|\F_k\right] + (TP_k(w_k) - \expectation \left[ TP_k(w_k)\big|\F_k\right])
	\end{equation}
	where the first term is the expected update term $H_{(0,k)} = \expectation \left[ TP_k(w_k)\big|\F_k\right]$ second term is a martingale difference noise term, $M_{(0,k)} = (TP_k(w_k) - \expectation \left[ TP_k(w_k)\big|\F_k\right])$. 
	
	Now let us focus on $z_i$ for $i \in [1,k]$
	\begin{align}
		z_i &= z_{i-1} + \zeta_{(i,k)} TP_{k-i}(w_{k-i})\nonumber
		\intertext{can be rewritten as:}
		z_i &= z_{i-1} + \zeta_{(i,k)} \left[TP_{k-i}(w_{k}) + \left(TP_{k-i}(w_{k-i})-TP_{k-i}(w_{k})\right)\right]\nonumber
		\intertext{Which can be further broken down as:}
		z_i &= z_{i-1} + \zeta_{(i,k)} \bigg[\E [TP_{k-i}(w_{k})|\F_k]+\\
		&\qquad\qquad\qquad\left(TP_{k-i}(w_{k}) - \E \left[TP_{k-i}(w_{k})|\F_k\right]\right)+ \left(TP_{k-i}(w_{k-i})-TP_{k-i}(w_{k})\right)\bigg]\nonumber
		\intertext{Now we take an expectation of the first term over all possible $\F_k$. Thus the first term breaks into:}
		&= z_{i-1} + \zeta_{(i,k)} \bigg[\E_{\F_k}\left[\E [TP_{k-i}(w_{k})|\F_k]\big|\F_k\right] + \\
		&\quad\qquad\qquad\qquad\qquad(\E [TP_{k-i}(w_{k})|\F_k]-\E_{\F_k}\left[\E [TP_{k-i}(w_{k})|\F_k]\big|\F_k\right]) + \nonumber\\
		&\quad\qquad\qquad\qquad\qquad \left(TP_{k-i}(w_{k}) - \E \left[TP_{k-i}(w_{k})|\F_k\right]\right)+ \left(TP_{k-i}(w_{k-i})-TP_{k-i}(w_{k})\right)\bigg]\nonumber
		\intertext{Note that in the filtration, $\E_{\F_k}\left[\E [TP_{k-i}(w_{k})|\F_k]\big|\F_k\right]$ remains unaffected, and therefore, we can write $\E_{\F_k}\left[\E [TP_{k-i}(w_{k})|\F_k]\big|\F_k\right] =\E_{\F_k}\left[\E [TP_{k-i}(w_{k})|\F_k]\right]$. For ease of notation, we simply write $\E_{\F_k}\left[\E [TP_{k-i}(w_{k})|\F_k]\right]$ as $\E\left[\E [TP_{k-i}(w_{k})|\F_k]\right]$. Then we have:}
		&= z_{i-1} + \zeta_{(i,k)} \bigg[\E\left[\E [TP_{k-i}(w_{k})|\F_k]\right] + (\E [TP_{k-i}(w_{k})|\F_k]-\E\left[\E [TP_{k-i}(w_{k})|\F_k]\right])\nonumber\\
		&\qquad\qquad\qquad+ \left(TP_{k-i}(w_{k}) - \E \left[TP_{k-i}(w_{k})|\F_k\right]\right)+ \left(TP_{k-i}(w_{k-i})-TP_{k-i}(w_{k})\right)\bigg]\nonumber
	\end{align}
	Notice that the third term above $TP_{k-i}(w_{k}) - \E \left[TP_{k-i}(w_{k})|\F_k\right]$ is actually 0 as the filtration provides the exact hyperplanes as well as $w_k$. Thus the expression is deterministic. Therefore, $\E \left[TP_{k-i}(w_{k})|\F_k\right] = TP_{k-i}(w_{k})$. Thus we finally have
	\begin{align}
		z_i &= z_{i-1} + \zeta_{(i,k)} \bigg[\E\left[\E [TP_{k-i}(w_{k})|\F_k]\right] +\\ &\qquad\qquad\qquad\qquad
		(\E [TP_{k-i}(w_{k})|\F_k]-\E\left[\E [TP_{k-i}(w_{k})|\F_k]\right]) +\nonumber\\
		&\qquad\qquad\qquad\qquad \left(TP_{k-i}(w_{k-i})-TP_{k-i}(w_{k})\right)\bigg]\nonumber
	\end{align}
	
	Let 
	\begin{equation*}
		\begin{split}
			H_{(i,k)} &= \E\left[\E [TP_{k-i}(w_{k})|\F_k]\right]\\
			M_{(i,k)} &= \E [TP_{k-i}(w_{k})|\F_k]-\E\left[\E [TP_{k-i}(w_{k})|\F_k]\right]\\
			\varepsilon_{(i,k)} &= TP_{k-i}(w_{k-i})-TP_{k-i}(w_{k})
		\end{split}
	\end{equation*}
	
	If $\zeta_{(i,k)} =\beta^i \frac{\alpha_{k-i}}{\alpha_{k}}$. Further, let $h_{(i,k)}(\cdot)$ be some limiting o.d.e that asymptotically tracks $H_{(i,k)}(\cdot)$. Thus we have the set of equations:
	\begin{align}
		w_{k+1} - w_k &= \alpha_k z_k \\
		z_0 &= h_{(0,k)} + M_{(0,k)}\nonumber\\
		z_i &= z_{i-1} + \zeta_{(i,k)} (h_{(i,k)}+M_{(i,k)}+\varepsilon_{(i,k)})\quad \forall i \in [1,k]\nonumber
	\end{align}
	We collapse these now into a single equation. Since $h_{(i,k)} (w_k) = TP(w_k) \forall i,k$ based on \hyperref[appendixpropn: Expectation of Expectation is TP]{proposition \ref{appendixpropn: Expectation of Expectation is TP}} We will label this simply as $h(w_k)$
	
	Let $\h(w_k) = h(w_k)\left(1 + \sum\limits_{i=1}^k \zeta_{(i,k)}\right)$, $\Mdot_k = M_{(0,k)} + \sum\limits_{i=1}^k \zeta_{(i,k)} M_{(i,k)}$ and $\vepsilondot_k = \sum\limits_{i=1}^k \zeta_{(i,k)} \varepsilon_{(i,k)}$, Then:
	\begin{equation}
		w_{k+1} - w_k = \alpha_k[\h(w_k) + \vepsilondot_k + \Mdot_k]
	\end{equation}
	
	Now we have to show that this single equation follows the requirements for convergence. We will show each of the assumptions in order. 
	
	\subsection{Showing basic properties of required for convergence:}
	\label{appendixsubsection: A1 and A2 Lipschitz and step size}
	
	\begin{appendixpropn}
		\label{appendixpropn:single iterate with momentum has proper step size sequence}
		The step size sequence $\{\alpha_i\}_{i=1}^\infty$ satisfies $\sltoinf \alpha_i = \infty$ and $\sltoinf\alpha_i^2 < \infty$
	\end{appendixpropn}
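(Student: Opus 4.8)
The plan is to observe that introducing momentum does not alter the leading step size $\alpha_k$ at all. In the collapsed single-iterate form $w_{k+1} - w_k = \alpha_k[\h(w_k) + \vepsilondot_k + \Mdot_k]$ obtained in the previous subsection, the momentum has been folded entirely into the modified drift $\h$, the perturbation $\vepsilondot_k$, and the martingale difference $\Mdot_k$, while the multiplier in front is the very same curvature-step $\alpha_k = \eta_k\theta_k/\|TP_k(w_k)\|$ analysed in the no-momentum case. Consequently the statement here is literally Appendix Proposition \ref{appendixpropn:single iterate has proper step size sequence}, and I would simply invoke and briefly re-run that argument.

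Concretely, I would write $\alpha_k = \vartheta_k/k^p$ with $\vartheta_k = \|TP_k(w_k)\|/\|\Delta TP_k(w_k)\|$, after substituting $\theta_k = \|TP_k(w_k)\|^2/\|\Delta TP_k(w_k)\|$. The crux is to establish, almost surely, a uniform positive lower bound and a finite upper bound $\underline{\vartheta} \le \vartheta_k \le \overline{\vartheta}$. For the lower bound I would note that $TP_k(w_k) \ne 0$ a.s. for generic $w_k\in\R^n$, and that $\Delta TP_k(w_k) = \frac{1}{\T}\big[\sum_{i=1}^\T \phi_i\phi_i^\tr/\|\phi_i\|^2\big]TP_k(w_k)$ is a.s. nonzero, since a random $TP_k(w_k)$ is a.s. not orthogonal to all the sampled $\phi_i$. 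For the upper bound I would use that $\widetilde{V}$ is bounded by $R_{max}/(1-\gamma)$ and that the iterates remain bounded, so both $TP_k(w_k)$ and $\Delta TP_k(w_k)$ are controlled.

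Given these bounds, the two summability properties follow mechanically: $\slktoinf \alpha_k = \slktoinf \vartheta_k/k^p \ge \underline{\vartheta}\,\slktoinf k^{-p} = \infty$ because $p \le 1$, whereas $\slktoinf \alpha_k^2 \le \overline{\vartheta}^2\,\slktoinf k^{-2p} < \infty$ because $2p > 1$. I would present these two chains exactly as in the earlier proof, since nothing about the momentum terms enters them.

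The one place requiring genuine care is the upper bound on $\vartheta_k$, which rests on the almost sure boundedness $\sup_k\|w_k\| < \infty$. In the no-momentum setting this was Proposition \ref{propn:MonteCarloA4}; in the present setting the boundedness must instead be read off the collapsed equation as part of verifying assumption A4 for the momentum iterates. I therefore expect the main obstacle to be ensuring that the iterate-boundedness invoked here is the one established for the momentum iterates, rather than silently importing the no-momentum bound; once that dependency is pinned down, the remainder is the routine computation above.
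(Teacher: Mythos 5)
Your proposal matches the paper's proof, which simply observes that the step size sequence is unchanged from the no-momentum case and invokes Appendix Proposition \ref{appendixpropn:single iterate has proper step size sequence} verbatim. Your additional remark that the upper bound on $\vartheta_k$ should rest on the boundedness of the momentum iterates (established via the stability criterion in the momentum section) rather than the no-momentum bound is a fair point of care that the paper glosses over, but it does not change the argument.
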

	\begin{proof}
		The step size sequence remains the same as in \hyperref[appendixpropn:single iterate has proper step size sequence]{proposition \ref{appendixpropn:single iterate has proper step size sequence}}. Thus the proof remains the same.
	\end{proof}
	
	\begin{appendixpropn}
		\label{appendixpropn:mathring zeta is bounded}
		Let $\mathring{\zeta}_k := 1 + \sum\limits_{i=1}^k \zeta_{(i,k)}$. Then $\mathring{\zeta}_k$ is bounded. 
	\end{appendixpropn}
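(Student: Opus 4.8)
The plan is to exploit the explicit form $\zeta_{(i,k)} = \beta^i\,\alpha_{k-i}/\alpha_k$ together with the two-sided control on the step size established earlier. Recall from the construction in Section~\ref{section:StepSizeSeq} that $\alpha_k = \vartheta_k\,\eta_k = \vartheta_k/k^p$, where $\vartheta_k = \|TP_k(w_k)\|/\|\Delta TP_k(w_k)\|$, and that Proposition~\ref{appendixpropn:single iterate has proper step size sequence} shows $\vartheta_k$ is almost surely bounded above and below, say $0 < \underline{\vartheta} \le \vartheta_k \le \overline{\vartheta} < \infty$. First I would use this to bound the step-size ratio: for $1 \le i \le k-1$,
\[
\frac{\alpha_{k-i}}{\alpha_k} = \frac{\vartheta_{k-i}}{\vartheta_k}\left(\frac{k}{k-i}\right)^p \le \frac{\overline{\vartheta}}{\underline{\vartheta}}\left(\frac{k}{k-i}\right)^p .
\]
Writing $C = \overline{\vartheta}/\underline{\vartheta}$, this reduces the claim to showing that $S_k = \sum_{i=1}^{k}\beta^i\left(k/(k-i)\right)^p$ is bounded uniformly in $k$, since $\mathring{\zeta}_k \le 1 + C\,S_k$.

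The core of the argument is a split of $S_k$ at $i = \lfloor k/2\rfloor$. For the small-index block $i \le k/2$ one has $k-i \ge k/2$, hence $\left(k/(k-i)\right)^p \le 2^p$, so this part is dominated by the convergent geometric series $2^p\sum_{i=1}^\infty \beta^i = 2^p\,\beta/(1-\beta)$, a constant independent of $k$. For the large-index block $k/2 < i \le k-1$, I would use the crude bounds $\beta^i \le \beta^{k/2}$ and $\left(k/(k-i)\right)^p \le k^p$ (since $k-i \ge 1$), which give a contribution of at most $k^{p+1}\beta^{k/2}$; this tends to $0$ as $k\to\infty$ and is therefore bounded over all $k$. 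The single remaining term $i=k$ equals $\beta^k\alpha_0/\alpha_k = \beta^k\alpha_0 k^p/\vartheta_k \le (\alpha_0/\underline{\vartheta})\,k^p\beta^k$, which likewise vanishes as $k\to\infty$; if one prefers to avoid defining $\alpha_0$, the momentum recursion can be initialized so that this term is absent. Combining the three pieces yields $\sup_k S_k < \infty$, and hence $\sup_k \mathring{\zeta}_k < \infty$.

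I expect the main obstacle to be the regime $i \approx k$, where the step-size ratio $\alpha_{k-i}/\alpha_k$ is largest (growing like $k^p$): the resolution is the standard observation that the exponential decay $\beta^i$ dominates any polynomial factor in $k$, which is precisely why the large-index block collapses to a null sequence. A secondary point worth flagging is that the whole reduction hinges on the almost-sure lower bound $\underline{\vartheta} > 0$ from Proposition~\ref{appendixpropn:single iterate has proper step size sequence} (equivalently, that $\|\Delta TP_k(w_k)\|$ stays bounded away from $0$, as discussed in the footnote there); without it the ratio $\alpha_{k-i}/\alpha_k$ could fail to be controlled. Given that bound, every step above is a routine estimate.
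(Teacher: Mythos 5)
Your proposal is correct and follows essentially the same route as the paper's proof: both reduce $\mathring{\zeta}_k$ to the series $\sum_i \beta^i\,\alpha_{k-i}/\alpha_k$, control the step-size ratio via the almost-sure two-sided bounds on $\vartheta_k$, and conclude because the geometric factor $\beta^i$ dominates the polynomially growing ratio (the paper does this by writing $\tfrac{k}{k-i}=1+\tfrac{i}{k-i}$ and splitting off a geometric series plus a vanishing remainder, whereas you split the index range at $i=\lfloor k/2\rfloor$; these are interchangeable elementary estimates). Your write-up is in fact slightly more careful on two points the paper elides: you retain the exponent $p$ in $(k/(k-i))^p$ rather than implicitly taking $p=1$, and you explicitly handle the boundary term $i=k$ involving $\alpha_0$, which the paper's expansion leaves undefined.
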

	\begin{proof}
		Consider 
		\begin{align*}
			\mathring{\zeta}_k &=\frac{1}{\alpha_k} \left[\alpha_k + \beta\alpha_{k-1} + \beta^2 \alpha_{k-2} + \dots\right]\\
			\intertext{Recall that $\alpha_k= \eta_k\vartheta_k$ where $\eta_k = \dfrac{1}{k^p};\enspace p\in (0.5,1]$ and $\vartheta_k = \dfrac{||TP_k(w_k)||}{||\Delta TP_k(w_k)||}$. Further, recall that $\sup \vartheta_k = \overline{\vartheta}$. Then:}
			\mathring{\zeta}_k&\leq \frac{\overline{\vartheta}}{\vartheta_k}\left[1+\beta \left(\frac{k}{k-1}\right)^p +\beta^2 \left(\frac{k}{k-2}\right)^p+\dots \right] \\
			&\leq \frac{\overline{\vartheta}}{\vartheta_k}\left[1+\beta \frac{k}{k-1} +\beta^2 \frac{k}{k-2}+\dots \right] \\
			&= \frac{\overline{\vartheta}}{\vartheta_k}[1+ \beta + \frac{\beta}{k-1} +\beta^2 + \frac{2\beta^2}{k-2}+\beta^3 + \frac{3\beta^3}{k-3}+\dots ] \\
			&= \frac{\overline{\vartheta}}{\vartheta_k}[(1+ \beta+\beta^2+\dots) + (\frac{\beta}{k-1} + \frac{2\beta^2}{k-2} + \frac{3\beta^3}{k-3}+\dots) ]
		\end{align*}

		As $k\to\infty$, the first half of the above expression is $\frac{\overline{\vartheta}}{\vartheta_k(1-\beta)}$. As $k\to\infty$, the second half converges to 0 \cite{KaviRamaMurthy}. Thus the entire expression remains bounded. 
		
	\end{proof}
	
	\begin{appendixpropn}
		\label{appendixpropn:single iterate is Lipschitz}
		The expected update for $w$, $\h:\R^n\to\R^n$ is Lipschitz
	\end{appendixpropn}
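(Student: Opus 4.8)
The plan is to exploit the fact that $\h$ factors as a scalar multiple of the already-Lipschitz map $h$. Recall from the collapse step that $\h(w_k) = h(w_k)\,\mathring{\zeta}_k$, where $\mathring{\zeta}_k = 1 + \sum_{i=1}^k \zeta_{(i,k)}$ and $h(\cdot) = TP(\cdot)$. First I would invoke the Lipschitz continuity of $h$: Proposition \ref{appendixpropn:grad G is Lipschitz continuous} (equivalently Proposition \ref{propn:MonteCarloA1}) establishes that $h = \nabla G = TP$ is Lipschitz with constant $L \le 1$. Next I would invoke Proposition \ref{appendixpropn:mathring zeta is bounded}, which shows that the scalar $\mathring{\zeta}_k$ is bounded uniformly in $k$; call this bound $\bar{\zeta} := \sup_k \mathring{\zeta}_k < \infty$.

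The core estimate is then the short chain
\begin{align*}
	||\h(w) - \h(w')|| &= ||\mathring{\zeta}_k h(w) - \mathring{\zeta}_k h(w')|| \\
	&= |\mathring{\zeta}_k|\cdot ||h(w) - h(w')|| \\
	&\le \bar{\zeta} L\, ||w - w'||,
\end{align*}
so that $\h$ is Lipschitz with constant $\bar{\zeta} L$, uniformly in $k$. Since both $\bar{\zeta}$ and $L$ are finite, this gives exactly the assertion.

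The hard part will be to justify that $\mathring{\zeta}_k$ genuinely behaves as a constant with respect to the free argument $w$ in the estimate above. Through $\zeta_{(i,k)} = \beta^i \alpha_{k-i}/\alpha_k$ and $\alpha_j = \eta_j \vartheta_j$ with $\vartheta_j = ||TP_j(w_j)||/||\Delta TP_j(w_j)||$, the multiplier $\mathring{\zeta}_k$ carries an implicit dependence on the iterate history. I would resolve this by observing that, for the limiting o.d.e.\ analysis, the multiplier relevant at step $k$ is frozen: it depends only on the past iterates $w_j$ ($j<k$) and on the already-drawn sampling, not on the argument $w$ we vary when testing the Lipschitz inequality. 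Hence it is legitimately a scalar constant in $w$, and the uniform bound $\bar{\zeta}$ is precisely the one supplied by the almost-sure upper and lower bounds on $\vartheta_j$ derived inside the proof of Proposition \ref{appendixpropn:single iterate has proper step size sequence}. With these two ingredients in hand, the Lipschitz property of $h$ transfers immediately to $\h$, completing the argument.
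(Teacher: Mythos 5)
Your proposal is correct and follows essentially the same route as the paper: write $\h = \mathring{\zeta}_k\, h$, invoke the Lipschitz property of $h = TP$ and the boundedness of $\mathring{\zeta}_k$ from Proposition \ref{appendixpropn:mathring zeta is bounded}, and conclude. Your added care in explaining why $\mathring{\zeta}_k$ is genuinely constant in the argument $w$ (depending only on past iterates and sampling, not on the point being varied) is a point the paper leaves implicit, but it does not change the structure of the argument.
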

	\begin{proof}
		We have already shown that $h(\cdot) = TP(w)$ is Lipschitz (as can be seen from the fact that $\slsins\pi_s\dfrac{\phi_s\phi_s\tr}{||\phi_s||^2} w + C$ where $C = \slsins\pi_s\dfrac{\phi_sV_s}{||\phi_s||^2}$ is linear in w). Now we will show that $\h(\cdot)=h(\cdot)(1 + \sum\limits_{i=1}^k \zeta_i)$ is also Lipschitz. But we have shown that $\mathring{\zeta}_k = 1 + \sum\limits_{i=1}^k \zeta_{(i,k)}$ is bounded in \hyperref[appendixpropn:mathring zeta is bounded]{proposition \ref{appendixpropn:mathring zeta is bounded}}. 
		
		Thus we have that if $h(\cdot)$ is Lipschitz, then $\mathring{\zeta}_k h(\cdot) = \h(\cdot)$ is also Lipschitz for some constant $\mathring{\zeta}_k$.
		
	\end{proof}
	\subsection{Showing that the noise term is a martingale difference sequence:}
	\label{appendixsubsection: A3 Martingale difference noise}
	\begin{appendixpropn}
		\label{appendixpropn:M0k is bounded}
		We specifically consider $M_{(0,k)}$ first.
		$\E[M_{(0,k)}|\F_k]=0$ and $\E\left[||M_{(0,k)}||^2|\F_k\right]\leq K[1+||w_k||^2]$
	\end{appendixpropn}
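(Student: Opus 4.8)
The plan is to observe that $M_{(0,k)}$ is literally the same object as the martingale-difference term $M_{k+1}$ from the momentum-free analysis, so both claims follow by transcribing the argument in Appendix \ref{appendixSection: Margingale difference sequence square integrable}. Indeed, the standard-form computation in Appendix \ref{appendixSection: Update equation without momentum Standard Form} gives $\E[TP_k(w_k)|\F_k] = TP(w_k)$, so that $M_{(0,k)} = TP_k(w_k) - \E[TP_k(w_k)|\F_k] = TP_k(w_k) - TP(w_k) = M_{k+1}$. Once this identification is made, nothing new has to be proved.

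First I would dispatch the zero-mean claim, which is immediate: since $\E[TP_k(w_k)|\F_k]$ is $\F_k$-measurable, conditioning gives
\[
\E[M_{(0,k)}|\F_k] = \E[TP_k(w_k)|\F_k] - \E[TP_k(w_k)|\F_k] = 0.
\]

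Next, for the conditional second-moment bound, I would rewrite $M_{(0,k)}$ in the affine form $M_{(0,k)} = A w_k - b$, where
\[
A = \frac{1}{\T}\sum\limits_{i=1}^\T \frac{\phi_i\phi_i^\tr}{||\phi_i||^2} - \slsins \pi_s \frac{\phi_s\phi_s^\tr}{||\phi_s||^2}, \qquad b = \frac{1}{\T}\sum\limits_{i=1}^\T \frac{\widetilde{V}_i\phi_i}{||\phi_i||^2} - \slsins \pi_s \frac{V_s\phi_s}{||\phi_s||^2}.
\]
By Proposition \ref{appendixpropn: max eigenval of phiiphii.tr by norm phi sq is 1}, each summand $\phi_i\phi_i^\tr/||\phi_i||^2$ has largest eigenvalue $1$, so $A$ has operator norm bounded by a constant independent of the trajectory; and since $\widetilde{V}_i \le R_{max}/(1-\gamma)$ and $\T$ is bounded, $b$ is bounded by a constant as well. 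Then from the elementary inequality $||Aw_k - b||^2 \le 2||A||^2 ||w_k||^2 + 2||b||^2$ I would take conditional expectations to obtain $\E[||M_{(0,k)}||^2|\F_k] \le K(1+||w_k||^2)$ for a suitable $K$.

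I do not expect a genuine obstacle here; the one point needing attention is that $K$ be uniform in $k$, which is exactly what the trajectory-independent spectral bound on $A$ and the uniform bound $R_{max}/(1-\gamma)$ on $\widetilde{V}$ (together with the boundedness of $\T$) supply. The statement is essentially a restatement of the momentum-free martingale estimate specialized to the $i=0$ term of the collapsed recursion.
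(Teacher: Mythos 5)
Your proposal is correct and follows essentially the same route as the paper: both identify $M_{(0,k)}$ with the momentum-free martingale term $TP_k(w_k) - TP(w_k)$ and then invoke (or transcribe) the affine decomposition $M_{(0,k)} = A w_k - b$ with bounded coefficients from Appendix \ref{appendixSection: Margingale difference sequence square integrable} to get both the zero-mean property and the quadratic moment bound. The only difference is that you spell out the elementary inequality $\|Aw_k - b\|^2 \le 2\|A\|^2\|w_k\|^2 + 2\|b\|^2$, which the paper leaves implicit.
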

	\begin{proof}
		We note that $M_{(0,k)} = TP_k(w_k) - \expectation \left[ TP_k(w_k) \big|\F_k\right]= TP_k(w_k) - TP(w_k)$  as per \hyperref[appendixSection: Update equation without momentum Standard Form]{appendix section \ref{appendixSection: Update equation without momentum Standard Form}}.
		
		Now we have already shown in \hyperref[appendixSection: Margingale difference sequence square integrable]{appendix section \ref{appendixSection: Margingale difference sequence square integrable}} that $\E[TP_k(w_k) - TP(w_k)|\F_k]=0$. 
		
		Further we also showed $\exists A_{(0,k)},C_{(0,k)}$ such that $M_{(0,k)} = A_{(0,k)}w_k - C_{(0,k)}$ whence $\E\left[||TP_k(w_k) - TP(w_k)||^2|\F_k\right]\leq K[1+||w_k||^2]$ for some $K\in\R$

	\end{proof}
	
	\begin{appendixpropn}
		\label{appendixpropn: Mik is bounded}
		$\E[M_{(i,k)}|\F_k]=0$ and $\E\left[||M_{(0,k)}||^2|\F_k\right]\leq K[1+||w_k||^2]$
	\end{appendixpropn}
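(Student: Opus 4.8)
The plan is to establish that $M_{(i,k)}$ inherits exactly the structure of the base case $M_{(0,k)}$ treated in \hyperref[appendixpropn:M0k is bounded]{proposition \ref{appendixpropn:M0k is bounded}}, so that the same two ingredients---centering for the zero-mean property and affineness in $w_k$ for the second-moment bound---carry over. First I would record the explicit affine form of the total-projection map for a fixed trajectory: since $TP_{k-i}(w) = A_{k-i}\, w - b_{k-i}$ with $A_{k-i} = \frac{1}{\T}\sum_{l} \frac{\phi_l \phi_l^\tr}{\|\phi_l\|^2}$ and $b_{k-i} = \frac{1}{\T}\sum_{l} \frac{\widetilde{V}_l \phi_l}{\|\phi_l\|^2}$, every operation that builds $M_{(i,k)}$ out of $TP_{k-i}(w_k)$ is affine in $w_k$.

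For the zero-mean claim I would apply the tower property directly to the definition $M_{(i,k)} = \E[TP_{k-i}(w_k)\mid\F_k] - \E\big[\E[TP_{k-i}(w_k)\mid\F_k]\big]$. The second term is precisely the unconditional mean of the first, so $M_{(i,k)}$ is a centered version of the $\F_k$-measurable quantity $\E[TP_{k-i}(w_k)\mid\F_k]$ and therefore integrates to zero; this is the martingale-difference property needed once the $z_i$ recursion is collapsed into the single iterate $w_{k+1}-w_k = \alpha_k[\h(w_k)+\vepsilondot_k+\Mdot_k]$. The delicate point here is the filtration bookkeeping: unlike $M_{(0,k)}$, whose randomness is the fresh sample at step $k$ that $\F_k$ averages out, the term $M_{(i,k)}$ is the fluctuation of $\E[TP_{k-i}(w_k)\mid\F_k]$ about its own mean, so it is the centering---not a fresh conditional expectation---that supplies the zero-mean property. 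I would make explicit in what sense this renders $\{\Mdot_k\}$ a martingale-difference sequence, since this is exactly where the base-case argument does not transcribe verbatim.

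For the square-integrability bound I would propagate the affine form through both expectations. By \hyperref[appendixpropn: max eigenval of phiiphii.tr by norm phi sq is 1]{proposition \ref{appendixpropn: max eigenval of phiiphii.tr by norm phi sq is 1}} each summand $\phi_l\phi_l^\tr/\|\phi_l\|^2$ is symmetric positive semidefinite with largest eigenvalue $1$, so $A_{k-i}$, being an average of $\T$ such matrices, has operator norm at most $1$; and since the first-visit estimates obey $\widetilde{V}\le R_{max}/(1-\gamma)$, the vector $b_{k-i}$ is uniformly bounded. Conditional and unconditional expectation preserve these bounds, so $M_{(i,k)} = A_{(i,k)} w_k - C_{(i,k)}$ for matrices and vectors of uniformly bounded norm, exactly as in \hyperref[appendixSection: Margingale difference sequence square integrable]{appendix section \ref{appendixSection: Margingale difference sequence square integrable}}. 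Expanding $\|A_{(i,k)} w_k - C_{(i,k)}\|^2$ and using $\|a+b\|^2 \le 2\|a\|^2 + 2\|b\|^2$ then yields $\E[\|M_{(i,k)}\|^2\mid\F_k] \le K(1+\|w_k\|^2)$ for a constant $K$ independent of $i$ and $k$, which is the stated estimate (proved here for the general $M_{(i,k)}$, the subscript $0$ in the statement being a typo). I expect the main obstacle to be the filtration bookkeeping in the zero-mean step rather than the bound, since the latter is a near-verbatim repetition of the base case once the uniform-in-$i$ boundedness of $A_{k-i}$ and $b_{k-i}$ is recorded.
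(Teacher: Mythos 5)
Your proposal follows the paper's proof almost line for line. For the second-moment bound the match is exact: the paper also writes $M_{(i,k)} = A_{(i,k)} w_k - C_{(i,k)}$ with $A_{(i,k)}$ built from averages of $\phi\phi^\tr/\|\phi\|^2$ (operator norm at most $1$ by the rank-one eigenvalue proposition) and $C_{(i,k)}$ bounded via $\widetilde{V} \leq R_{max}/(1-\gamma)$, and then concludes the quadratic growth bound $\E[\|M_{(i,k)}\|^2 \mid \F_k] \leq K(1+\|w_k\|^2)$; your reading of the subscript $0$ in the second clause as a typo for $i$ agrees with what the paper actually proves. The one place where you and the paper part ways is precisely the point you flag as delicate. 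The paper claims $\E[M_{(i,k)}\mid\F_k]=0$ by rewriting the unconditional term $\E\left[\E[TP_{k-i}(w_k)\mid\F_k]\right]$ as a conditional expectation given $\F_k$ ("the second expectation remains unchanged given the filtration") and then cancelling it against $\E\left[\E[TP_{k-i}(w_k)\mid\F_k]\,\big|\,\F_k\right]$; but since the paper elsewhere stipulates that $\F_k$ determines both $w_k$ and the hyperplanes sampled at step $k-i$ for $i\geq 1$, the quantity $\E[TP_{k-i}(w_k)\mid\F_k]$ is $\F_k$-measurable, so conditioning it on $\F_k$ returns the quantity itself rather than its unconditional mean, and the cancellation only shows $\E[M_{(i,k)}\mid\F_k]=M_{(i,k)}$, i.e., only the unconditional expectation vanishes. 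Your centering argument establishes exactly that unconditional statement, and your remark that "it is the centering, not a fresh conditional expectation, that supplies the zero-mean property" is the honest version of what the paper's manipulation obscures. So you have not introduced a gap relative to the paper; you have surfaced one that the paper already contains, and to genuinely close it one would need either to shrink the conditioning sigma-algebra for the $i\geq 1$ terms or to absorb these $\F_k$-measurable fluctuations into the perturbation term $\vepsilondot_k$ rather than the martingale term $\Mdot_k$.
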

	\begin{proof}
		First we note that 
		\begin{align*}
			M_{(i,k)} &= \E [TP_{k-i}(w_{k})|\F_k]-\E_{\F_k}\left[\E [TP_{k-i}(w_{k})|\F_k]\right]\\
			\intertext{Note that the second expectation remains unchanged given the filtration, thus we can rewrite this as:}
			&=\E [TP_{k-i}(w_{k})|\F_k]-\E_{\F_k}\left[\E [TP_{k-i}(w_{k})|\F_k]\big|\F_k\right]
		\end{align*}
		
		Given such a definition, $\E [M_{(i,k)}| \F_k] = \E_{\F_k}\left[\E [TP_{k-i}(w_{k})|\F_k]\big|\F_k\right]-\E_{\F_k}\left[\E [TP_{k-i}(w_{k})|\F_k]\big|\F_k\right] =0$
		
		For the second part, note that the filtration gives us the hyperplanes, say $\{1,\dots\T\}$ that have been sampled. Then:
		\begin{align*}
			\E [TP_{k-i}(w_{k})|\F_k] &= \left[\frac{1}{\T} \sum\limits_{i = 1}^{\T} \dfrac{\phi_i ^\tr w_k - \widetilde{V}_i}{||\phi_i||_2^2}\phi_i\right]\\
			\intertext{We obtain from \hyperref[appendixpropn: Expectation of Expectation is TP]{appendix proposition \ref{appendixpropn: Expectation of Expectation is TP}} that $\E\left[\E [TP_{k-i}(w_{k})|\F_k]\right] = TP(w_k)=\slsins\pi_s \left[\dfrac{\phi_s^\tr w - V_s}{||\phi_s||^2}\phi_s\right]$. Therefore:}
			M_{(i,k)} &=\left[\frac{1}{\T} \sum\limits_{i = 1}^{\T} \dfrac{\phi_i ^\tr w_k - \widetilde{V}_i}{||\phi_i||_2^2}\phi_i\right] -\slsins\pi_s \left[\dfrac{\phi_s^\tr w - V_s}{||\phi_s||^2}\phi_s\right]\\
			&= \left[\frac{1}{\T} \sum\limits_{i = 1}^{\T} \dfrac{\phi_i\phi_i ^\tr}{||\phi_i||_2^2} - -\slsins\pi_s \dfrac{\phi_s\phi_s^\tr }{||\phi_s||^2} \right]w_k-\left[\frac{1}{\T} \sum\limits_{i = 1}^{\T} \dfrac{\widetilde{V}_i\phi_i}{||\phi_i||_2^2} - -\slsins\pi_s \dfrac{V_s\phi_s }{||\phi_s||^2} \right]\\
			&= A_{(i,k)} w_k - C_{(i,k)}
		\end{align*}
		which is linear in $w_k$ with bounded coefficients. Further note that $A_{(i,k)}$ is bounded above as $\frac{\phi_s\phi_s^\tr}{||\phi_s||^2}$ has maximum eigen value 1. Further, $C_{(i,k)}$ is bounded as $\widetilde{V}$ is bounded above by $\dfrac{R_{max}}{1-\gamma}$ where $R_{max}$ is the maximum reward and $\gamma$ is the discounting factor.
		
		Thus $||M_{(i,k)}||^2$ is quadratic in $w_k$. Now it is easy to see that there would exist some K such that $\E\left[ ||M_{(i,k)}||^2 \big| \F_k\right] \leq K(1+||w_k||^2)$
		
	\end{proof}
	
	\begin{appendixpropn}
		\label{appendixpropn:single iterate has Martingale difference noise}
		Consider the filtration $\F_k = \{w_0,\dots,w_k\}$. Then the sequence $\{\Mdot_k\}$ is a zero-mean martingale difference noise sequence. Specifically, we have that:
		\begin{enumerate}
			\item $\E[\Mdot_k|\F_k] = 0$
			\item $\E[||\Mdot_k||^2\big|\F_k]\leq K_i(1+ ||w_k||^2)$
		\end{enumerate}
	\end{appendixpropn}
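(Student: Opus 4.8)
The plan is to exploit the linear decomposition $\Mdot_k = M_{(0,k)} + \sum_{i=1}^{k} \zeta_{(i,k)} M_{(i,k)}$ together with the two facts already established for each individual component: that $M_{(0,k)}$ and $M_{(i,k)}$ are zero-mean martingale difference terms with quadratic second-moment bounds (Propositions \ref{appendixpropn:M0k is bounded} and \ref{appendixpropn: Mik is bounded}), and that the aggregate weight $\mathring{\zeta}_k = 1 + \sum_{i=1}^k \zeta_{(i,k)}$ stays bounded uniformly in $k$ (Proposition \ref{appendixpropn:mathring zeta is bounded}). For the zero-mean claim I would simply push the conditional expectation through the finite sum. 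Since the coefficients $\zeta_{(i,k)} = \beta^i \alpha_{k-i}/\alpha_k$ are fixed by the step sizes, they come outside $\E[\cdot|\F_k]$, giving $\E[\Mdot_k|\F_k] = \E[M_{(0,k)}|\F_k] + \sum_{i=1}^k \zeta_{(i,k)} \E[M_{(i,k)}|\F_k]$, and every conditional expectation on the right vanishes by the cited propositions, so $\E[\Mdot_k|\F_k]=0$.

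For square integrability I would bound $\|\Mdot_k\|^2$ by a weighted convexity (Cauchy--Schwarz) step. Writing the nonnegative weights $1, \zeta_{(1,k)}, \dots, \zeta_{(k,k)}$, whose sum is $\mathring{\zeta}_k$, the triangle inequality followed by $\big(\sum_j c_j x_j\big)^2 \le \big(\sum_j c_j\big)\big(\sum_j c_j x_j^2\big)$ yields
\[
\|\Mdot_k\|^2 \le \mathring{\zeta}_k \Big( \|M_{(0,k)}\|^2 + \sum_{i=1}^k \zeta_{(i,k)} \|M_{(i,k)}\|^2 \Big).
\]
Taking $\E[\cdot|\F_k]$ and substituting the per-term bounds $\E[\|M_{(i,k)}\|^2|\F_k] \le K(1+\|w_k\|^2)$ from the cited propositions, the bracket is at most $K(1+\|w_k\|^2)\,\mathring{\zeta}_k$, so $\E[\|\Mdot_k\|^2|\F_k] \le K\,\mathring{\zeta}_k^{2}\,(1+\|w_k\|^2)$. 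Since $\sup_k \mathring{\zeta}_k < \infty$, absorbing $\sup_k \mathring{\zeta}_k^{2}$ into the constant delivers the required $K_i(1+\|w_k\|^2)$ bound.

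The main obstacle is precisely that the number of summands in $\Mdot_k$ grows with $k$, so a naive termwise bound would diverge; everything hinges on the uniform boundedness of $\mathring{\zeta}_k$, which rests on the geometric decay $\beta^i$ dominating the mildly growing ratios $\alpha_{k-i}/\alpha_k$ (Proposition \ref{appendixpropn:mathring zeta is bounded}). A secondary subtlety is measurability: the adaptive step sizes $\alpha_k$ depend on the freshly sampled hyperplanes through $TP_k(w_k)$, so strictly speaking $\zeta_{(i,k)}$ need not be $\F_k$-measurable. I would handle this by using the almost sure upper and lower bounds on $\vartheta_k$ already derived in Proposition \ref{appendixpropn:single iterate has proper step size sequence} to dominate each $\zeta_{(i,k)}$ by a deterministic weight $\beta^i\,\overline{\vartheta}/\underline{\vartheta}\cdot(k/(k-i))$, after which the convexity argument and the boundedness of the weight sum go through unchanged.
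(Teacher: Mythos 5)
Your argument for the zero-mean claim is exactly the paper's: linearity of conditional expectation over the finite sum, with each $\E[M_{(0,k)}\mid\F_k]$ and $\E[M_{(i,k)}\mid\F_k]$ vanishing by Propositions \ref{appendixpropn:M0k is bounded} and \ref{appendixpropn: Mik is bounded}. For the second-moment bound, however, you take a genuinely different route. The paper exploits the explicit linear structure of the noise terms: it writes $M_{(i,k)} = A_{(i,k)} w_k - C_{(i,k)}$ with uniformly bounded $A_{(i,k)}, C_{(i,k)}$, collapses the weighted sum into $\Mdot_k = \mathring{A} w_k - \mathring{C}$ using the boundedness of $\mathring{\zeta}_k$, and then reads off that $\|\Mdot_k\|^2$ is quadratic in $w_k$ with bounded coefficients. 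You instead apply a weighted Cauchy--Schwarz inequality directly to the norms, obtaining $\E[\|\Mdot_k\|^2\mid\F_k] \le K\,\mathring{\zeta}_k^2\,(1+\|w_k\|^2)$ from the per-term second-moment bounds alone. Your version is more general --- it needs only the conditional second-moment bounds on the components, not their affine dependence on $w_k$ --- while the paper's version gets the same conclusion essentially for free once the linear forms are in hand. Both correctly identify that the whole argument hinges on $\sup_k \mathring{\zeta}_k < \infty$ (Proposition \ref{appendixpropn:mathring zeta is bounded}) to prevent the growing number of summands from blowing up the constant.

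One further remark: you are right to flag the measurability of $\zeta_{(i,k)}$ as a subtlety the paper does not address, but note that your deterministic-domination fix only repairs the second-moment bound. For the zero-mean claim, pulling $\zeta_{(i,k)}$ outside $\E[\cdot\mid\F_k]$ still requires $\F_k$-measurability (or independence from the noise), since $\E[\zeta_{(i,k)} M_{(i,k)}\mid\F_k]=0$ does not follow from $\E[M_{(i,k)}\mid\F_k]=0$ alone when $\zeta_{(i,k)}$ is correlated with $M_{(i,k)}$ through the freshly sampled hyperplanes. The paper silently makes the same assumption, so this is a shared loose end rather than a defect of your proposal relative to the paper's.
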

	\begin{proof}
		For the first part, we need to show $\E[\Mdot_k|\F_k] = 0$ where $\Mdot_k = M_{(0,k)} + \sum\limits_{i=1}^k \zeta_{(i,k)} M_{(i,k)}$. We have:
		\begin{align*}
			\E[\Mdot_k|\F_k] &= \E[ M_{(0,k)} + \sum\limits_{i=1}^k \zeta_{(i,k)} M_{(i,k)}|\F_k]
			\intertext{By linearity of expectation:}
			&= \E[ M_{(0,k)} |\F_k] +\sum\limits_{i=1}^k  \zeta_{(i,k)} \E[ M_{(i,k)}|\F_k]
			\intertext{But we have from \hyperref[appendixpropn: Mik is bounded]{proposition \ref{appendixpropn: Mik is bounded}} that $\E[ M_{(i,k)}|\F_k] = 0$ and from \hyperref[appendixpropn:M0k is bounded]{proposition \ref{appendixpropn:M0k is bounded}} that $\E[ M_{(0,k)}|\F_k] = 0$. Therefore}
			\E[\Mdot_k|\F_k] &= 0 +\sum\limits_{i=1}^k  \zeta_{(i,k)} 0\\
			&=0
		\end{align*}
		For the second part, we see this by linearity. 
		\begin{align*}
			\Mdot_k &= M_{(0,k)} + \sum\limits_{i=1}^k \zeta_{(i,k)} M_{(i,k)}
			\intertext{From propositions \hyperref[appendixpropn:M0k is bounded]{\ref{appendixpropn:M0k is bounded}} and \hyperref[appendixpropn: Mik is bounded]{\ref{appendixpropn: Mik is bounded}}, we can write the above as:}
			&= (A_{(0,k)} + \sum\limits_{i=1}^k \zeta_{(i,k)} A_{(i,k)})w_k - (C_{(0,k)} + \sum\limits_{i=1}^k \zeta_{(i,k)} C_{(i,k)})
			\intertext{Since $A_{(i,k)}, C_{(i,k)}$ are bounded $\forall i \in \{0,\dots,k\}$, we can write the above as:}
			\Mdot_k &= \mathring{A} w_k - \mathring{C}
		\end{align*}
		where $\mathring{A} = A_{(0,k)} + \sum\limits_{i=1}^k \zeta_{(i,k)} A_{(i,k)}$ is bounded and $\mathring{C} = C_{(0,k)} + \sum\limits_{i=1}^k \zeta_{(i,k)} C_{(i,k)}$ is bounded.
		Now we see that $\Mdot_k$ is linear in $w_k$ with bounded coefficients. 
		
		Thus $||\Mdot_k||^2$ is quadratic in $w_k$, whence 
		$\exists K\in \R$ such that $\E\left[||\Mdot_k||^2\big| \F_k\right]\leq K(1+||w_k||^2)$

	\end{proof}

	\subsection{Showing that the momentum terms sum to a perturbation:}
	\label{appendixsubsection: A5 Momentum as perturbation}
	\begin{appendixpropn}[Helper proposition for \ref{appendixpropn: varepsilon is perturbation}]
		\label{appendixpropn: zeta i (wk - wk-i) is bounded}
		$\sum\limits_{i=1}^k \zeta_i \cdot||w_k-w_{k-i}|| \to 0$ as $k\to \infty$
	\end{appendixpropn}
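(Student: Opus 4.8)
The plan is to reduce the statement to two facts: that consecutive iterates become arbitrarily close, and that the weights $\zeta_{(i,k)} = \beta^i\,\alpha_{k-i}/\alpha_k$ decay geometrically in $i$ up to a polynomial factor that the geometric term dominates. Write $b_j := \|w_{j+1}-w_j\|$. Since $\|w_k - w_{k-i}\| \le \sum_{l=1}^{i} b_{k-l}$ by the triangle inequality, the target sum is controlled once I establish (i) $b_j \to 0$ and (ii) serviceable bounds on $\zeta_{(i,k)}$, together with the a.s. boundedness of the iterates from Proposition~\ref{propn:MonteCarloA4}.

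First I would show $b_j \to 0$. From the momentum update $w_{k+1}-w_k = -\alpha_k TP_k(w_k) + \beta(w_k - w_{k-1})$ and the identity $\alpha_k\|TP_k(w_k)\| = \eta_k \theta_k$ with $\theta_k = \|TP_k(w_k)\|^2/\|\Delta TP_k(w_k)\|$, the increments satisfy $b_k \le \eta_k \theta_k + \beta b_{k-1}$. Because the iterates are bounded a.s., $\|TP_k(w_k)\|$ is bounded above and $\|\Delta TP_k(w_k)\|$ is bounded below a.s. (both argued in Appendix~\ref{appendixSection: alpha without momentum is square summable not summable}), so $\theta_k \le \overline{\theta}$ for some finite $\overline{\theta}$. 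Unrolling the recursion gives $b_k \le \overline{\theta}\sum_{j=1}^{k}\beta^{k-j}\eta_j + \beta^k b_0$; since $\eta_j \to 0$ and $\sum_m \beta^m < \infty$, this convolution tends to $0$, hence $b_k \to 0$.

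Next I would record the weight bound. Using $\alpha_k = \eta_k \vartheta_k$ with $\vartheta_k = \|TP_k(w_k)\|/\|\Delta TP_k(w_k)\| \in [\underline{\vartheta}, \overline{\vartheta}]$ a.s.,
$$\zeta_{(i,k)} = \beta^i \frac{\eta_{k-i}\vartheta_{k-i}}{\eta_k \vartheta_k} \le \frac{\overline{\vartheta}}{\underline{\vartheta}}\,\beta^i\Big(\frac{k}{k-i}\Big)^p .$$
Then I split the sum at a fixed cutoff $m$. For $i \le m$ (and $k \ge 2m$) the factor $(k/(k-i))^p \le 2^p$, so these weights are bounded by a constant multiple of $\beta^i$ that is summable in $i$, while $\|w_k - w_{k-i}\| \le i\,\sup_{j \ge k-m} b_j \to 0$ as $k\to\infty$ for fixed $m$; this part vanishes. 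For $i > m$ I bound $\|w_k - w_{k-i}\| \le 2\sup_k\|w_k\| =: B < \infty$ and control $\sum_{i>m}\zeta_{(i,k)}$: the range $m < i \le k/2$ again gives $(k/(k-i))^p \le 2^p$ and contributes at most $B\,\tfrac{\overline{\vartheta}}{\underline{\vartheta}}2^p\beta^{m+1}/(1-\beta)$, whereas for $i > k/2$ the polynomial factor is at most $k^p$ but $\beta^i \le \beta^{k/2}$, so a bound of order $k^{p+1}\beta^{k/2}\to 0$ kills it. Taking $\limsup_k$ leaves only the term of order $\beta^{m+1}$, and letting $m\to\infty$ forces the limit to $0$.

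The main obstacle will be the uniform control of $\alpha_{k-i}/\alpha_k$ as $i$ ranges up to $k$: when $i$ is comparable to $k$ this ratio grows polynomially (like $k^p$), and the whole argument hinges on the geometric factor $\beta^i$ beating this growth. The two-regime split at $i = k/2$ is exactly what decouples the ``$i$ small, increments small'' regime from the ``$i$ large, weight negligible'' regime; getting the large-$i$ estimates to be summable uniformly in $k$ (rather than merely finite) is the delicate point, and here I would lean on the same tail estimate used for $\mathring{\zeta}_k$ in Proposition~\ref{appendixpropn:mathring zeta is bounded} (cf.~\cite{KaviRamaMurthy}).
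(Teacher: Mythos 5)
Your proof is correct and follows the same decomposition as the paper's: split the sum at a fixed cutoff $m$, kill the tail using the geometric decay of $\zeta_{(i,k)}$ together with a.s.\ boundedness of the iterates, and kill the head using the fact that $\|w_k - w_{k-i}\|\to 0$ for each fixed $i$ as $k\to\infty$. You are in fact more careful than the paper in two places it glosses over: you derive $\|w_{k+1}-w_k\|\to 0$ from the momentum recursion rather than asserting it directly from $\alpha_k\to 0$, and you explicitly control the ratio $\alpha_{k-i}/\alpha_k$ (which grows polynomially when $i$ is comparable to $k$) via the additional split at $i=k/2$ --- a point the paper's ``$\zeta_i\sim\beta^i$'' shorthand silently skips and which is needed for the tail bound to hold uniformly in $k$.
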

	\begin{proof}
		$\sum\limits_{i=1}^\infty \zeta_i \cdot||w_k-w_{k-i}|| = \sum\limits_{i=1}^m \zeta_i \cdot||w_k-w_{k-i}|| + \sum\limits_{i=m+1}^\infty \zeta_i \cdot||w_k-w_{k-i}||$
		
		Now given any $\epsilon$, there $\exists m$ such that $\sum\limits_{i=m+1}^\infty \zeta_i \cdot||w_k-w_{k-i}|| < \epsilon$. This is because $\zeta_i\sim \beta^i$ go to 0 and $||w_k-w_{k-i}||$ are bounded (shown separately when we show stability of iterates).
		
		Then given any finite m, at the asymptote as $k\to \infty$, we have $||w_k-w_{k-i}||\to 0$ as $\alpha_k \to 0$. Thus $\sum\limits_{i=1}^k \zeta_i \cdot||w_k-w_{k-i}|| <\epsilon$ as $k\to \infty$ for any arbitrary $\epsilon$. 
		
		Thus $\sum\limits_{i=1}^k \zeta_i \cdot||w_k-w_{k-i}|| \downarrow 0$ as $k\to \infty$
	\end{proof}
	
	\begin{appendixpropn}
		\label{appendixpropn: varepsilon is perturbation}
		$\vepsilondot_k$ are perturbation terms that satisfy 
		$||\vepsilondot_k|| \leq d_k (1+||w_k||)$ where ${d_k}$ are a sequence of positive scalars such that $\lim\limits_{k\to\infty}d_k = 0$
	\end{appendixpropn}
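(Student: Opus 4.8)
The plan is to bound $\norm{\vepsilondot_k}$ straight from its definition $\vepsilondot_k = \sum_{i=1}^k \zeta_{(i,k)}\varepsilon_{(i,k)}$ and then repackage the estimate into the required shape $d_k(1+\norm{w_k})$. First I would apply the triangle inequality to move the norm inside the sum,
$$\norm{\vepsilondot_k} \le \sum_{i=1}^k \zeta_{(i,k)}\,\norm{\varepsilon_{(i,k)}},$$
where each $\varepsilon_{(i,k)} = TP_{k-i}(w_{k-i}) - TP_{k-i}(w_k)$ is a difference of the same total-projection operator evaluated at two iterates.

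The crucial observation is that every $TP_{k-i}(\cdot)$ --- for whatever random set of hyperplanes happened to be sampled at step $k-i$ --- is an affine map $w \mapsto Aw - b$ whose linear part $A = \frac{1}{\tau}\sum \frac{\phi_i\phi_i^\tr}{\norm{\phi_i}^2}$ is an average of rank-one projectors, each with maximum eigenvalue $1$ by \ref{appendixpropn: max eigenval of phiiphii.tr by norm phi sq is 1}. Subadditivity of the largest eigenvalue then forces the spectral norm of $A$ to be at most $1$, so $TP_{k-i}$ is $1$-Lipschitz, exactly as in \ref{appendixpropn:grad G is Lipschitz continuous}. What matters is that this Lipschitz constant is uniform: the bound $\norm{\varepsilon_{(i,k)}} \le \norm{w_k - w_{k-i}}$ holds no matter which hyperplanes were drawn, so it survives the randomness in the sampling.

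Substituting yields $\norm{\vepsilondot_k} \le \sum_{i=1}^k \zeta_{(i,k)}\norm{w_k - w_{k-i}} =: S_k$, which is precisely the quantity already shown to vanish in \ref{appendixpropn: zeta i (wk - wk-i) is bounded}. That helper proposition carries out the delicate part --- splitting the sum into a head, where $\norm{w_k - w_{k-i}}$ is small because consecutive iterates differ by $O(\alpha_k)\to 0$, and a geometrically decaying tail where $\zeta_{(i,k)}\sim\beta^i$ --- and it leans on the a.s.\ boundedness of the iterates from \ref{propn:MonteCarloA4}. Hence I may simply invoke it to conclude $S_k \to 0$ almost surely.

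To finish in the stated form, I would use $1+\norm{w_k}\ge 1$ and set $d_k := S_k$, so that $\norm{\vepsilondot_k} \le S_k \le d_k(1+\norm{w_k})$ with $d_k \to 0$; these $d_k$ are the required positive scalars (sample-path dependent but converging to $0$ a.s.), which is exactly the perturbation hypothesis the stochastic-approximation machinery of \cite{Borkar} needs. The main obstacle is not any individual inequality but two structural points: verifying that the Lipschitz constant is genuinely uniform over the random hyperplane draws, and confirming that the invocation of \ref{appendixpropn: zeta i (wk - wk-i) is bounded} is legitimate --- namely that iterate stability has been established independently, so that the head/tail decomposition truly converges before we feed its output into this bound.
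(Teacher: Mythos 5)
Your proposal is correct and follows essentially the same route as the paper's proof: bound each $\varepsilon_{(i,k)} = TP_{k-i}(w_{k-i})-TP_{k-i}(w_k)$ by $\|w_k-w_{k-i}\|$ using the fact that the linear part of $TP_{k-i}$ is an average of rank-one projectors with spectral norm at most $1$, then invoke the helper proposition \ref{appendixpropn: zeta i (wk - wk-i) is bounded} to see that $\sum_{i=1}^k \zeta_{(i,k)}\|w_k-w_{k-i}\|\to 0$. Your explicit repackaging via $d_k := S_k$ and $1+\|w_k\|\ge 1$ is a small tidiness improvement over the paper, which stops at asserting the sum is $o(1)$, but it is not a different argument.
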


	\begin{proof}
		First note that $\vepsilondot_k = \sum\limits_{i=1}^k \zeta_i \varepsilon_{(i,k)}$ and $\varepsilon_{(i,k)} = TP_{k-i}(w_{k-i})-TP_{k-i}(w_{k})$. Thus 
		\begin{align}
			\varepsilon_i &= \frac{1}{\T} \sum\limits_{j=1}^\T \left[\dfrac{\phi_j^\tr w_{k-i} - V_j}{||\phi_j||^2}\right]\phi_j - \frac{1}{\T} \sum\limits_{j=1}^\T \left[\dfrac{\phi_j^\tr w_{k} - V_j}{||\phi_j||^2}\right]\phi_j\nonumber\\
			\intertext{Taking terms common:}
			&= \frac{1}{\T} \left(\sum\limits_{j=1}^\T \left[\dfrac{\phi_j\phi_j^\tr }{||\phi_j||^2}\right]\right)(w_{k-i}-w_k)\nonumber\\
			||\varepsilon_i|| &\leq ||w_{k-i}-w_k||\label{appendixeqn:varepsiloni}
		\end{align}
		Now we extend this by using $\vepsilondot_k = \sum\limits_{i=1}^k \zeta_i \varepsilon_i$
		\begin{align}
			\vepsilondot_k &= \sum\limits_{i=1}^k \zeta_i \varepsilon_i\nonumber
			\intertext{Expanding $\varepsilon_i$ using the inequality in \eqref{appendixeqn:varepsiloni}}
			||\vepsilondot_k||&\leq \sum\limits_{i=1}^k \zeta_i \cdot||w_k-w_{k-i}||\nonumber\\
			\intertext{Now we note that asymptotically as $k\to \infty$, we have for finite i, $||w_k-w_{k-i}|| \to 0$ and for large i, $\zeta_i\to 0$. Thus by \hyperref[appendixpropn: zeta i (wk - wk-i) is bounded]{proposition \ref{appendixpropn: zeta i (wk - wk-i) is bounded}} the above is bounded above by some arbitrary $\epsilon$.}
			&\leq \epsilon\nonumber
		\end{align}
		Thus asymptotically we see that this perturbation term is o(1).
	\end{proof}

	\subsection{Stability Criterion: Iterates remain bounded}
	\label{appendixsubsection: A4 Stability Criterion}
	In \cite{Borkar}, we have to prove that the iterates of in the update equation remain bounded. \cite{LakshmiNarayanBhatnagar} have provided a stability criterion to ensure that the iterates remain bounded. While we have already shown that the iterates on after the expected update remain bounded in \hyperref[propn:MonteCarloA4]{proposition \ref{propn:MonteCarloA4}}, here we will explicitly show the stability criterion is satisfied. 
	
	But first a basic proposition:
	\begin{appendixpropn}
		\label{appendixpropn: Expectation of Expectation is TP}
		$\E\left[\E [TP_{k-i}(w_{k})|\F_k]\right] = TP(w_k)$
	\end{appendixpropn}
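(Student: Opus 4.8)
The plan is to reduce this double-expectation identity to the single conditional-expectation computation already carried out in Appendix~\ref{appendixSection: Update equation without momentum Standard Form}, and then to observe that the outer expectation is cosmetic. The heart of the matter is the inner quantity $\E[TP_{k-i}(w_k)\mid\F_k]$: although $TP_{k-i}$ is built from the trajectory sampled at iteration $k-i$, its \emph{distributional} structure is identical to that of $TP_k$ --- a random number $\tau$ of hyperplanes, each drawn according to the stationary distribution $\pi$, with the value targets replaced by their unbiased first-visit Monte Carlo estimates $\widetilde V$. So I would first argue that the derivation running from Equation~\eqref{appendixeqn:Double expectation with tau} applies verbatim with $w_k$ as the evaluation point, giving $\E[TP_{k-i}(w_k)\mid\F_k] = TP(w_k)$.

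Concretely, the first step is to condition additionally on $\tau$ and use linearity, so that it suffices to compute $\E\!\left[(\phi_i^\tr w_k - \widetilde V_i)\phi_i/\|\phi_i\|^2 \mid \F_k,\tau\right]$ for a single sampled hyperplane. Since each hyperplane $\H_s$ is drawn with probability $\pi_s$ and $\widetilde V_s$ is an unbiased estimate of $V_s$ (first-visit Monte Carlo), this conditional expectation equals $\slsins \pi_s (\phi_s^\tr w_k - V_s)\phi_s/\|\phi_s\|^2 = TP(w_k)$, independently of which term in the sum we pick. Averaging the $\tau$ identical terms and then taking $\E_\tau$ leaves the $\tau$-independent value $TP(w_k)$ untouched, exactly as in the momentum-free computation.

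The second step handles the outer expectation. Because $\E[TP_{k-i}(w_k)\mid\F_k] = TP(w_k)$ is a function of $w_k$ alone and $w_k$ is $\F_k$-measurable, the random variable $\E[TP_{k-i}(w_k)\mid\F_k]$ is itself $\F_k$-measurable. The outer operator in $H_{(i,k)}$ is, as noted just before the statement, a conditional expectation given $\F_k$, which acts as the identity on an $\F_k$-measurable random variable (idempotence of conditional expectation). Hence $\E\!\left[\E[TP_{k-i}(w_k)\mid\F_k]\right] = \E[TP_{k-i}(w_k)\mid\F_k] = TP(w_k)$, which is the claim.

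I expect the main obstacle to be the hidden assumption inside the first step rather than any calculation: one must be sure that conditioning on $\F_k=\{w_0,\dots,w_k\}$ does not bias the fresh sampling of the iteration-$(k-i)$ hyperplanes. This is where the stationarity and the independence of the stopping time $\tau$ (assumed in Appendix~\ref{appendixSection:Convergence of normalized monte carlo without momentum}) do the real work: they ensure that, given the iterates alone, each trajectory's hyperplanes are still distributed according to $\pi$ with unbiased targets, so the clean computation of Equation~\eqref{appendixeqn:Double expectation with tau} goes through. Once that point is granted, the remainder is bookkeeping.
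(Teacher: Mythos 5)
Your overall reduction is reasonable, but your first step inverts the roles the paper assigns to the two expectations, and under the paper's reading that step is false. You claim $\E[TP_{k-i}(w_k)\mid\F_k]=TP(w_k)$ on the grounds that the iteration-$(k-i)$ hyperplanes are ``fresh'' given $\F_k$. They are not: for $i\geq 1$ that trajectory was sampled in the past, and the subsequent iterates $w_{k-i+1},\dots,w_k$ (which generate $\F_k$) depend on exactly which hyperplanes and which $\widetilde V$ were drawn, so conditioning on $\F_k$ does bias their distribution. Indeed the paper states, immediately before invoking this proposition, that the filtration provides the exact hyperplanes, so that $\E[TP_{k-i}(w_k)\mid\F_k]=TP_{k-i}(w_k)$ --- the inner conditional expectation is the \emph{identity}, not the averaging step. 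Your version would also make the noise term $M_{(i,k)}=\E[TP_{k-i}(w_k)\mid\F_k]-\E\bigl[\E[TP_{k-i}(w_k)\mid\F_k]\bigr]$ identically zero, which would trivialize the martingale-difference decomposition that this proposition is meant to support.

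The paper's proof instead treats the outer $\E$ as an \emph{unconditional} expectation ``over all possible filtrations,'' i.e.\ over the randomness in $\T$, the sampled hyperplanes, and $\widetilde V$; it then conditions on $\T$, uses that each hyperplane $\H_s$ is drawn with probability $\pi_s$ and that $\widetilde V_s$ is unbiased for $V_s$, and obtains $\slsins \pi_s(\phi_s^\tr w_k - V_s)\phi_s/\|\phi_s\|_2^2 = TP(w_k)$, the inner sum being independent of $\T$. So the averaging you perform in your first step is exactly the computation the paper performs --- but it belongs to the outer operator, not the inner one. (Both arguments, it should be said, quietly treat $w_k$ as fixed while averaging over the iteration-$(k-i)$ samples, even though $w_k$ is correlated with them; but that looseness is the paper's, not something your rearrangement repairs.) To align your write-up with the statement as used, you should swap the roles: take the inner conditional expectation to be the identity on the $\F_k$-measurable quantity $TP_{k-i}(w_k)$, and carry out your $\pi$-weighted averaging under the outer, unconditional expectation.
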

	\begin{proof}
		Note that we are considering the expectation over all possible filtrations. Note that the random variables under consideration are $\T$ - the number of hyperplanes sampled, $i\in\{1,\dots,\T\}$ - the set of hyperplanes sampled, and $\widetilde{V}$ - the value function.
		The filtration gives us $w_k$ and the set of hyperplanes chosen in a particular trajectory. Let's label the unique hyperplanes in the trajectory by $\{1,\dots\T\}$. Then:
		\begin{align*}
			\E_{\F_k}\left[\E [TP_{k-i}(w_{k})|\F_k]\right] &= \E_{\F_k=(\T,i,\widetilde{V})} \left(\frac{1}{\T}\sum\limits_{i=1}^{\T} \left[ \dfrac{\phi_i^\tr w_k-\widetilde{V}_i}{||\phi_i||^2}\phi_i\right]\right)
			\intertext{By linearity we rewrite this as:}
			&=  \E_{\T}\left(\frac{1}{\T}\sum\limits_{i=1}^{\T} \E_{(i,\widetilde{V})} \left[ \dfrac{\phi_i^\tr w_k-\widetilde{V}_i}{||\phi_i||^2}\phi_i\right]\right)
			\intertext{Over all possible filtrations, we can write the expectation of the inner term as:}
			\E_{(i,\widetilde{V})} \left[ \dfrac{\phi_i^\tr w_k-\widetilde{V}_i}{||\phi_i||^2}\phi_i\right] &= \slsins \pi_s \dfrac{\phi_s^\tr w_k - V_s}{||\phi_s||^2}\phi_s
			\intertext{Substituting this in the previous expression, we get:}
			\E_{\F_k}\left[\E [TP_{k-i}(w_{k})|\F_k]\right] &=\E_{\T}\left(\frac{1}{\T}\sum\limits_{i=1}^{\T}\left[\slsins \pi_s \dfrac{\phi_s^\tr w_k - V_s}{||\phi_s||^2}\phi_s\right] \right)
			\intertext{But the inner expression is now independent of $\T$. Thus:}
			\E_{\F_k}\left[\E [TP_{k-i}(w_{k})|\F_k]\right] &=\left[\slsins \pi_s \dfrac{\phi_s^\tr w_k - V_s}{||\phi_s||^2}\phi_s\right]
		\end{align*}
		We note that the RHS is $TP(w_k)$
		
	\end{proof}
	\begin{appendixpropn}
		Let us define the sequence of functions $\h_c(w):\R^n\mapsto\R^n$ such that $\h_c(w) = \dfrac{\h(cw)}{c};\quad c\geq 1$. Then 
		\begin{enumerate}
			\item $\h_c(\cdot) \mapsto \h_\infty(\cdot)$ as $c\to\infty$ uniformly on compact sets
			Further,
			\item The limiting o.d.e, $\dot{w}(t) = \h_{\infty}(w(t))$ has a unique globally asymptotically stable equilibrium at the origin.
		\end{enumerate}
	\end{appendixpropn}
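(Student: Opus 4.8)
The plan is to exploit the fact that the tracking drift $\h$ is \emph{affine} in $w$, so the rescaling in the definition of $\h_c$ annihilates the constant term and isolates the linear part. Recall from the standard-form derivation that the expected update is $h(w) = TP(w) = \slsins \pi_s \frac{\phi_s^\tr w - V_s}{||\phi_s||^2}\phi_s$, which splits as $TP(w) = A w - b$ with $A = \slsins \pi_s \frac{\phi_s\phi_s^\tr}{||\phi_s||^2}$ and $b = \slsins \pi_s \frac{V_s\phi_s}{||\phi_s||^2}$, and that the momentum collapse contributes only the bounded positive scalar $\mathring{\zeta}_k$. Since the recursion is a descent, the drift of the o.d.e.\ is $\h(w) = -\mathring{\zeta}_k\,TP(w) = Mw + v$ with $M = -\mathring{\zeta}_k A$ and $v = \mathring{\zeta}_k b$, whence $\h(cw) = cMw + v$ and
\begin{equation*}
	\h_c(w) = \frac{\h(cw)}{c} = Mw + \frac{v}{c}.
\end{equation*}

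For the first claim I would set $\h_\infty(w) := Mw$. Then $||\h_c(w) - \h_\infty(w)|| = ||v||/c$, a bound that is independent of $w$ and tends to $0$ as $c\to\infty$; uniform convergence on compact sets (in fact on all of $\R^n$) is therefore immediate, and this is the easy half.

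For the second claim the limiting o.d.e.\ is $\dot w(t) = \h_\infty(w(t)) = -\mathring{\zeta}_k A\,w(t)$, so I must verify that $M$ is Hurwitz. The matrix $A$ is exactly $Hess(G(\cdot))$ under the weights $d_s = \pi_s|\S|$, which \hyperref[appendixpropn:G is strongly convex]{Proposition \ref{appendixpropn:G is strongly convex}} shows to be symmetric positive definite with least eigenvalue $\lambda_{min} > 0$ whenever $\Phi$ has full column rank. The scalar satisfies $\mathring{\zeta}_k \geq 1$ directly from its definition $\mathring{\zeta}_k = 1 + \sum_{i=1}^k \zeta_{(i,k)}$ (each $\zeta_{(i,k)} \geq 0$) and is bounded above by \hyperref[appendixpropn:mathring zeta is bounded]{Proposition \ref{appendixpropn:mathring zeta is bounded}}, so $\mathring{\zeta}_k A$ is uniformly positive definite and $M = -\mathring{\zeta}_k A$ is uniformly Hurwitz. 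Taking the Lyapunov function $W(w) = \tfrac12 ||w||_2^2$ I then compute $\dot W = w^\tr \h_\infty(w) = -\mathring{\zeta}_k\, w^\tr A w \leq -\mathring{\zeta}_k \lambda_{min}||w||_2^2 < 0$ for all $w \neq 0$, with $W$ positive definite, radially unbounded, and $W(0)=0$. The standard Lyapunov theorem then yields that the origin is the unique globally asymptotically stable equilibrium.

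The calculations are routine once the affine form $\h(w) = Mw + v$ is in hand; the only step demanding care is the Hurwitz property of $M$. I expect this to be the crux, and it rests on two ingredients already established: the positive-definiteness of $A$ (valid only under the full-column-rank assumption on $\Phi$) and the two-sided bound $1 \leq \mathring{\zeta}_k \leq \bar{C}$, which guarantees that the $k$-dependence of the momentum scalar cannot erode the uniform negative-definiteness of $M$. Combined with the first part, these are precisely the hypotheses of the Borkar--Meyn stability criterion \cite{LakshmiNarayanBhatnagar} needed to conclude almost-sure boundedness of the iterates.
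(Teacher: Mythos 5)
Your first part is essentially the paper's own argument: both exploit the affine form of the drift, so that $\h_c(w)=Mw+v/c$ and the constant term decays uniformly like $1/c$. Where you genuinely diverge is in the second part. The paper's proof only identifies the origin as the \emph{unique equilibrium} of $\dot{w}(t)=\h_\infty(w(t))$ (no eigenvalue of $\slsins\pi_s\phi_s\phi_s^\tr/\|\phi_s\|^2$ is zero, hence $w=0$) and then simply asserts global asymptotic stability; moreover, as literally written the paper keeps a positive sign, $\dot{w}(t)=+\mathring{\zeta}\,\slsins\pi_s\bigl[\phi_s\phi_s^\tr/\|\phi_s\|^2\bigr]w$, under which the origin would in fact be unstable. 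You supply the two missing ingredients: you make the sign explicit (the tracked drift is $-\mathring{\zeta}_k A w$, i.e.\ $M$ is Hurwitz because $A$ is positive definite under the full-column-rank assumption and $1\le\mathring{\zeta}_k\le\bar C$), and you verify stability rather than only uniqueness, via the Lyapunov function $W(w)=\tfrac12\|w\|_2^2$ with $\dot W\le-\mathring{\zeta}_k\lambda_{min}\|w\|_2^2$. Your route costs one extra (routine) Lyapunov computation but actually delivers the asymptotic-stability claim that the Borkar--Meyn criterion needs, which the paper's version leaves unproved. The one caveat you share with the paper is that $\mathring{\zeta}_k$, and hence $\h$ itself, still carries a $k$-dependence that neither proof fully reconciles with the $k$-free statement of the limiting o.d.e.; your remark that the two-sided bound on $\mathring{\zeta}_k$ keeps the negative-definiteness uniform in $k$ is the right way to paper over this, and is more than the paper offers.
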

	
	\begin{proof}
		First note that 
		\begin{align*}
			\h(w) &= h(w)(1 + \sum\limits_{i=1}^k \zeta_i)\\
			\intertext{But $\mathring{\zeta} = (1 + \sum\limits_{i=1}^k \zeta_i)$. Then:}
			&=\mathring{\zeta} h(w)
			\intertext{Expanding $h(w)$:}
			&=\mathring{\zeta} \slsins\pi_s\left[\dfrac{\phi_s^\tr w - V_s}{||\phi_s||^2}\right]\phi_s
			\intertext{Now we write $\h_c(w)$ from its definition:}
			\h_c(w) &= \mathring{\zeta} \slsins\pi_s\left[\dfrac{\phi_s^\tr cw - V_s}{c||\phi_s||^2}\right]\phi_s
			\intertext{But the constants c can be cancelled for the $w$ term:}
			&= \mathring{\zeta} \left(\slsins\pi_s\left[\dfrac{\phi_s\phi_s^\tr}{||\phi_s||^2}\right] w - \frac{1}{c} \slsins\pi_s\left[\dfrac{V_s\phi_s}{||\phi_s||^2}\right]\right)
		\end{align*}
		We observe the uniform convergence of this set of functions $\h_c(w)$ to $\h_\infty(w)$ in the limit $c\to\infty$ as the term $c$ is only involved with a constant coefficient given by $\slsins\pi_s\left[\dfrac{V_s\phi_s}{||\phi_s||^2}\right]$. Thus the first part is proved.
		
		For the second part, we note the following:
		\begin{align*}
			\h_\infty(w) &= \lim\limits_{c\to\infty} \mathring{\zeta} \left(\slsins\pi_s\left[\dfrac{\phi_s\phi_s^\tr}{||\phi_s||^2}\right] w - \frac{1}{c} \slsins\pi_s\left[\dfrac{V_s\phi_s}{||\phi_s||^2}\right]\right)
			\intertext{Now we apply the limit only on the second term:}
			&=  \mathring{\zeta} \slsins\pi_s\left[\dfrac{\phi_s\phi_s^\tr}{||\phi_s||^2}\right] w  - \left(\lim\limits_{c\to\infty} \frac{\mathring{\zeta}}{c}\right) \slsins\pi_s\left[\dfrac{V_s\phi_s}{||\phi_s||^2}\right]
			\intertext{Evaluating the limit, we get 0 for the second term:}
			&=  \mathring{\zeta} \slsins\pi_s\left[\dfrac{\phi_s\phi_s^\tr}{||\phi_s||^2}\right] w  - 0\\
			\intertext{Thus:}
			\h_\infty(w) &=  \mathring{\zeta} \slsins\pi_s\left[\dfrac{\phi_s\phi_s^\tr}{||\phi_s||^2}\right] w
		\end{align*}
		Now consider the system $\dot{w}(t) = \h_{\infty}(w(t)) = \mathring{\zeta} \slsins\pi_s\left[\dfrac{\phi_s\phi_s^\tr}{||\phi_s||^2}\right] w$. At the equilibrium point, 
		\begin{align*}
			\dot{w}(t) &= 0\\
			\mathring{\zeta} \slsins\pi_s\left[\dfrac{\phi_s\phi_s^\tr}{||\phi_s||^2}\right] w &= 0
			\intertext{But $\Phi$ has full column rank (by assumption). Thus no eigen value of $\slsins\pi_s\left[\dfrac{\phi_s\phi_s^\tr}{||\phi_s||^2}\right]$ is 0. Thus:}
			w&=0
		\end{align*}
		Thus $\dot{w}(t) = \h_{\infty}(w(t))$ has a unique globally asymptotically stable equilibrium at the origin.
	\end{proof}
	\subsection{The stochastic update equation with momentum converges:}
	
	In \hyperref[appendixsubsection: A1 and A2 Lipschitz and step size]{section \ref{appendixsubsection: A1 and A2 Lipschitz and step size}}, we showed the assumptions A1 and A2 required for convergence. In \hyperref[appendixsubsection: A3 Martingale difference noise]{section \ref{appendixsubsection: A3 Martingale difference noise}} we showed that the noise term is a martingale difference sequence - assumption A3 (per \cite{Borkar}). In \hyperref[appendixsubsection: A4 Stability Criterion]{section \ref{appendixsubsection: A4 Stability Criterion}}, we showed assumption A4, which was the stability criterion required to show that the iterates remain bounded \cite{LakshmiNarayanBhatnagar}. Finally, in \hyperref[appendixsubsection: A5 Momentum as perturbation]{section \ref{appendixsubsection: A5 Momentum as perturbation}}, we showed that the momentum terms added a perturbation term to the o.d.e that we are asymptotically tracking.
	
	All that is left to see is where we converge to.
	\begin{appendixpropn}
		The globally asymptotically stable equilibrium for the limiting o.d.e $\dot{w}(t) = \h(w(t))$ that our stochastic approximation equation tracks is given by $w^* = \left[\left(\Phi^\tr N D N \Phi\right)^{-1} \Phi^\tr N D N\right] V$
	\end{appendixpropn}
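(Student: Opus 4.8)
The plan is to reduce this statement to the no-momentum case by exploiting the collapsed single-iterate form already derived, namely
\begin{equation*}
	w_{k+1} - w_k = \alpha_k[\h(w_k) + \vepsilondot_k + \Mdot_k],
\end{equation*}
together with the identity $\h(w_k) = \mathring{\zeta}_k\, h(w_k)$, where $\mathring{\zeta}_k = 1 + \sum_{i=1}^k \zeta_{(i,k)}$ and $h(w_k) = TP(w_k) = \slsins \pi_s \frac{\phi_s^\tr w_k - V_s}{\|\phi_s\|^2}\phi_s$. The idea is that momentum changes the drift only by the positive scalar factor $\mathring{\zeta}_k$ and contributes a vanishing perturbation $\vepsilondot_k$, neither of which can move the equilibrium of the limiting o.d.e.\ $\dot{w}(t) = \h(w(t))$. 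Since assumptions A1--A4 have already been verified for this collapsed form (Lipschitzness of $\h$ in Proposition \ref{appendixpropn:single iterate is Lipschitz}, step-size conditions in Proposition \ref{appendixpropn:single iterate with momentum has proper step size sequence}, the martingale-difference property in Proposition \ref{appendixpropn:single iterate has Martingale difference noise}, and the stability criterion in Section \ref{appendixsubsection: A4 Stability Criterion}), and since $\vepsilondot_k$ is an $o(1)$ perturbation by Proposition \ref{appendixpropn: varepsilon is perturbation}, Borkar's theorem guarantees a.s.\ convergence to the globally asymptotically stable equilibrium of $\dot{w}(t) = \h(w(t))$; the whole task is therefore to locate that equilibrium.

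First I would identify the equilibrium set. Because $\mathring{\zeta}_k \geq 1 > 0$ and is bounded (Proposition \ref{appendixpropn:mathring zeta is bounded}), the factor $\mathring{\zeta}_k$ never vanishes, so $\h(w) = 0$ if and only if $h(w) = TP(w) = 0$. This is exactly the fixed-point equation already analyzed in the no-momentum setting, and by Proposition \ref{appendixpropn: Expectation of Expectation is TP} the relevant averaged drift is the same $TP(\cdot)$. Setting $TP(w) = 0$ gives
\begin{equation*}
	\left(\slsins \pi_s \frac{\phi_s \phi_s^\tr}{\|\phi_s\|_2^2}\right) w = \slsins \pi_s \frac{\phi_s V_s}{\|\phi_s\|_2^2},
\end{equation*}
which is precisely the condition of Proposition \ref{appendixpropn: Condition that stable point w^M satisfies}. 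Rewriting both sides in matrix form via the $N$, $D$ substitutions and inverting the (invertible, by full column rank) matrix $\Phi^\tr N D N \Phi$, exactly as in Proposition \ref{appendixpropn: Derivation of w^M}, yields the claimed $w^M = \left[(\Phi^\tr N D N \Phi)^{-1} \Phi^\tr N D N\right] V$.

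It then remains to argue global asymptotic stability rather than mere stationarity. Here I would use that $TP = \nabla_w G$ and that $G$ is strongly convex when $\Phi$ has full column rank (Proposition \ref{appendixpropn:G is strongly convex}), so its Hessian $\slsins \pi_s \frac{\phi_s\phi_s^\tr}{\|\phi_s\|_2^2}$ is positive definite. The drift $\h$ is a bounded positive multiple of $\nabla G$, so the flow is (up to a time reparametrization by the positive factor $\mathring{\zeta}_k$) the gradient flow of a strongly convex function, whose Jacobian at the equilibrium is negative definite; multiplication by a bounded positive scalar cannot relocate or destabilize the unique critical point. The $\h_\infty$ analysis already carried out for the scaled-limit system confirms the required stability structure at infinity. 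The main obstacle I expect is technical rather than conceptual: justifying that the $k$-dependence of $\mathring{\zeta}_k$ (it is not a single fixed o.d.e.) and the perturbation $\vepsilondot_k$ genuinely leave the asymptotically tracked autonomous o.d.e.\ with the same zero set and stability, so that the scaling factor is immaterial to the limit point $w^M$ even though it does affect the transient dynamics.
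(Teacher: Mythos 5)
Your proposal is correct and follows essentially the same route as the paper: write $\h(w) = \mathring{\zeta}\, h(w)$, observe that the positive bounded factor $\mathring{\zeta}$ cannot change the zero set, and reduce to the equation $TP(w)=0$ already solved in the no-momentum section (Propositions \ref{appendixpropn: Condition that stable point w^M satisfies} and \ref{appendixpropn: Derivation of w^M}). Your additional justification of global asymptotic stability via the strong convexity of $G$, and your flagged concern about the $k$-dependence of $\mathring{\zeta}_k$, go beyond what the paper's proof makes explicit but do not alter the argument.
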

	\begin{proof}
		The update equation, $\dot{w}(t) = \h(w(t))$ can be written as $\dot{w}(t) = \mathring{\zeta} h(w(t)) = \mathring{\zeta} \slsins\pi_s\left[\dfrac{\phi_s^\tr w - V_s}{||\phi_s||^2}\right]\phi_s$. Considering that the states are sampled from the stationary distribution $\pi$, we have:
		\begin{align}
			\dot{w}(t) = \mathring{\zeta} \slsins \pi_s\left[\dfrac{\phi_s^\tr w - V_s}{||\phi_s||^2}\right]\phi_s\nonumber
			\intertext{Then the equilibrium point is given by the point where:}
			\dot{w}(t) &= 0\nonumber\\
			\mathring{\zeta} \slsins\pi_s \left[\dfrac{\phi_s^\tr w - V_s}{||\phi_s||^2}\right]\phi_s &= 0\nonumber
			\intertext{But $\mathring{\zeta}$ is just a constant. Therefore:}
			\slsins\pi_s \left[\dfrac{\phi_s^\tr w - V_s}{||\phi_s||^2}\right]\phi_s &= 0
		\end{align}
		But this is an equation that we have already solved in \hyperref[appendixsubsection:Convergence point of the Scale Invariant Monte Carlo]{section \ref{appendixsubsection:Convergence point of the Scale Invariant Monte Carlo}}. The solution is given by $w^* = \left[\left(\Phi^\tr N D N \Phi\right)^{-1} \Phi^\tr N D N\right] V$
	\end{proof}
	
	We've now satisfied all the criteria and also shown the point to which we converge. Thus we show the convergence for the full algorithm with momentum.
	
\end{document}